\def\forarxiv #1{\footnote{#1}}
\newcommand{\blue}[1]{{\color[rgb]{.3,.5,1}#1}}
\definecolor{kjgray}{rgb}{.7,.7,.7}
\newtheoremstyle{kjstyle}
{1ex} 
{\topsep} 
{\itshape} 
{} 
{\bfseries} 
{.} 
{.5em} 
{} 
\definecolor{kjgray}{rgb}{.7,.7,.7}
\newcommand{\kjunbox}[1]{\vspace{.4ex}
  \begin{mdframed}[linecolor=kjgray,innertopmargin=1.3ex,innerleftmargin=.4em,innerrightmargin=.4em,linewidth=1.3pt]
    #1
  \end{mdframed}%
}
\renewcommand{\paragraph}{%
  \@startsection{paragraph}{4}%
  {\z@}{0.50ex \@plus 1ex \@minus .2ex}{-1em}%
  {\normalfont\normalsize\bfseries}%
}
\def\ddefloop#1{\ifx\ddefloop#1\else\ddef{#1}\expandafter\ddefloop\fi}
\def\ddef#1{\expandafter\def\csname c#1\endcsname{\ensuremath{\mathcal{#1}}}}
\def\ddef#1{\expandafter\def\csname b#1\endcsname{\ensuremath{{\mathbf{#1}}}}}
\def\ddef#1{\expandafter\def\csname h#1\endcsname{\ensuremath{\widehat{#1}}}}
\def\ddef#1{\expandafter\def\csname hc#1\endcsname{\ensuremath{\widehat{\mathcal{#1}}}}}
\def\ddef#1{\expandafter\def\csname t#1\endcsname{\ensuremath{\widetilde{#1}}}}
\def\ddef#1{\expandafter\def\csname r#1\endcsname{\ensuremath{\mathring{#1}}}}
\def\ddef#1{\expandafter\def\csname tc#1\endcsname{\ensuremath{\widetilde{\mathcal{#1}}}}}
\def\SS{{\mathbb{S}}}
\DeclareMathOperator*{\argmax}{arg~max}
\DeclareMathOperator*{\argmin}{arg~min}
\DeclareMathOperator{\EE}{\mathbb{E}}
\DeclareMathOperator{\PP}{\mathbb{P}}
\DeclareMathOperator{\one}{\mathds{1}}
\DeclareMathOperator{\Reg}{{\text{Reg}}}
\newcommand{\lapp}{\mathop{}\!\lessapprox}
\def\RR{{\mathbb{R}}}
\def\NN{{\mathbb{N}}}
\newcommand{\ubrace}[2]{{\underbrace{#1}_{\textstyle #2}}}
\newcommand{\sr}[2]{ {\stackrel{#1}{#2}} }
\newcommand{\fr}[2]{ { \frac{#1}{#2} }}
\def\lt{\left}
\def\rt{\right}
\def\suchthat{\ensuremath{\mbox{ s.t. }}}
\def\eps{\ensuremath{\epsilon}\xspace}
\def\rarrow{\ensuremath{\rightarrow}\xspace}
\def\sig{\ensuremath{\sigma}\xspace}
\def\eps{\ensuremath{\epsilon}\xspace}
\def\diag{\ensuremath{\mbox{diag}}\xspace}
\def\dt{{\ensuremath{\delta}\xspace} }
\def\sm{{\ensuremath{\setminus}\xspace} }
\def\gam{{\ensuremath{\gamma}\xspace} }
\newcommand{\vast}{\bBigg@{3}}
\newcommand{\Vast}{\bBigg@{4}}
\def\la{{\langle}}
\def\ra{{\rangle}}
\def\lam{\ensuremath{\lambda}}
\def\tilmu{{\ensuremath{\tilde{\mu}}\xspace} }
\def\cC{\ensuremath{\mathcal{C}}\xspace}
\def\cD{\ensuremath{\mathcal{D}}\xspace}
\def\KL{\ensuremath{\mathsf{KL}}}
\newcommand{\wtil}[1]{ {\ensuremath{\widetilde{#1}}} }
\def\vec{\text{vec}}
\newcommand{\wbar}[1]{ {\ensuremath{\overline{#1}}} }
\def\lam{{\ensuremath{\lambda}\xspace} }
\def\vec{\ensuremath{\text{vec}}}
\def\Lam{{{{{\Lambda}}}}}
\def\cF{{\ensuremath{\mathcal{F}}}}
\def\cO{{\ensuremath{\mathcal{O}}}}
\def\poly{\operatorname{poly}}
\def\cH{{\ensuremath{\mathcal{H}}}}
\let\SS\undefined
\def\suchthat{\text{ s.t. }}
\def\cd{\cdot}
\def\hf{\ensuremath{\widehat{f}}}
\def\tila{\ensuremath{\widetilde{a}}}
\def\tilmu{\ensuremath{\widetilde{\mu}}}
\def\tilf{\ensuremath{\widetilde{f}}}
\def\tilcF{\ensuremath{\widetilde{\cF}}}
\def\barcF{\ensuremath{\overline{\cF}}}
\def\barf{\ensuremath{\overline{f}}}
\def\bara{\ensuremath{\overline{a}}}
\def\barmu{\ensuremath{\overline{\mu}}}
\def\IC{{\ensuremath{\normalfont{\text{IC}}}}}
\def\Reg{{\ensuremath{\normalfont{\text{Reg}}}}}
\newcommand{\barB}{{\ensuremath{\wbar B}}}
\newcommand{\barEx}{{\ensuremath{\wbar{\text{\normalfont Ex}}}}}
\def\rcF{\ensuremath{\mathring{\cF}}}
\def\rbeta{\ensuremath{\mathring{\beta}}}
\def\Ex{{\ensuremath{{\normalfont{\text{Ex}}}}}}
\def\Cf{{\ensuremath{{\normalfont{\text{Cf}}}}}}
\def\Fs{{\normalfont{\ensuremath{\text{Fs}}}}}
\def\Fb{{\normalfont{\ensuremath{\text{Fb}}}}}
\def\ralpha{\ensuremath{{\mathring{\alpha}}}}
\def\rz{\ensuremath{\mathring{z}}}
\newcommand{\crop}{{\ensuremath{\textsc{CROP}}}\xspace}
\def\plabel#1{{\phantomsection\label{#1}}}
\newcommand\order[1]{O\del{{#1}}}
\def\exploit{{\normalfont\texttt{Exploit}}\xspace}
\def\conflict{{\normalfont\texttt{Conflict}}\xspace}
\def\feasible{{\normalfont\texttt{Feasible}}\xspace}
\def\fallback{{\normalfont\texttt{Fallback}}\xspace}
\newcolumntype{L}[1]{>{\raggedright\let\newline\\\arraybackslash\hspace{0pt}}m{#1}}
\newcolumntype{C}[1]{>{\centering\let\newline\\\arraybackslash\hspace{0pt}}m{#1}}
\newcolumntype{R}[1]{>{\raggedleft\let\newline\\\arraybackslash\hspace{0pt}}m{#1}}
\def\code{\text{code}}
 \newcommand\bigsubseteq[1][1.19]{%
   \mathrel{\vcenter{\hbox{\scalebox{#1}{$\subseteq$}}}}}
\newcommand\zo{\text{(Z1)}}
\newcommand\zoa{\text{(Z1-a)}}
\newcommand\zob{\text{(Z1-b)}}
\newcommand\zoc{\text{(Z1-c)}}
\newcommand\zt{\text{(Z2)}}
\newcommand\zta{\text{(Z2-a)}}
\newcommand\ztb{\text{(Z2-b)}}
\newcommand\ztc{\text{(Z2-c)}}
\def\bec{\ensuremath{\because\mathop{}}\ }
\newcommand\myshade{85}
\colorlet{mylinkcolor}{MidnightBlue}
\renewcommand{\cite}{\citep}
\let\vec\undefined 
\setlist[itemize]{topsep=.5pt,itemsep=0pt,parsep=2pt}
\setlist[enumerate]{topsep=.5pt,itemsep=0pt,parsep=2pt}
\newtheorem{thm}{Theorem}
\newtheorem{lem}[thm]{Lemma}
\newtheorem{ass}{Assumption}
\newtheorem{claim}{Claim}
\newtheorem{prop}{Proposition}
\theoremstyle{definition}
\newtheorem{remark}{Remark}
\newtheorem{example}{Example}
\title{Crush Optimism with Pessimism: \\Structured Bandits Beyond Asymptotic Optimality}
\author{%
  Kwang-Sung Jun\\
  The University of Arizona\\
  \texttt{kjun@cs.arizona.edu}\\
  \And
  Chicheng Zhang\\
  The University of Arizona\\
  \texttt{chichengz@cs.arizona.edu}\\
}
\begin{document}
\doparttoc 
\faketableofcontents 

\begin{bibunit}[plainnat]

\setlength{\abovedisplayskip}{3pt}
\setlength{\belowdisplayskip}{3pt}
\setlength{\abovedisplayshortskip}{3pt}
\setlength{\belowdisplayshortskip}{3pt}

\let\SS\undefined
\def\SS{{\mathbb{S}}}
\newcommand{\sdef}[2]{{\blue{#1\plabel{#2}}}}

\maketitle

\begin{abstract}
  In this paper,\forarxiv{v2: Added the lower bound result. This version is identical to the NeurIPS'20 camera-ready version.} we study stochastic structured bandits for minimizing regret. 
  The fact that the popular optimistic algorithms do not achieve the asymptotic instance-dependent regret optimality (asymptotic optimality for short) has recently allured researchers.
  On the other hand, it is known that one can achieve a bounded regret (i.e., does not grow indefinitely with $n$) in certain instances.
  Unfortunately, existing asymptotically optimal algorithms rely on forced sampling that introduces an $\omega(1)$ term w.r.t. the time horizon $n$ in their regret, failing to adapt to the ``easiness'' of the instance.
  In this paper, we focus on the finite hypothesis class and ask if one can achieve the asymptotic optimality while enjoying bounded regret whenever possible.
  We provide a positive answer by introducing a new algorithm called CRush Optimism with Pessimism (CROP) that eliminates optimistic hypotheses by pulling the informative arms indicated by a pessimistic hypothesis.
  Our finite-time analysis shows that CROP $(i)$ achieves a constant-factor asymptotic optimality and, thanks to the forced-exploration-free design, $(ii)$ adapts to bounded regret, and $(iii)$ its regret bound scales not with the number of arms $K$ but with an effective number of arms $K_\psi$ that we introduce.
  We also discuss a problem class where CROP can be exponentially better than existing algorithms in \textit{nonasymptotic} regimes.
  Finally, we observe that even a clairvoyant oracle who plays according to the asymptotically optimal arm pull scheme may suffer a linear worst-case regret, indicating that it may not be the end of optimism.
\end{abstract}


\vspace{-1em}
\section{Introduction}
\label{sec:intro}

We consider the stochastic structured multi-armed bandit problem with a fixed arm set.
In this problem, we are given a known structure that encodes how mean rewards of the arms are inter-dependent.
Specifically, the learner is given a space of arms $\cA$ and a space of hypotheses $\cF$ where each $f \in {\cF}$ maps each arm $a\in\cA$ to its mean reward $f(a)$.
Define $[n] := \{1,\ldots,n\}$.
At each time step $t\in[n]$, the learner chooses an arm $a_t \in \cA$ and observes a (stochastic) noisy version of its mean reward $f^*(a)$ where $f^*\in\cF$ is the ground truth hypothesis determined before the game starts and not known to her.
After $n$ time steps, the learner's performance is evaluated by her cumulative expected (pseudo-)regret:
\begin{align}\label{eq:regret}
  \EE \Reg_n = \EE\lt[ n\cd\max_{a\in\cA} f^*(a) -  \sum_{t=1}^n f^*(a_t) \rt]~.
\end{align}
Minimizing this regret poses a well-known challenge in balancing between exploration and exploitation; we refer to Lattimore and Szepesv\'ari~\cite{lattimore20bandit} for the backgrounds on bandits.
We define our problem precisely in Section~\ref{sec:prelim}.

Linear bandits, a special case of structured bandits, have gained popularity over the last decade with exciting applications (e.g., news recommendation)~\cite{auer02using,dani08stochastic,ay11improved,li10acontextual,chu11contextual}.
While these algorithms use the celebrated optimistic approaches to obtain near-optimal \textit{worst-case} regret bounds (i.e., $\wtil{O}(\sqrt{dn})$ where $\wtil{O}$ hides logarithmic factors and $d$ is the dimensionality of the model), Lattimore and Szepesv\'ari \cite{lattimore17theend} have pointed out that their \textit{instance-dependent} regret is often far from achieving the asymptotic instance-dependent optimality (hereafter, asymptotic optimality).
This observation has spurred a flurry of research activities in asymptotically optimal algorithms for structured bandits and beyond, including OSSB~\cite{combes17minimal}, OAM~\cite{hao20adaptive} for linear bandits, and DEL~\cite{ok18exploration} for reinforcement learning, although  structured bandits and their optimality have been studied earlier in more general settings~\cite{agrawal89asymptotically,graves1997asymptotically}.

The asymptotically optimal regret in structured bandits is of order $c(f) \cd \ln(n)$ for instance $f\in \cF$ where $c(f)$ is characterized by the optimization problem in~\eqref{eqn:gamma}.
Its solution $\gamma\in\lbrack0,\infty\rparen^K$ represents the optimal allocation of the arm pulls over $\cA$, and some arms may receive zero arm pulls; we call those with nonzero arm pulls the \textit{informative arms}.
On the other hand, it is well-known that structured bandits can admit bounded regret \cite{bubeck13bounded,lattimore14bounded,atan15global,shen18generalized,guptaunified,tirinzoni2020novel}; i.e., $\limsup_{n\rarrow\infty} \EE \Reg_n < \infty$.
This is because the hypothesis space, which encodes the side information or constraints, can contain a hypothesis $f$ whose best arm alone is informative enough so that exploration is not needed, asymptotically.

However, existing asymptotically optimal strategies such as OSSB~\cite{combes17minimal} cannot achieve bounded regret by design.
The closest one we know is OAM~\cite{hao20adaptive} that can have a sub-logarithmic regret bound.
The main culprit is their forced sampling, a widely-used mechanism for asymptotic optimality in structured bandits~\cite{combes17minimal,hao20adaptive}.
Forced sampling, though details vary, ensures that we pull each arm proportional to an increasing but \textit{unbounded} function of the time horizon $n$, which necessarily forces a \textit{non-finite} regret.
Furthermore, they tend to introduce the dependence on the number of arms $K$ in the regret unless a structure-specific sampling is performed, e.g., pulling a barycentric spanner in the linear structure~\cite{hao20adaptive}.\footnote{
  Some algorithms like OSSB~\cite{combes17minimal} parameterize the exploration rate as $\eps$, introducing $\eps K g_n$ for some $g_n = \omega(1)$ in the regret bound. One may attempt to set $\eps = 1/K$ to remove the dependence, but there is another term $K/\eps$ in the bound (see \cite[Appendix 2.3]{combes17minimal}). Above all, we believe the dependence on $K$ has to appear somewhere in the regret if forced sampling is used.
}
While the dependence on $K$ disappears as $n\rarrow\infty$, researchers have reported that the lower-order terms do matter in practice~\cite{hao20adaptive}.
Such a dependence also goes against the well-known merit of exploiting the structure that their regret guarantees can have a mild dependence on the number of arms or may not scale with the number of arms at all (e.g., the worst-case regret of linear bandits mentioned above).
We discuss more related work in the appendix (found in our supplementary material) due to space constraints, though important papers are discussed and cited throughout.\footnote{
  Concurrent studies by \citet{degenne20structure} and \citet{saber20forced} avoid forced sampling but still have an explicit linear dependence on $K$ in the regret.
}

Towards adapting to the easiness of the instance while achieving the asymptotic optimality, we turn to the simple case of the finite hypothesis space (i.e., $ |\cF| <\infty$) and ask: \textit{can we design an algorithm with a constant-factor asymptotic optimality while adapting to finite regret?}
Our main contribution is to answer the question above in the affirmative by designing a new algorithm and analyzing its finite-time regret.
Departing from the forced sampling, we take a fundamentally different approach, which we call CROP (CRush Optimism with Pessimism).
In a nutshell, at each time step $t$, CROP maintains a confidence set $\cF_t \subseteq \cF$ designed to capture the ground truth hypothesis $f^*$ and identifies two hypothesis sets: the optimistic set $\tilcF_t$ and the pessimistic set $\barcF_t$ (defined in Algorithm~\ref{alg:crop}).
The key idea is to first pick carefully a $\barf_t\in\barcF_t$ that we call ``pessimism'', and then pull the \textit{informative arms} indicated by $\barf_t$.
This, as we show, eliminates either the optimistic set $\tilcF_t$ or the pessimism $\barf_t$ from the confidence set.
Our analysis shows that repeating this process achieves the asymptotic optimality within a constant factor.
Furthermore, our regret bound reduces to a finite quantity whenever the instance allows it and does not depend on the number of arms $K$ in general; rather it depends on an effective number of arms $K_\psi$ defined in~\eqref{eq:Kpsi}.
We elaborate more on CROP and the role of pessimism in Section~\ref{sec:crop}.
We present the main theoretical result in Section~\ref{sec:analysis} and show a particular problem class where CROP's regret bound can be exponentially better than that of forced-sampling-based ones.
Our regret bound of CROP includes an interesting $\ln\ln(n)$ term. 
In Section~\ref{sec:lb}, we show a lower bound result indicating that such a $\ln\ln(n)$ term is unavoidable in general.

Finally, we conclude with discussions in Section~\ref{sec:discussion} where we report a surprising finding that UCB can be in fact better than a clairvoyant oracle algorithm (that, of course, achieves the asymptotic optimality) in nonasymptotic regimes.
We also show that such an oracle can suffer a linear worst-case regret under some families of problems including linear bandits, which we find to be disturbing, but this leaves numerous open problems.

\section{Problem definition and preliminaries}
\label{sec:prelim}

In the structured multi-armed bandit problem, the learner is given a discrete arm space $\sdef{\cA}{s:cA} = [K]$, and a finite hypothesis class $\sdef{\cF}{s:cF} \subset ( \cA \to \RR )$ where we color definitions in blue, hereafter.
There exists an unknown $\sdef{f^*}{s:fstar}\in\cF$ that is the ground truth mean reward function.
Denote by $n$ the time horizon of the problem.
For every $f\in \cF$, denote by $\sdef{a^*(f)}{s:astar} = \argmax_{a \in \cA} f(a)$ and $\sdef{\mu^*(f)}{s:mustar} = \max_{a \in \cA} f(a)$ the arm and the mean reward {\em supported by} $f$, respectively.
We remark that the focus of our paper is not computational complexity but the achievable regret bounds.
For ease of exposition, we make the unique best arm assumption as follows:\footnote{
  Our algorithms and theorems can be easily extended to the setting where optimal actions w.r.t. $f$ can be non-unique. This requires us to redefine the equivalence relationship, which we omit for brevity.
}
\begin{ass}[Unique best arm]\label{ass:unique}
  For every $f\in\cF$, there exists a unique best arm $a^*(f)$, i.e., $a^*(f)$ is singleton.
\end{ass}
\vspace{-.3em}
For an arm $a$ and a hypothesis $f$, denote by $\sdef{\Delta_a(f)}{s:star} = \mu^*(f) - f(a)$ the gap between the arm $a$ and the optimal arm, if the true reward function were $f$.
Given a set of hypotheses $\cG$, we denote by $\sdef{a^*(\cG)}{s:astar-cG} = \cbr[0]{  a^*(f): f \in \cG }$ and $\sdef{\mu^*(\cG)}{s:mustar-cG} = \cbr[0]{  \mu^*(f): f \in \cG }$ the set of arms and mean rewards supported by $\cG$ respectively.

The learning protocol is as follows: for each round $t\in[n]$, the learner pulls an arm $\sdef{a_t}{s:at} \in \cA$ and then receives a reward $\sdef{r_t}{s:rt} = f^*(a_t) + \xi_t$ where $\xi_t$ is an independent $\sig^2$-sub-Gaussian random variable.
The performance of the learner is measured by its expected cumulative regret over $n$ rounds defined in~\eqref{eq:regret}.
Given an arm $a$ and time step $t$, denote by $\sdef{T_a(t)}{s:Ta} = \sum_{s=1}^t \one\cbr{a_s = a}$ the  arm pull count of $a$ up to round $t$.
With this notation, $\EE \Reg_n = \sum_{a \in \cA} \EE[T_a(n)] \Delta_a(f^*)$. 

\paragraph{Asymptotically optimal regret.}
Our aim is to achieve an asymptotic instance-dependent regret guarantee.
Hereafter we abbreviate `asymptotic optimality' to \textbf{AO}.
Specifically, we would like to develop {\em uniformly good} algorithms, in that for any problem instance, the algorithm satisfies $\EE \Reg_n = o(n^p)$ for any $p > 0$ where the little-o here is w.r.t. $n$ only.
The regret lower bound of structured bandits is based on the \textit{competing} class of functions $\sdef{\cC(f)}{s:cC}= \cbr{g: g(a^*(f)) = f(a^*(f)) \wedge a^*(g) \neq a^*(f)}$.
The class $\cC(f)$ consists of hypotheses $g\in\cF$ such that pulling arm $a^*(f)$ provides no statistical evidence to distinguish $g$ from $f$.
Thus, even if the learner is confident that $f$ is the ground truth, she has to pull arms other than $a^*(f)$ to guard against the case where the true hypothesis is actually $g$ (in which case she suffers a linear regret); see the example in Figure~\ref{fig:diagram}(a) where $\cC(f_4) = \{f_1\}$.
The lower bound precisely captures such a requirement as constraints in the following optimization problem:
\begin{align}
\label{eqn:gamma}
\blue{c(f)} := \min_{\gam \in \lbrack0,\infty\rparen^K: ~ \gam_{a^*(f)} = 0} ~~~ \sum_a \gam_a \Delta_a(f)
~~~~\text{ s.t. } ~~~ \forall g \in \cC(f),  ~~
\sum_a \gam_a \cd \KL(f(a),g(a)) \ge 1 ~.
\end{align}
where $\KL(f(a),g(a))$ is the KL-divergence between the two reward distributions when the arm $a$ is pulled under $f$ and $g$ respectively.
For the discussion of optimality, we focus on Gaussian rewards with variance $\sig^2$, which means $\KL(f(a),g(a)) = \fr{(f(a)-g(a))^2}{2\sig^2}$, though our proposed algorithm has a regret guarantee for more generic sub-Gaussian rewards.
We denote by $\sdef{\gam(f)}{s:gam}$ the solution of~\eqref{eqn:gamma}.
Then, $c(f) = \sum_{a \in \cA} \gam_a(f)\cd\Delta_a(f)$.
The intuition is that if one could play arms in proportion to $\sdef{\gam^*}{s:gamstar} = \cbr{\gam_a(f^*)}_{a \in \cA}$, then, by the constraints of the optimization problem, she would have enough statistical power to distinguish $f^*$ from all members of $\cC(f^*)$; furthermore, $\gamma^*$ is the most cost-efficient arm allocation due to the objective function.
The value of $\gam^*$ can be viewed as the allocation that balances optimally between maximizing the \textit{information gap} (i.e., the KL divergence in~\eqref{eqn:gamma}) and minimizing the \textit{reward gap} (i.e., $\Delta_a(f)$).

It is known from the celebrated works of Agrawal et al.~\cite{agrawal89asymptotically} and Graves and Lai~\cite{graves1997asymptotically} that any uniformly good algorithm must have regret at least $(1 - o(1)) c(f) \ln (n)$ for large enough $n$, under environment with ground truth reward function $f^* = f$.
In other words, if an algorithm has a regret of $(1 - \Omega(1)) c(f) \ln n$ under the ground truth $f$, then for large enough $n$, its expected arm pull scheme $\gamma = \del[0]{\frac{\EE\sbr{T_a(n)}}{\ln n}}_{a \in \cA}$ must violate the constraint in~\eqref{eqn:gamma} for some $g \in \cC(f)$, implying that the algorithm must not be a uniformly good algorithm (i.e., suffer a polynomial regret under $g$).
They also show the lower bound is tight by developing algorithms with asymptotic regret bound of $(1 + o(1)) c(f) \ln n$.

\textbf{The oracle.} The lower bound suggests that one should strive to ensure $\EE[T_a(n)] \approx \gam^*_a \ln(n)$.
Indeed a clairvoyant oracle (the oracle, hereafter) who knows $f^*$ would, at round $t$, pull the arm $a$ such that $T_a(t-1) \le \gam^*_a\ln(t)$ if there exists such an arm (i.e., exploration), and otherwise pull the best arm (i.e., exploitation).
The oracle will initially pull the informative arms only, but as $t$ increases, exploitation will crowd out exploration.
We believe mimicking the oracle is what most algorithms with AO are after.
Particularly, the most common strategy is to replace $\gam^*$ with the Empirical Risk Minimizer (ERM) $\gam(\hf_t)$ where $\hf_t \in\cF$ is the one that best fits the observed rewards.
Unlike supervised learning, however, the observed rewards are controlled by the algorithm itself, making the ERM brittle; i.e., the ERM may not converge to $f^*$.
Thus, most studies employ a form of forced sampling to ensure that $\hf_t$ converges to $f^*$ so that $\gam(\hf_t)$ converges to $\gam^*$.
As discussed before, this is precisely where the issues begin, and we will see that CROP avoids forced sampling and $\gam(\hf_t)$ altogether.

\textbf{Example: cheating code.}
We describe an example inspired by~\citet{pmlr-v19-amin11a} when algorithms with AO provide an improvement over the popular optimistic algorithms. 
Let $K_0 \in\NN_+$ and $e_i$ be the $i$-th indicator vector.
The idea is to first consider a hypothesis like $f = (1, 1-\eps, 1-\eps, \ldots, 1-\eps)$ and then add those hypotheses that copy $f$, pick one of its non-best arms, and replace its mean reward with $1+\eps$.
This results in total $K_0-1$ competing hypotheses.
Specifically, let $\blue{e_i}$ be the $i$-th indicator vector and define $h(i,j) \in \RR^{K_0}$ as follows: $\forall i\in[K_0], h(i,0) = (1-\eps) \mathbf{1} + \eps e_i$ and $\forall j \in [K_0]\sm \{i\},~~ [h(i,j)]_k = \begin{cases}
1+\eps & \text{if $k=j$}
\\  h(i,0) & \text{otherwise} \end{cases}$.
Let $\cF_0 = \cbr{h(i,j): i \in [K_0], j \in \cbr{0,1,\ldots,K_0} \sm \cbr{i}}$, $k= \lceil \log_2(K_0) \rceil$, and $\Lam\in[0,1/2]$.
Finally, we define the ``cheating code'' class with $K=K_0 + k$ arms:
\begin{align*}
&\blue{\cF^{\text{code}}} = \{(g_{1:K_0}, \Lam\cd b_{1:k})\in\RR^{K_0+k}: g \in \cF_0, b\in\{0,1\}^k \text{: binary representation of } a^*(g)-1\}~,
\end{align*}
which appends $k$ ``cheating arms'' that tells us the index of the best arm.
Let us fix $f^* \in \cF^{\code}$ such that $\mu^*(f^*) = 1$.
Assume $\fr{1}{2\eps} > \fr{2}{\Lam^2}$ so that the informative arms of $f^*$ are the cheating arms (see the appendix for reasoning) where we color in green for emphasis, throughout.
Let $\sig^2 = 1$.
For the instance $f^*$, an algorithm with a constant-factor AO would have regret $O(\fr{\log_2 K}{\Lam^2}\ln(n))$ (elaborated more in the appendix).
In contrast, optimistic algorithms such as UCB~\cite{auer02finite} (i.e., run naively without using the structure) or UCB-S~\cite{lattimore14bounded}, would pull the arm $\tila_t$ where
\begin{align}\label{eq:optimism}
(\tila_t, \tilf_t) = \argmax_{a\in\cA, f\in\cF_t} f(a)
\end{align}
and $\cF_t$ is a confidence set designed to trap $f^*$ with high probability.
One can show that $\tila_t$ is always one of the first $K_0$ arms and that their regret is $O(\fr{K}{\eps}\ln(n))$, which can be much larger.
In fact, the gap between the two bounds can be arbitrarily large as $\eps$ approaches to $0$.

\begin{figure}
  \centering
  \begin{tabular}{cc}
    \includegraphics[width=.35\linewidth,valign=t]{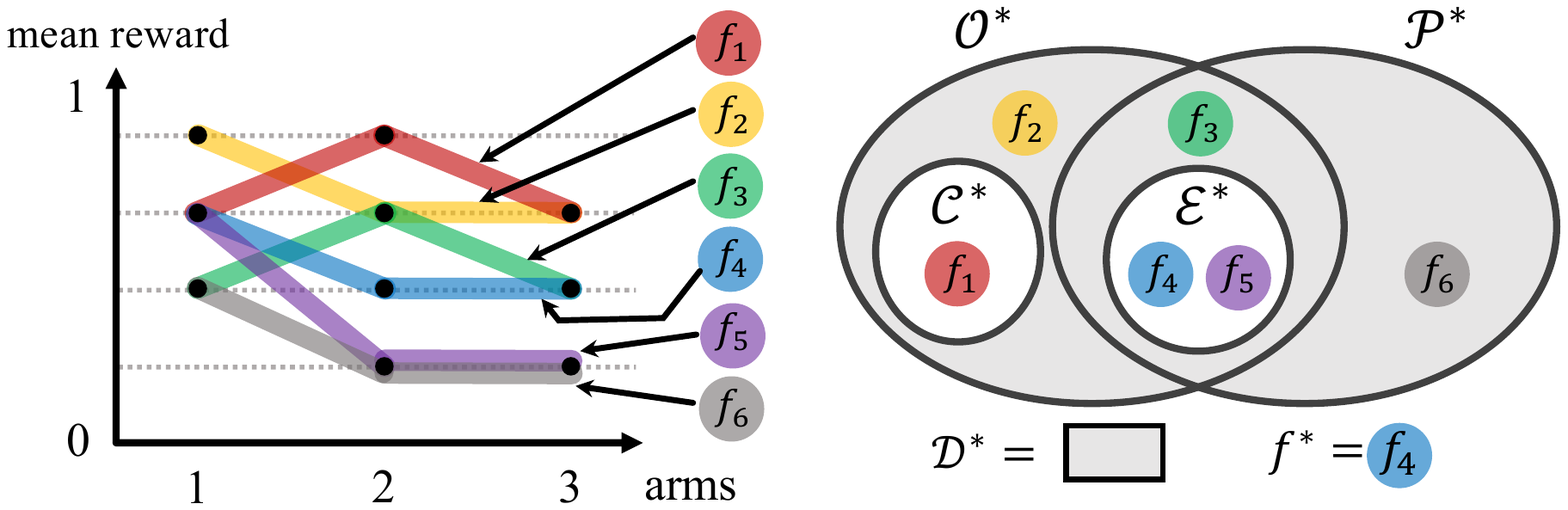}&  \includegraphics[width=.35\linewidth,valign=t]{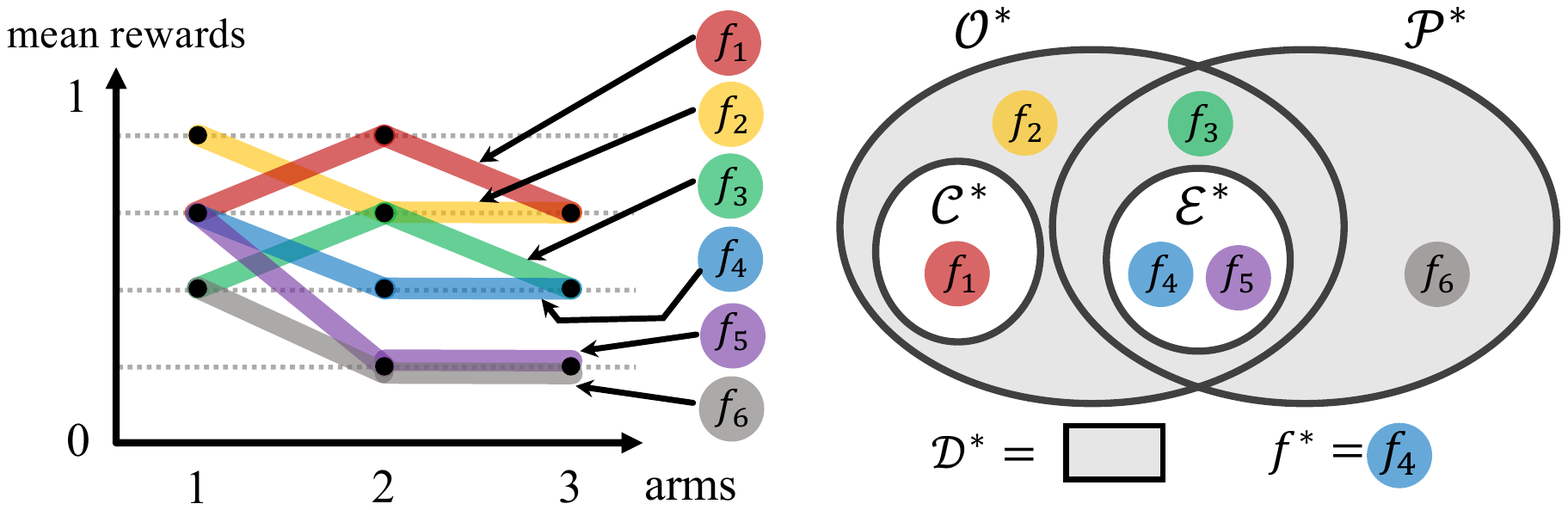}
    \\ (a) & (b)
  \end{tabular}
  \vspace{-.4em}
  \caption{(a) An example instance. (b) A diagram of various hypothesis classes w.r.t. the ground truth hypothesis $f^*$. Best viewed in colors.}
  \label{fig:diagram}
  \vspace{-0.0em}
\end{figure}

\paragraph{The anatomy of the function classes.}
\!\!There are function classes besides $\cC(f)$ that will become useful in our study.
We first define an equivalence relationship between hypotheses: we call $\sdef{f \sim g}{s:fsimg}$ if $a^*(f) = a^*(g)$ and $\mu^*(f) = \mu^*(g)$; one can verify that it satisfies reflexiveness, symmetry, and transitivity, and induces a partition over $\cF$.
Given $f\in\cF$, we denote by $\sdef{\cE(f)}{s:cE}$ the equivalent class $f$ belongs to and by $\sdef{\cD(f)}{s:cD} = \cbr{g: g(a^*(f)) \neq f(a^*(f))}$ its {\em docile} class that can be easily distinguished from $f$ as we describe later.
One can show that for every $f\in\cF$, the class $\cF$ is a disjoint union of $\cE(f)$, $\cD(f)$, and $\cC(f)$.
We also define $\sdef{\cO(f)}{s:cO} = \cbr{g: \mu^*(g) \geq \mu^*(f)}$ (and $\sdef{\cP(f)}{s:cP} = \cbr{g: \mu^*(g) \leq \mu^*(f)}$) as the set of hypotheses that support mean rewards that are not lower (and not higher) than $\mu^*(f)$ (respectively).
We use shorthands $\cE^*:=\cE(f^*)$ and $\cD^*, \cC^*, \cO^*$, and $\cP^*$  defined similarly.
We draw a Venn diagram of theses classes in Figure~\ref{fig:diagram}(b) along with the example hypotheses in Figure~\ref{fig:diagram}(a); we recommend that the readers verify the example themselves to get familiar with these classes.

\paragraph{Bounded regret.}
When the ground truth $f^*$ enjoys $\cC^* = \emptyset$, then $c(f^*)=0$ and the algorithm can achieve bounded regret, which is well-known as mentioned in our introduction. 
This is because, when $f^*$ has no competing hypothesis, pulling the best arm $a^*(f^*)$ alone provides a nonzero statistical evidence that distinguishes $f^*$ from $\cF \sm \cE^* = \cD^*$.
That is, there is no need to explore as exploitation alone provides sufficient exploration.

\vspace{-.5em}
\section{Crush Optimism with Pessimism (\crop)}
\label{sec:crop}
\vspace{-.5em}

\begin{algorithm}[t]
  \caption{CRush Optimism with Pessimism (\crop)}
  \label{alg:crop}
  {\small
  \begin{algorithmic}[1]
    \REQUIRE{The hypothesis class $\cF$, parameters $z, \rz$, $\alpha, \ralpha > 1$ }
    \FOR{$t=1,2,\ldots,n$}
    \STATE Let
    $\sdef{\cF_t}{s:cFt} = \{f\in\cF: L_{t-1}(f) - \min_{g\in\cF}L_{t-1}(g) \le \sdef{\beta_t}{s:beta} := 4\sig^2 \ln\lt(zt^\alpha\rt)\}$ 
    \IF{$a^*(\cF_t)$ is singleton}
    \STATE \textbf{(\exploit)} Pull the arm $a_t \in a^*(\cF_t)$, observe the reward $r_t$.
    \label{line:exploit}
    \STATE Continue to the next iteration.
    \ENDIF
    \STATE Let $\sdef{\cB_t}{s:cB} = \{(a^*(f),\mu^*(f)): f \in \cF_t\}$ be the \textbf{b}est arm candidate set. 
    \STATE Find the optimistic arm, mean, and set: \plabel{s:tila} \plabel{s:tilmu} \plabel{s:tilcF}
    \phantom{\footnotesize$\sdef{\tila_t}{s:tila}, \sdef{\tilmu_t}{s:tilmu}    \sdef{\tilcF_t}{s:tilcF}$  }
    \vspace{-.5em}
    \begin{align*}
    (\blue{\tila_t}, \blue{\tilmu_t}) = \argmax_{(a,\mu) \in \cB_t}~ \mu, ~~~~ {\blue{\tilcF_t}} = \cF_t(\tila_t, \tilmu_t)~.
    \end{align*}
    \vspace{-1em}
    \STATE Find the pessimistic arm, mean, set, and hypothesis:
    \plabel{s:barf} \plabel{s:barcF} \plabel{s:barmu} \plabel{s:bara}
    \vspace{-.5em}
    \begin{align*}
    (\blue{\bara_t}, \blue{\barmu_t}) = \argmin_{(a,\mu) \in \cB_t:~ a \neq \tila_t}~ \mu
    ,~~~~ \blue{\barcF_t} = \cF_t(\bara_t, \barmu_t)
    ,~~~~ \blue{\barf_t} = \argmin_{f\in\barcF_t} L_{t-1}(f)~.
    \end{align*}  \vspace{-1em}
    \label{line:barf}
    \STATE Define $\sdef{\rcF_t}{s:rcF} = \{f \in \barcF_t: L_{t-1}(f) -  L_{t-1}(\barf_t) \le \sdef{\rbeta_t}{s:rbeta} := 4\sig^2\ln(\rz(\log_2(t))^{\ralpha})\}$.  \hfill{} (let $\rbeta_1=\infty$)
    \label{line:cf-elim}
    \IF{$\exists f, g \in\mathring{\cF}_t \suchthat \gam(f) \not\propto \gam(g)$}
    \STATE \textbf{(\conflict)} $\pi_t = \phi(\barf_t)~.$ \hfill{} (see~\eqref{eq:phi})
    \label{line:cf}
    \ELSIF{ $\gam(\barf_t)$ satisfies that $\forall f\in\tilcF_t, \sum_a \gam_a(\barf_t)\fr{(\barf_{t}(a) - f(a))^2}{2\sig^2} \ge1$,}
    \label{line:fs-cond}
    \STATE \textbf{(\feasible)} $\pi_t = \gam(\barf_t)$.
    \label{line:fs}
    \ELSE
    \STATE \textbf{(\fallback)} $\pi_t = \psi(\barf_t) $. \hfill{} (see~\eqref{eq:psi})
    \label{line:fb}
    \ENDIF
    \STATE Pull arm $a_t = \argmin_{a} \fr{T_a(t-1)}{\pi_{t,a}}$~~~~ (take $\fr x 0$ with $x\ge0$ as $\infty$; break ties arbitrarily), and then observe the reward $r_t$.
    \label{line:armpull}
    \ENDFOR
  \end{algorithmic}
}
\end{algorithm}
\textfloatsep=.5em
We now introduce our algorithm CROP.
First, some definitions: for any $\cG$, define $\sdef{\cG(a,\mu)}{s:cG-pair} = \{f\in\cG: a^*(f) = a, \mu^*(f) = \mu\}$.
Given a set of observations $\cbr{(a_s, r_s): s \in [t]}$ up to time step $t$, and $f\in\cF$, denote by $\sdef{L_t(f)}{s:Lt} = \sum_{s=1}^t (f(a_s) - r_s)^2$ the cumulative squared loss of $f$ up to time step $t$.
We use this loss to construct a confidence set that captures the ground truth $f^*$, which is inspired by Agarwal et al. \cite{agarwal11stochastic}, but we extend theirs to allow sub-Gaussian rewards.
The loss $L_t(f)$ gives a measure of goodness of fit of hypothesis class $f$, in that $f^*$ is the Bayes optimal regressor that minimizes $\EE L_t(f)$.

We describe CROP in Algorithm~\ref{alg:crop}, where the parameters $\{\alpha,\ralpha\}$ are numerical constants and $\{z,\rz\}$ should be set to $|\cF|$ (precise defined in \Cref{thm:main}).
CROP has four main branches: \exploit, \conflict, \feasible, and \fallback.
Note that \feasible is the main insight of the algorithm that we focus first while \conflict deals with some difficult cases, which we describe the last.

\paragraph{\exploit.}
\!\!\!\!  At every round $t$, CROP maintains a confidence set $\cF_t$, the set of hypotheses $f$ in $\cF$ that fits well with the data observed so far w.r.t. $L_{t-1}(f)$.
This is designed so that the probability of failing to trap the ground truth hypothesis $f^*$ is $\order{\frac{1}{t^\alpha}}$.
We first check if $a^*(\cF_t)$ is a singleton.
If true, we pull the arm $a^*(\cF_t)$ that is unanimously supported by all $f$ in $\cF_t$.
Note that the equivalence relationship $\sim$ induces a partition of $\cF_t$.
If we do not enter the exploit case, we select the equivalence class $\tilcF_t$ that maximizes its shared supported mean reward; we call this the {\em optimistic set}.
This is related to the celebrated``optimism in the face of uncertainty'' (OFU) principle that pulls arm $a_t$ by \eqref{eq:optimism}.
In line~\ref{line:barf}, we deviate from the OFU and define the  {\em pessimistic set}  $\barcF_t$, which is the equivalence class in $\cF_t$ that \textit{minimizes} its shared supported mean reward $\mu^*(\barcF_t)$ with a constraint that they support an action other than $\tila_t$.
We then define $\barf_t$, which we call \textit{the pessimism}, as the Empirical Risk Minimizer (ERM) over $\barcF_t$. 
Next, we compute $\rcF_t$, a refined confidence set inside the pessimistic set $\barcF_t$, and then test a condition to enter \conflict; we will discuss it later as mentioned above.
For now, suppose that we did not enter \conflict and are ready to test the condition for \feasible (line~\ref{line:fs-cond}).

\paragraph{\feasible.}
The condition in line~\ref{line:fs-cond} first computes $\gam(\barf_t)$ and then tests whether all the hypotheses in $\tilcF_t$ satisfy the information constraint that takes the same form as those in the optimization problem for $c(\barf_t)$.
If this is true, then we set $\pi_t=\gam(\barf_t)$ and then move onto line~\ref{line:armpull} to choose which arm to pull.
The intention here is to pull the arm that is most far away from the pull scheme of $\gam(\barf_t)$, which is often referred to as ``tracking''~\cite{garivier16optimal}.
Note that the arm $\bara_t$ is never pulled because $\gam_{\bara_t}(\barf_t) = 0$.

\begin{wrapfigure}{R}{0.4\linewidth}
  \vspace{-1.2em}
  \begin{minipage}{0.4\textwidth}
    \begin{tabular}{C{5ex}ccccc}\hline
      Arms & A1 & A2 & A3 & A4 & A5 \\ \hline
      $f_1$ & \textbf{1}  & .99 &  .98 & 0 & 0 \\
      $f_2$ & .98 & \textbf{.99} & .98 & \underline{.25} & 0 \\
      $f_3$ & .97 & .97 & \textbf{.98} & .25 & \underline{.25}  \\   \hline
      $f_4$ \newline{\!\!\footnotesize(optional)}
              & .98 & \textbf{.99} & .98 &.25 & \underline{.50} \\ \hline
    \end{tabular}
    \vspace{-.4em}
    \caption{The ``staircase'' example. Define $\blue{\cH}=\{f_1,f_2,f_3\}$ and $\blue{\cH^+}=\{f_1,f_2,f_3,f_4\}$.
      We boldface the best arm and underline the informative arms of each hypothesis. }
    \label{fig:staircase}
  \end{minipage}
  \vspace{-1.3em}
\end{wrapfigure}

\paragraph{Why the pessimism?}~
To motivate the design choice of tracking the pessimism, consider the example hypothesis space $\cH$ in Figure~\ref{fig:staircase}.
Suppose that at time $t$ we have $\cF_t = \cH = \{f_1, f_2, f_3\}$.
Which arms should we pull?
The OFU tells us to pull the optimistic arm $\tila_t$ as done in Lattimore and Munos \cite{lattimore14bounded}, but it does not achieve the instance optimality.
Another idea mentioned in Section~\ref{sec:prelim} is find the ERM $\sdef{\hf_t}{s:hft} = \argmin_{f\in\cF} L_{t-1}(t)$ and then pull the arms by tracking $\gam(\hf_t)$; i.e., $a_t = \argmin_{a\neq a^*(\hf_t)} T_a(t-1) / \gam_a(\hf_t)$.
This is essentially the main idea of OSSB~\cite{combes17minimal}.\footnote{
  OSSB in fact does not find the ERM but rather uses the empirical means of the arms to solve the optimization problem~\eqref{eqn:gamma}, which can work for some problem families.
  Still, we believe extending OSSB to use the ERM with suitable loss function should achieve (near) asymptotic optimality for the finite $\cF$.
}
OAM~\cite{hao20adaptive} also relies on the ERM $\hf_t$, though they partly use the optimism.
However, ERMs are brittle in bandits. 
For example, when $f^* = f_3$, in earlier rounds, the ERM $\hf_t$ can be $f_2$ with nontrivial probability.
Pulling the informative arm of $f_2$, which is A4, eliminates $f_1$ but will not eliminate $f_3$, and we get stuck at pulling A4 indefinitely.
To avoid such a trap, researchers have introduced forced sampling.

What are the robust alternatives to the ERM?
For now, suppose that $f^*$ is always in the confidence set $\cF_t$.
Among $\{\gam(f_1), \gam(f_2), \gam(f_3)\}$, which one should we track?
We claim that we should follow the pessimism, which is $f_3$ in this case.
Specifically, if $f^*=f_3$, we are lucky and following the pessimism will soon remove both $f_1$ and $f_2$ from $\cF_t$.
We then keep entering \exploit and pull the best arm A3 for a while.
Note that $f_1$ or $f_2$ will come back to $\cF_t$ again as pulling A3 provides the same loss to every $f\in\cH$ but the threshold $\beta_t$ of the confidence set $\cF_t$ increases over time.
In this case, the principle of pessimism will do the right thing, again.

What if $f^*$ was actually $f_2$?
Following the pessimism $f_3$ is not optimal, but it does eliminate $f_3$ from $\cF_t$ because $f_2$ appears in the constraint of the optimization~\eqref{eqn:gamma}; after the elimination, we have $\cF_t=\{f_1,f_2\}$ and $\barf_t=f_2$, so the pessimism is back in charge.
In sum, the key observation is that the optimal pull scheme $\gam(f)$ is designed to differentiate $f$ from its \textit{competing hypotheses} that support arms with higher mean rewards than that of $f$.
Assuming the confidence set works properly, tracking the pessimism either does the right thing or, if $\barf_t$ is not the ground truth, removes $\barf_t$ from the confidence set (also the right thing to do).
However, to make it work beyond this example, we need other mechanisms: \fallback and \conflict.

\paragraph{\fallback.}
When the condition of \feasible is not satisfied, we know that the arm pull scheme $\gam(\barf_t)$ will not be sufficient to remove every $f\in\tilcF_t$ -- or, it could even be impossible.
Thus, we should not track $\gam(\barf_t)$.
Instead, we design a different arm pull scheme $\sdef{\psi(f)}{s:psi}$ defined below so that tracking $\psi(\barf_t)$ can remove all members of $\tilcF_t$ in a cost-efficient manner.
With the notation $\Delta_{\min}(f) = \min_{a\neq a^*(f) } \Delta_{a^*(f)}(f)$,
\begin{align}
\begin{aligned}\label{eq:psi}
\psi(f) := \argmin_{\gam \in \lbrack 0,\infty\rparen^K}~~~
    &\Delta_{\min}(f) \cd \gam_{a^*(f)} +  \sum_{a\neq a^*(f)} \Delta_a(f) \cd\gam_a
\\ ~~\text{ s.t. }~~  &
\forall g\in\cO(f) \sm \cE(f):~~ \sum_a \gam_a \fr{(f(a) - g(a))^2}{2\sig^2}\ge 1
\\ &\gam \succeq \phi(f) \vee \gam(f)
\end{aligned}
\end{align}
where $\phi$ is defined in~\eqref{eq:phi} and explained below and $\blue{x \succeq y}$ means $x_a \ge y_a,\forall a$.
The constraints above now ensure that $\psi(\barf_t)$ provides a sufficient arm pull scheme to eliminate $\tilcF_t$ even if the condition of \feasible is not satisfied.
Another difference from $\gam(f)$ is that $\gam_{a^*(f)}$ can be nonzero, but we use $\Delta_{\min}(f)$ instead of $\Delta_{a^*(f)}(f) = 0$ to avoid $\gam_{a^*(f)}=\infty$.
That said, there are other design choices for $\psi(f)$, especially given that $\psi$ appears only with the finite terms in the regret bound.
We discuss more on the motivation and alternative designs of~\eqref{eq:psi} in the appendix.

\paragraph{\conflict.}
This is an interesting case where the learner faces the challenge not in finding which arm is the best arm, but rather which \textit{informative} arms and their pull scheme one should track.
Specifically, consider the other example of $\cH^+$ in Figure~\ref{fig:staircase}.
Suppose at time $t$ we have $\cF_t=\cbr{f_1, f_2, f_4}$ and the ground truth is $f_4$, which means $\cE^* = \cbr{f_2,f_4}$.
If CROP does not have the \conflict mechanism, it will use $\barf_t$, the ERM among $\barcF_t$, which can be either $f_2$ or $f_4$.
However, as explained before, ERMs are brittle; one can see that it can get stuck at tracking $f_2$ with nontrivial probability and pull less informative arms.
Interestingly, this would not incur a linear regret.
Rather, the regret would still be like $\ln(n)$ but with a suboptimal constant of $c(f_2)$ rather than $c(f_4)$; one can adjust our example to make this gap $c(f_2)-c(f_4)$ arbitrarily large, making it arbitrarily far from the AO. 
On the other hand, a closer look at $f_2$ and $f_4$ reveals that A5 is the only arm that can help distinguish $f_2$ from $f_4$.
One might attempt to change CROP so that it pulls A5 in such a case, which results in either removing $f_4$ from the confidence set if $f^*=f_2$ or removing $f_2$ if $f^*=f_4$.
However, if $f^*=f_2$, this would introduce an extra $\ln(n)$ term in the regret bound since A5 is a noninformative arm, which again can lead to a suboptimal regret bound. 

CROP resolves this issue by constructing a refined confidence set $\rcF_t$ with a more aggressive failure rate of $1/\ln(t)$ rather than the usual $1/t$, and use this confidence set to weed out conflicting pull schemes.
If the refined set $\rcF_t$ still contains hypotheses with conflicting pull schemes, then CROP enters \conflict and computes a different allocation scheme:
    \begin{align}
  \label{eq:phi}
  \begin{aligned}
    \blue{\phi(f)} =   \argmin_{\gam \in \lbrack 0,\infty\rparen^K:  \gam_{a^*(f)} = 0} &~~\sum_a \Delta_a(f) \cd\gam_a
  \\
  \text{ s.t. } &\forall g\in\cE(f): \gam(g) \not\propto\gam(f) ,~~
  \sum_a \gam_a \fr{(f(a) - g(a))^2}{2\sig^2}\ge 1
  \end{aligned}
  ~,
  \end{align}
where we use the convention $0\propto0$.
Consider $\cH^+$ in Figure~\ref{fig:staircase} with $\sig^2 = 1$.
Then, $\phi(f_2) = \phi(f_4) = (0,0,0,0,\fr{2}{(.5)^2}=8)$.
Our regret analysis will show that the quantity $\phi(f)$ appears in the regret bound with $\ln(\ln(n))$ term only instead of $\ln(n)$, allowing us to achieve the AO within constant-factor.

\vspace{-.5em}
\section{Analysis}
\label{sec:analysis}
\vspace{-.5em}

Before presenting our analysis, we define the effective number of arms $\sdef{K_\psi}{s:Kpsi}$ as the size of the union of the supports of $\psi(f)$ for all $f\in\cF$:
\begin{align}\label{eq:Kpsi}
K_\psi = \abs{\cbr{a: \exists f \in\cF, \psi_a(f) \neq 0}}~.
\end{align}
Define $\blue{\phi(\cG)} = \del{\max_{f \in \cG} \phi_a(f)}_{a\in\cA}$ and $\blue{\psi(\cG)}$ similarly.
Let $\blue{\Lam_{\min}} = \min_{f \in\cD^*} \fr{| f(a^*) - f^*(a^*) |}{\sig}$ the smallest information gap where $a^* := a^*(f^*)$.
We use the shorthand $\Delta_{\max} := \max_{a} \Delta_a(f^*)$.
We present our main theorem on the regret bound of CROP.
\vspace{-.2em}
\begin{thm}  \label{thm:main}
  Let $(\alpha,\ralpha,z,\rz) = (2,3,|\cF|,|\cF|)$.
  Suppose we run \crop with hypothesis class $\cF$ with the environment $f^*\in\cF$.
  Then, \crop has the following anytime regret guarantee: $\forall n\ge2$,
  \begin{align*}
    \EE\Reg_n
    &\le c_1 \cd \del{ P_1 \ln(n) + P_2 \ln(\ln(n))
    + P_3 \del{\ln(|\cF|)  +  \ln\del{ Q_1 }}
    + K_\psi \Delta_{\max} }~,
  \end{align*}
  where $c_1$ is a numerical constant, and
  \begin{align*}
  P_1 = \sum_a \Delta_a \gam^*_a,
  ~  P_2 = \sum_{a} \Delta_a \phi_a(\cE^*),
  ~ P_3 = \sum_a \Delta_a \psi_a({\cF}),
  ~\text{ and } ~ Q_1= \Lam_{\min}^{-2} + K_\psi(1+\max_i \psi_i({\cF}))~.
  \end{align*}
  Furthermore, when $\gam^* = 0$, we have $P_1=P_2=0$, achieving a bounded regret.
\end{thm}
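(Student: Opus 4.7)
The plan is to work on a high-probability good event $\cG$ and then decompose the regret by the four branches of Algorithm~\ref{alg:crop}. For the good event, I would show that (i) $f^*\in\cF_t$ for all $t$ and (ii) $f^*\in\rcF_t$ for all but possibly a short prefix, each with probability growing with the choice of $\beta_t$ and $\rbeta_t$. Both are squared-loss confidence sets in the spirit of Agarwal et al.; the key concentration step is that for any fixed $g\in\cF$, the process $L_t(g)-L_t(f^*)$ is, up to a martingale term, a sum of squared prediction gaps with sub-Gaussian noise, so the set condition $L_t(f)-\min_g L_t(g)\le\beta_t$ traps $f^*$ with probability $1-O(|\cF| t^{-\alpha})$, and similarly for $\rcF_t$ at rate $O(|\cF|(\log_2 t)^{-\ralpha})$. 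With $\alpha=2,\ralpha=3$ both tail sums are finite and contribute only to the $P_3\ln|\cF|$ and $Q_1$ terms after multiplying by the trivial regret $n\Delta_{\max}$.

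On $\cG$ the \exploit{} branch is free: whenever $a^*(\cF_t)$ is singleton and $f^*\in\cF_t$, the pulled arm is $a^*(f^*)$. For \feasible, I would use the tracking lemma $T_a(t)\ge \pi_{t,a}\cdot t/\sum_b \pi_{t,b}-K$ (from the line~\ref{line:armpull} rule) to translate ``pull according to $\pi_t=\gam(\barf_t)$'' into ``accumulate information.'' Writing the confidence-set elimination condition as $\sum_a T_a(t)(f(a)-f^*(a))^2/(2\sig^2)\gtrsim \beta_t/(4\sig^2)$ for all surviving $f$, the \feasible{} feasibility test in line~\ref{line:fs-cond} guarantees that after $O(\ln t)$ rounds every $f\in\tilcF_t$ is eliminated, or $\barf_t$ itself is eliminated if $\barf_t\neq f^*$. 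An induction along the pessimistic chain then shows that after finitely many eliminations $\barf_t\in\cE^*$, so $\gam(\barf_t)=\gam^*$, yielding $\sum_a\Delta_a\,\EE T_a(n)\le c_1 P_1\ln(n)+\text{finite}$ from the Feasible branch.

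For \conflict, note the refined set $\rcF_t$ has threshold $\rbeta_t=\Theta(\ln\ln t)$, so the same information-accumulation argument applied to $\phi(\barf_t)$ eliminates the conflicting $g\in\cE(\barf_t)$ (those with $\gam(g)\not\propto\gam(f)$) after only $O(\ln\ln t)$ rounds, producing the $P_2\ln\ln(n)$ term. The \fallback{} branch is entered only when the \feasible{} constraint fails; because $\psi$ is built to dominate $\gam$ and $\phi$ and satisfy the stronger information constraint over $\cO(f)\setminus\cE(f)$, each such round makes uniform progress toward emptying $\tilcF_t\cup$(surviving conflicts), and the total count is bounded by the information budget needed to eliminate every $f\in\cF$ with gap at least $\Lam_{\min}$, contributing the $P_3(\ln|\cF|+\ln Q_1)$ term and the $K_\psi\Delta_{\max}$ overshoot from the tracking discretization (which enters only through the arms in $\bigcup_f\supp\psi(f)$, explaining why $K$ is replaced by $K_\psi$).

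The main obstacle I anticipate is the bookkeeping to achieve the constant-factor (as opposed to merely order-wise) AO: one must show that the sequence of pessimisms $(\barf_t)$ stabilizes in $\cE^*$ within a finite number of eliminations \emph{before} the $\ln n$ budget is exhausted, and that, on the stable phase, $\gam(\barf_t)$ coincides with $\gam^*$ up to the equivalence class (so that $\sum_a \Delta_a(f^*)\gam_a(\barf_t)=c(f^*)=P_1$). The subtlety is that $\barf_t$ is not globally the ERM but the ERM over an equivalence class $\barcF_t$, which only becomes $\cE^*$ after the refined set $\rcF_t$ weeds out competing equivalence classes inside $\barcF_t$; this is precisely where the $\ln\ln(n)$ factor from $\phi$ and the $Q_1$ term from $\Lam_{\min}^{-2}$ appear. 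Finally, the bounded-regret claim when $\gam^*=0$ is immediate: $\gam^*=0$ forces $\cC^*=\emptyset$, so all competitors of $f^*$ lie in $\cD^*$ and are distinguished by $a^*(f^*)$ alone; after the finite Fallback/Conflict phase $\cF_t\subseteq\cE^*$, so \exploit{} takes over and no further exploration is incurred, collapsing $P_1$ and $P_2$ to zero.
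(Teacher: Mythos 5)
There is a genuine gap, and it sits exactly where the paper's proof has to work hardest. Your plan handles the refined set $\rcF_t$ by putting ``$f^*\in\rcF_t$ for all but a short prefix'' into a global good event and charging the failure to a summable tail times trivial regret. But with $\ralpha=3$ the failure rate of $\rcF_t$ is only $O(|\cF|(\log_2 t)^{-3})$, and $\sum_{t\le n}(\log_2 t)^{-3}$ is \emph{not} finite (it grows like $n/(\log n)^3$), so ``both tail sums are finite'' is false and multiplying by $n\Delta_{\max}$ (or even by $\Delta_{\max}$ per round) does not give an $O(1)$ contribution. The paper cannot and does not union-bound over the $\rcF_t$ failures: instead it isolates the event where $f^*\in\cF_t$ but $f^*\notin\rcF_t$ as the weakly steady state $S_t^0$ (the algorithm then tracks some $\gam(f)$ with $f\in\cE^*$ but $\gam(f)\not\propto\gam^*$), and bounds its expected cost by a ``regret peeling'' argument over doubling epochs (Lemma~\ref{lem:z2-c}): within an epoch the wrong tracking costs only $O(\gam_a(\cE^*)\ln t)$ pulls while the bad event has probability $O(\ln|\cF|/(\log_2 t)^{3})$, and the product summed over epochs is finite. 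Without an argument of this type your route produces an extra $n/\poly(\ln n)$ term and cannot reach the stated bound.

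Two further steps of your sketch would also need repair. First, the ``induction along the pessimistic chain'' with stabilization of $\barf_t$ in $\cE^*$ is not available: because $\beta_t$ grows, eliminated hypotheses re-enter $\cF_t$ (the paper points this out explicitly in Section~\ref{sec:crop}), so the paper never argues stabilization; it instead bounds, state by state, the expected number of pulls in each combination of observable branch ($\Fs_t,\Cf_t,\Fb_t$) and hidden state ($S_t^+,S_t^0,S_t^-$, plus $B_t$), using the concentration-plus-tracking lemmas (Lemmas~\ref{lem:ic-track} and~\ref{lem:tracking}) and, for the non-steady state, the docile-class argument of Lemma~\ref{lem:transient}. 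Second, for the \conflict{} branch the aggressive threshold $\rbeta_t=\Theta(\ln\ln t)$ means the standard ``threshold $\tau$ plus per-round tail'' counting (as in UCB, Lemma~\ref{lem:boundarmpullzero}) leaves a per-round tail of order $1/\poly(\ln n)$ that is again not summable over $n$ rounds; the paper needs the refined counting lemma (Lemma~\ref{lem:boundarmpull}), which exploits that each pull of arm $a$ increments $T_a$, to turn the tail into a geometric series and obtain the $P_2\ln\ln(n)$ term. Your sketch gets the right shape of the final bound, including the correct bounded-regret reasoning when $\gam^*=0$, but the mechanism proposed for the $\rcF_t$-related terms would fail as stated.
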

\vspace{-.8em}
\begin{proof}
  The main proof is deferred to the appendix.
  One technical challenge is to deal with \conflict in CROP via our refined confidence set $\rcF_t$.
  The failure rate of $\rcF_t$ is set $\poly(1/\ln(t))$ rather than the usual $\poly(1/t)$.\footnote{Similar aggressive definitions of confidence sets have also appeared in recent works for other purposes~\cite[e.g.][]{lattimore17theend}.}
  For example, there is an event where $\rcF_t$ fails to capture $f^*$ but $f^*$ is still in $ \cF_t$, which would lead to a $\ln(n)$ regret; we manage to prove that this scenario contributes to an $O(1)$ term \textit{in expectation} by showing that it happens with probability like $1/\ln(n)$ times (roughly speaking) using a technique that we call ``regret peeling''.
  To bound other $O(1)$ terms that are attributed to the docile class $\cD^*$, we borrow techniques from Lattimore and Munos~\cite{lattimore14bounded}.
\end{proof}

\vspace{-.7em}
Our main theorem provide a sharp non-asymptotic instance-dependent regret guarantee.
The leading term $\order{\sum_a \Delta_a \gam^*_a \log(n)}$ implies that we achieve the AO up to a constant factor.
The second term is of order $\ln\ln(n)$, which comes from our analysis on \conflict.
The remaining terms are $\order{1}$, which depends on properties of  $\psi_a({\cF})$ and $\Lam_{\min}$.
Unlike many strategies that perform forced exploration on all arms~\cite{hao20adaptive,combes17minimal} to achieve the asymptotic optimality, our bound has no dependency on the number of arms $K$ at all, even in the finite terms, but rather depends on the effective number of arms $K_\psi$.

Note that $K$-free regret bounds still happens with optimistic algorithms; e.g., in $\cF^{\text{code}}$ (defined in Section~\ref{sec:prelim}), UCB depends on $K_0$ rather than $K$, and one can add arbitrarily many cheating arms to make $K \gg K_0$.
Bounded regrets also have been shown via optimism~\cite{lattimore14bounded,guptaunified,tirinzoni2020novel}, but they are far from the AO in general.
Our novelty is to obtain instance optimality, remove the dependency on $K$, and achieve bounded regret whenever possible, simultaneously.
We make more remarks on $\ln\ln(n)$ term and how one can get rid of $\ln(|\cF|)$ and handle infinite hypothesis spaces in the appendix.

\textbf{Example: Cheating code.}
Let $\cF = \cF^{\text{code}}$ and $\sig^2=1$, and fix $f^*\in\cF$ such that $\mu^*(f^*) = 1$.
Assume $\fr{1}{2\eps} > \fr{2}{\Lam^2}$.
Then, one can show that $\gam^* = \psi(\cF) = (0,\ldots,0,\fr{2}{\Lam^2},\ldots,\fr{2}{\Lam^2})$, where the first $K_0$ coordinates are zeros, and  $\phi(\cE^*) = 0$.
We also have $|\cF| = K_0^2$, $K_\psi = \lceil \log_2(K_0)\rceil$, and $\Lam_{\min} = \eps$.
Then, using $K_0\le K$, 
\begin{align*}
\EE \Reg_n = O\del[2]{ \fr{\ln(K)}{\Lambda^2} \ln\del[2]{K n \cd \del[2]{ \fr1\eps + \fr{\ln(K)}{\Lam^2}} } + \ln(K) }~,
\end{align*}
which is $\approx\fr{\ln^2(K)}{\Lam^2} \ln(\fr n\eps)$ when taking the highest-order factors for each $(n,K,\eps,\Lam)$. We speculate that $\ln(1/\eps)$ can be removed with a tighter analysis.
We compare CROP to algorithms with AO that use forced sampling (FS in short).
Say, during $n$ rounds, FS pulls every arm $\ln\ln(n)$ times each, introducing a term $O(\eps K \ln\ln(n))$ in the regret, but let us ignore the $\ln\ln(\cd)$ factor.
For FS, the best regret bound one can hope for is $O(K\eps + \fr{\log_2(K)}{\Lam^2} \ln(n))$.
To satisfy the condition $\fr{1}{2\eps} > \fr{2}{\Lam^2}$, set $\Lam=1/2$ and $\eps=1/32$.
Then, CROP's regret is $O(\ln^2(K) \ln(n))$ whereas FS's regret is $O(K + \ln(K)\ln(n))$.
When $K\approx n$, FS has a linear regret whereas CROP has $\ln^3(n)$ regret.
If $K=2^d$ for some $d$, then the gap between the two becomes more dramatic: $O(2^d + d \ln(n))$ of FS vs $O(d^2 \ln(n))$ of CROP, an exponential gap in the nonasymptotic regime.


\section{Lower bound: Necessity of the $\Omega(\ln\ln(n))$ term}
\label{sec:lb}

One may wonder if the $\ln(\ln(n))$ term in our upper bound is necessary to achieve the asymptotic optimality up to constant factors.
We show that there exist cases where such a dependence is indeed required.
In fact, our lower bound statement is stronger; in a hypothesis class we construct for lower bound, even polynomial-regret algorithms must also pull a non-informative arm at least $\ln\ln(n)$ times.

\begin{thm}[Informal version]
  Assume the Gaussian noise model with $\sig^2 = 1$.
  There exists a hypothesis class $\cF$ and an absolute constant $C$ that satisfies the following: If an algorithm has $\EE \Reg_n \le O(n^u)$ for some $u\in \lbrack 0,1 \rparen$ under an instance $f\in\cF$, then there exists another instance $f'\in\cF$ and an arm $i$ with $\gam_i(f') = 0$ (i.e., non-informative arm) such that
  \begin{align*}
    \EE_{f'} T_i(n) = \Omega(\ln(1+(1-u)\ln(n)))
  \end{align*}
  where $\EE_{f'}$ is the expectation under the instance $f'$.
\end{thm}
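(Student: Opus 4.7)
The plan is to construct a hypothesis class $\cF$ exhibiting the ``equivalence-class conflict'' central to the \conflict branch of Algorithm~\ref{alg:crop}. Specifically, $\cF$ would contain two hypotheses $f, f'$ with $f \sim f'$ (identical best arm and optimal mean reward), non-proportional optimal allocations $\gam(f) \not\propto \gam(f')$ supported on disjoint arms $j$ and $j'$, plus auxiliary competitors $g_f \in \cC(f)$ and $g_{f'} \in \cC(f')$ whose presence forces this disjoint support structure. Crucially, $f$ and $f'$ differ only on a single ``witness'' arm $w$ with $\gam_w(f)=\gam_w(f')=0$, so $w$ is the unique arm that can distinguish $f$ from $f'$ yet is non-informative under both. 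A four-hypothesis construction in the spirit of $\cH^+$ in Figure~\ref{fig:staircase} should suffice.

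For any algorithm with $\EE_f\Reg_n\le Cn^u$, I would apply Bretagnolle--Huber between $\PP_f$ and $\PP_{f'}$: since $f$ and $f'$ agree off coordinate $w$, $\KL(\PP_{f'}, \PP_f) = \EE_{f'}[T_w(n)] \cd \KL(f'(w), f(w))$, hence
\begin{equation*}
\PP_f(A^c) + \PP_{f'}(A) \ge \tfrac{1}{2}\exp\bigl(-\EE_{f'}[T_w(n)] \cd \KL(f'(w),f(w))\bigr)
\end{equation*}
for any history-measurable event $A$. I would take $A = \{T_j(n) \ge M\}$ with $j$ the $f$-informative arm (non-informative under $f'$, i.e.\ $\gam_j(f')=0$) and $M = \Theta((1-u)\ln n)$. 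A companion change of measure against $g_f$, using only the $f$-regret assumption, shows $\PP_f(T_j(n) \ge M) \to 1$: one arranges the construction so that $f$ and $g_f$ differ only on $j$, giving $\KL(\PP_f, \PP_{g_f}) = \EE_f[T_j(n)] \cd \KL(f(j), g_f(j))$; avoiding the polynomial regret that playing $f$-optimally would incur under $g_f$ (since $a^*(g_f)\ne a^*(f)$) then forces $\EE_f[T_j(n)] \gtrsim (1-u)\ln n$ and the desired concentration via a standard Markov bound.

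Combining these two bounds, if $\EE_{f'}[T_w(n)] \le c_1 \ln(1+(1-u)\ln n)$ then $\PP_{f'}(A) \ge \tfrac14(1+(1-u)\ln n)^{-c_2}$ with $c_2 = c_1 \cd \KL(f'(w),f(w))$, so $\EE_{f'}[T_j(n)] \ge M \cd \PP_{f'}(A) = \Omega\bigl((1-u)\ln n \cd (1+(1-u)\ln n)^{-c_2}\bigr)$. The proof then concludes via a dichotomy: either $\EE_{f'}[T_w(n)] \ge c_1\ln(1+(1-u)\ln n)$ and we finish with $i := w$, or else tuning $c_1$ small enough so that $c_2 < 1$ yields $\EE_{f'}[T_j(n)] = \Omega(\ln(1+(1-u)\ln n))$, finishing with $i := j$. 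The main obstacle I anticipate is the companion lower bound $\EE_f[T_j(n)]\gtrsim (1-u)\ln n$ from a one-sided regret assumption, since the classical Graves--Lai bound requires polynomial regret on all instances; the construction must therefore be tailored so the change-of-measure against $g_f$ goes through without any assumption on $g_f$'s regret. A secondary concern is that tightening the constants to exactly match the $\ln(1+(1-u)\ln n)$ rate may require a peeling over multiple thresholds $M$, but I expect the structural argument exploiting the witness arm $w$ to remain essentially intact.
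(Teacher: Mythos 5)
There is a genuine gap, and it is exactly the one you flagged: the step ``$\PP_f(T_j(n)\ge M)\to 1$ with $M=\Theta((1-u)\ln n)$'' cannot be derived from a regret assumption on the single instance $f$. The classical change of measure against $g_f$ forces $\EE_f[T_j(n)]\gtrsim \ln n$ only if the algorithm is also assumed to have sub-polynomial regret under $g_f$, and no tailoring of the construction removes this: the algorithm that deterministically pulls $a^*(f)$ forever has zero regret under $f$, never pulls $j$, and under the equivalent partner $f'$ pulls neither $w$ nor $j$, so both branches of your dichotomy fail. (Such an algorithm does over-pull a non-informative arm under the \emph{competing} instance $g_f$, which is why the theorem itself survives, but your proof only ever exhibits the arms $w$ and $j$ under the instance $f'$.) The paper sidesteps this by placing the $O(n^u)$ assumption on the competing hypothesis ($f_1$ in Example~\ref{ex:conflict}, whose over-pulled arm is the shared optimal arm of the equivalent pair $f_2\sim f_3$) and proving a lower bound on the \emph{maximum} of three (instance, arm) pairs: after Bretagnolle--Huber between $f_2$ and $f_3$ and Markov on three of the resulting probabilities, the residual event is ``$T_2,T_3,T_4$ all small and $T_1\ge n/2$ under $f_3$,'' and its probability is transferred to $f_1$ at a cost $n^{O(1-u)}$ controlled by the conditioning event itself via the anytime likelihood-ratio concentration (Lemma~\ref{lem:KL}); the assumption is used only to bound $\PP_1(T_1\ge n/2)\le c_1 n^{u-1}$, and a Lambert-function inversion then yields the $\ln(1+(1-u)\ln n)$ rate. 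Your argument has no analogue of this step, and without it the Bretagnolle--Huber inequality between $f$ and $f'$ gives nothing.

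A secondary but also fatal issue is the construction itself. If $f\sim f'$ differ only on a witness arm $w$ and $\gam_w(f)=\gam_w(f')=0$, then the two programs \eqref{eqn:gamma} are identical once restricted to $\{\gam:\gam_w=0\}$: the best arm and optimal mean coincide, so $\cC(f)=\cC(f')$, the gaps $\Delta_a(\cdot)$ and the constraint coefficients $(f(a)-g(a))^2$ agree for every $a\neq w$, hence $\gam(f)=\gam(f')$ (or at least the optimal solution sets coincide), contradicting the requirement that the allocations be non-proportional with disjoint supports $j\neq j'$. The conflict structure you want can only occur when the witness arm is informative for at least one member of the pair, which is precisely the paper's Example~\ref{ex:conflict}: arm $4$ distinguishes $f_2$ from $f_3$, is informative under $f_3$ ($\gam_4(f_3)>0$), and non-informative under $f_2$; the conclusion then necessarily involves arms and instances beyond the single pair $(w,j)$ under $f'$, matching the three-way maximum in Theorem~\ref{thm:lb-appendix}.
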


The constructed instance for the lower bound is a variation of $\cH^+$ in Figure~\ref{fig:staircase}.
Our theorem shows that, just because an arm is noninformative, it does not mean that we can pull it a finite number of times. 

Our result also has an implication for algorithms with forced sampling.
To be specific, suppose that an algorithm $A$ performs forced sampling by requiring each arm to be pulled at least $\tau$ where $\tau$ is fixed at the beginning.
Then, to achieve sublinear regret bounds, it is required that $A$ use $\tau=\omega(\ln(\ln(n))$.
We emphasize that even $\tau = \Theta(\ln(\ln(n)))$ can suffer a polynomial regret, let alone being uniformly good.
This is because the constant factor matters and is a function of the problem.
We provide the precise constants, the full statement of the theorem, and its proof in our appendix.

\vspace{-.5em}
\section{Discussion}
\label{sec:discussion}
\vspace{-.5em}

There are improvements to be made including more examples and studying properties of alternative designs of $\phi$ and $\psi$, which we discuss more in the appendix.
Meanwhile, we make a few observations and open problems below.

\paragraph{It may not be the end of the optimism~\cite{lattimore17theend}.} \!\!
Let us forget about CROP and consider the oracle described in Section~\ref{sec:prelim}.
Consider $\cF^\code$  with $\fr{1}{2\eps} > \fr{2}{\Lam^2}$.
Note that UCB in fact has a regret bound of $O(\min\{\fr{K}{\eps} \ln(n), \eps n\})$; the first argument can be vacuous (i.e., $\ge n$) in which case we know the regret so far is $\eps n$ since UCB by design only pulls arm $i$ with $\Delta_i(f)=\eps$.
The oracle has regret $\Theta(\min\{\fr{\ln(K)}{\Lambda^2}\ln(n), n\})$ where we have $n$ rather than $\eps n$ because she pulls informative arms.
However, this implies that, until $n\lapp \fr{1}{\Lambda^2}$, the oracle has a linear regret.
In fact, all known algorithms with AO would be the same, to our knowledge. 
This is not just a theoretical observation.
In Hao et al.~\cite[Figure 1]{hao20adaptive}, their algorithm with AO performs worse than an optimistic one until $n \approx 2000$. 
We ask if one can achieve the minimum of the two; i.e., obtain a finite-time regret bound of $O(\min\{\fr{\ln(K)}{\Lambda^2}\ln(n), \eps n\})$.
This is a reminiscent of the ``sub-UCB'' criterion by Lattimore~\cite{lattimore18refining} (also discussed in Tirinzoni et al.~\cite{tirinzoni2020novel}) in the sense that we like to perform no worse than UCB.
For $\cF=\cF^{\code}$, we provide a positive answer in the appendix, but a more generic algorithm that enjoys the AO and performs no worse than UCB for any $\cF$ is an open problem.

\vspace{-.3em}
\paragraph{The worst-case regret.}
For more on the worst-case regret and how it is different from the instance-dependent regret, see our related work section in the appendix.
The example above shows that the oracle suffers a linear worst-case regret over the family of problems $\{\cF^{\code,\eps,\Lam}: \eps, \Lam \le \lparen0,1/2\rbrack\}$.
That is, for any given problem complexity $n$ and $K$, one can always find $\eps$ and $\Lam$ for which the oracle suffers a linear regret.
This is in stark contrast to UCB that has $\wtil O\del{\sqrt{Kn}}$ worst-case regret over this family.
In fact, the oracle suffers a linear regret in linear bandits, too.
In Example 4 of \citet{lattimore17theend}, it is easy to see that the oracle has regret $\min\cbr{2\alpha^2 \ln(n), n}$ when $\eps$ satisfies $2/\eps > 2\alpha^2$.
Thus, given $n$, this bandit problem with $\alpha \approx \sqrt{n}$ for some $\eps$ with $2/\eps > 2\alpha^2$ would make the oracle suffer a linear regret for the instance $\theta=(1,0)$.
To our knowledge, all known AO algorithms share the same trait as they do not have any device to purposely avoid pulling the informative arm in the small-$n$ regime.

We believe the issue is not that we study instance-dependent regret but that we tend to focus too much on the leading term w.r.t. $n$ in the asymptotic regime, which we attribute to the fact that it is the one where the optimality can be claimed as of now.
Less is known about the optimality on the lower order terms along with other instance-dependent parameters.
This is studied a bit more in pure exploration problems~\cite{simchowitz2017simulator,katzsamuels20anempirical}.
We hope to see more research on precise instance-dependent regret bounds in nonasymptotic regimes and practical structured bandit algorithms.

\section*{Broader Impact}

Our study is mainly about a novel approach to solve structured bandits algorithms where we try to overcome some shortcomings of existing methods.
Algorithmic developments in bandits have a huge impact in many potential applications including dose-finding trials.
In this application, a structured bandits that encode a proper inductive bias and can help resolve health issues of many people by significantly reducing time/trials needed to find dosage or the right types of drugs, leading to maximum efficacy with minimal side-effects.

\section*{Acknowledgments}

We thank the anonymous reviewers, Lalit Jain, Kevin Jamieson, Akshay Krishnamurthy, Tor Lattimore, Robert Nowak, Ardhendu Tripathy, and the organizers and participants of RL Theory Virtual Seminars for providing valuable feedback and helpful discussions.

\putbib[library-shared]
\end{bibunit}

\appendix
\addcontentsline{toc}{section}{Appendix} 

\clearpage

\part{Appendix} 

\parttoc 
\begin{bibunit}[plainnat]


\section{Related work}
\label{sec:related}

Structured bandits, generally defined, consist of bandit problems where there exist pieces of side-information or constraints in (and among) the mean rewards.
While, of course, the standard $K$-armed bandit problem~\cite{thompson33onthelikelihood} is a special case, researchers usually use the term structured bandits for nontrivial structure (e.g., beyond simple constraints like having mean rewards in $[0,1]$).
We group related work by the kinds of guarantees each study aim to achieve.

\paragraph{Worst-case regret bounds.}
Suppose we are given a predefined family of problems $\Gamma = \{\cF\}$,
The worst-case regret bound of an algorithm $\pi$ over the family $\Gamma$ is the one that answers the following question: given a set of problem complexity parameters like $n$ and $K$ (and $d$ in linear bandits, for example), what is the largest regret bound that $\pi$ can suffer over the family $\Gamma$?
As an example, for the linear bandit problem with a fixed arm set, the family $\Gamma$ contains any bandit problem $\cF$ with an arm set $\cA$ and a feature map $v:\cA \rarrow \RR^d$ for which every $f\in\cF$ and $a\in\cA$ satisfy $f(a) = \la \theta, v(a)\ra, \forall a\in\cA$ for some $\theta\in\RR^d$.

Many works in the structured bandits focus on specific parametric reward models, such as linear and generalized linear models~\cite{rusmevichientong10linearly,ay11improved,dani08stochastic,filippi10parametric}, and regret bounds of order $\sqrt{n}$ has been obtained in these settings.
Russo and Van Roy~\cite{russo13eluder} propose a general notion called eluder dimension that facilitates analysis for structured bandits for general function classes.
For nonlinear reward models, worst-case regret guarantees have been obtained on reward structures such as unimodality~\cite{yu11unimodal}, convexity~\cite{agarwal11stochastic}, and Lipschitzness~\cite{kleinberg05nearly}.

\paragraph{Instance-dependent regret bounds.}
Instance-dependent regret bounds aim to capture finer structures of problem instances beyond the complexity of hypothesis classes.
In the asymptotic regime (i.e., fixing a problem instance and letting time horizon $n$ go to infinity), many works~\cite{agrawal89asymptotically,graves1997asymptotically,lattimore17theend,hao20adaptive} derive matching asymptotic regret upper and lower bounds for \textit{uniformly-good} algorithms (defined in Section~\ref{sec:prelim}).
However, their analysis cannot be easily converted to obtain a finite-sample guarantee.

In the finite-sample regime, instance-optimal algorithms under specific model classes have been developed, such as unimodal reward~\cite{combes14unimodal} and Lipschitz rewards~\cite{magureanu2014lipschitz}.
Under general reward function classes, Combes et al.~\cite{combes17minimal} provide an asymptotically optimal algorithm that is amenable to non-asymptotic analyses, which is later extended to reinforcement learning~\cite{ok18exploration}.
However, their finite sample guarantees depend strongly on the size of the action space due to the forced sampling as discussed in Section~\ref{sec:analysis}, although such a dependence goes away in the asymptotic regime.

We remark that the forced sampling is not bad when the problem is \textit{unstructured } because the sampling complexity must scale with $K$ anyways.
For example, in the pure-exploration version of the bandit problem~\cite{bubeck09pure,bubeck11pure-tcs}, the algorithm by Garivier and Kaufmann \cite{garivier16optimal} relies on forced sampling, but there is no known evidence that the forced sampling degrades its performance while without it their algorithm can provably fail.

\paragraph{Other instance-dependent regret bounds.}
Many algorithms do not achieve the asymptotic optimality (not even within a constant factor) but possess instance-dependent guarantees in structured bandits, which is mostly based on the optimism, defined in~\eqref{eq:optimism}, or often elimination (e.g., see Jamieson~\cite[Section 6]{jamieson20informal} for linear bandits and Tirinzoni et al.~\cite{tirinzoni2020novel} for generic structure).
We remark that, unlike the standard $K$-armed bandits without structure, elimination-based approaches in structured bandits can have a different regret bound from that of the optimism (i.e., one is better than the other and vice versa depending on the problem instance) as described in Tirinzoni et al. \cite{tirinzoni2020novel}.

Many linear bandit studies obtain regret guarantees that depend on the gap $\blue\Delta$ between the best mean reward and the second-best mean reward~\cite{dani08stochastic,filippi10parametric,ay11improved}.
As another example, in Lipschitz bandits, regret guarantees that depend on the zooming dimension or near-optimality dimension have been shown~\cite{kleinberg08multi,bubeck11x}.
Other particular structures including univariate linearity~\cite{mersereau09structured}, global bandits~\cite{atan15global,shen18generalized}, regional bandits~\cite{wang18regional}, and $K$-armed bandits with side-information~\cite{bubeck13bounded,vakili13achieving,degenne18bandits} show that it is possible to achieve bounded regret.

For the generic structures, many studies provide finite-time instance-dependent guarantees that do not achieve a constant-factor asymptotically optimality~\cite{lattimore2014bounded,azar13sequential,tirinzoni2020novel,guptaunified}, except for known special cases where the reward function class has a factorized representation across different arms, i.e., in the settings of Lai and Robbins~\cite{lai85asymptotically}, and Bernetas and Katehakis~\cite{burnetas1996optimal}.
These studies, however, have regret bounds that reflect the structure of the instance beyond the gap $\Delta$ mentioned above.
We conjecture that the suboptimality in their bounds is rooted in their confidence sets that are an intersection of confidence intervals for each arm, let alone their sampling strategies.
In contrast, CROP maintains a confidence set that captures the structure of the hypothesis class, which, as we show in the proof, allows us to connect the constraints of $c(f)$ in~\eqref{eqn:gamma} to the concentration inequality and thus to the confidence set as well.

Although we focus on the finite hypothesis space for simplicity, our ultimate goal of the paper is to find a generic algorithmic principle for any hypothesis space.
Our algorithm CROP achieves the asymptotic optimal regret with a constant-factor, enjoys bounded regret whenever possible, and has mild dependence on $K$, all thanks to our novel forced-sampling-free design.
This partially resolves the open question raised by Tirinzoni et al. \cite[Section 7]{tirinzoni2020novel} where they ask if one can design confidence-based strategies (as opposed to solving the optimization problem~\eqref{eqn:gamma} along with forced sampling) that are optimal for general structures with good finite-time performance.
We believe that such an advancement is not an artifact of the finite hypothesis space setting but the fact that we rely on the pessimism, the key novelty in our algorithm design.

\section{Details of the arguments}

In this section, we provide more explanations on our argument in the main body in the order they appear, section by section.

\subsection{More on problem definition and preliminaries}
\label{subsec:more-prelim}

Let us elaborate more on the cheating code example in Section~\ref{sec:prelim}.

\textbf{The cheating code example.}
Let us elaborate more on the cheating code example in Section~\ref{sec:prelim}.
To understand the optimization problem $c(f)$ for the cheating code class, let us consider a simple case where $K_0=2$ and $k=1$.
Without loss of generality, let $f^* = (1, 1-\eps, 0)$.
In this case, the only competing hypothesis is $(1, 1+\eps, \Lam)$.
Then, $c(f^*)$ is written as follows:
\begin{align*}
\begin{array}{rrl}
c(f^*) =   \displaystyle\min_{\gam_1=0, \gam_2\ge0, \gam_3 \ge0}
& \multicolumn{2}{l}{
  \eps\gam_2 + 1\cd \gam_3
}
\\ \mbox{s.t.}
&   \gam_2 \fr{(2\eps)^2}{2} + \gam_3 \fr{\Lam^2}{2} &\ge 1
\end{array}
\end{align*}
One can see that a solution $\gam_2 =\alpha\cd \fr{1}{(2\eps)^2/2}$ and $\gam_3 = (1-\alpha)\cd \fr{1}{\Lam^2/2}$ for some $\alpha\in[0,1]$ is feasible and satisfies the constraint with equality.
Furthermore, one can see that any feasible solution that cannot be expressed by the solution above has a strictly larger objective value due to $\gam\ge0$ and $\eps>0$.
The objective function is then $\alpha\cd \fr{1}{2\eps} + (1-\alpha) \cd \fr{2}{\Lam^2}$.
This means that whenever $\fr{1}{2\eps} > \fr{2}{\Lam^2}$, setting $\alpha=0$ achieves the minimum.

For generic $K_0$, we first provide an example of $f^* = (1, 1-\eps, 1-\eps, 1-\eps, 0, 0)$ for the case of $K_0 = 4$.
\begin{align}\label{eq:opt-ex}
\begin{aligned}
\begin{array}{rllllrl}
c(f^*) = \displaystyle\min_{\gam_1=0, \gam_{2:K}\in\lparen 0,\infty\rbrack^{K-1}}
& \multicolumn{4}{l}{
  \eps\cd \del{\sum_{a=2}^{K_0} \gam_a} + 1\cd\del{\sum_{a=K_0+1}^K \gam_a}
}
\\ \mbox{s.t.}
&\gam_2 \fr{(2\eps)^2}{2} & & & &+ \gam_6 \fr{\Lam^2}{2}  &\ge 1
\\   & &+ \gam_3 \fr{(2\eps)^2}{2} &  & + \gam_5 \fr{\Lam^2}{2} & &\ge 1
\\   &  & &+ \gam_4 \fr{(2\eps)^2}{2} &  + \gam_5 \fr{\Lam^2}{2} &+ \gam_6 \fr{\Lam^2}{2}  &\ge 1
\end{array}
\end{aligned}
\end{align}
We use shorthands $\gam_{i:j}$ for $\gam_i, \gam_{i+1}, \ldots, \gam_j$.
It is now nontrivial to see how the optimal solution would look like.
The following proposition provides a characterization of the optimal solution.
\begin{prop}\label{prop:opt-solution}
  Consider $\cF^\code$.
  Let $K_0 = 2^k$ for some integer $k$.
  We claim that if $\fr{1}{2\eps} > \fr{2}{\Lam^2}$, then the solution of the optimization problem~\eqref{eqn:gamma} is
  \begin{equation}
  \gam^{\normalfont\dagger}_a =
  \begin{cases}
  0, & a \in [K_0] \\
  \frac{2}{\Lambda^2}, & a \in \cbr{ K_0 + 1, \ldots, K}
  \end{cases}~.
  \label{eqn:opt-code}
  \end{equation}
\end{prop}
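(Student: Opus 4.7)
The plan is to establish optimality of $\gam^\dagger$ by exhibiting a matching feasible dual certificate for the LP~\eqref{eqn:gamma} and invoking strong LP duality. First I make the primal explicit by enumerating $\cC(f^*)$. Write $a^* := a^*(f^*) \in [K_0]$ and let $b_i(m)$ denote the $i$-th bit of the binary representation of $m \in \{0, \ldots, 2^k-1\}$. The constraint $g(a^*) = f^*(a^*) = 1$ combined with the cheating-arm encoding forces every $g \in \cC(f^*)$ to be of the form $g_j := (h(a^*, j),\, \Lam \cdot b(j-1))$ for some $j \in [K_0] \setminus \{a^*\}$, so $\cC(f^*) = \{g_j : j \neq a^*\}$. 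Each $g_j$ differs from $f^*$ only on arm $j$ (by $2\eps$) and on cheating arms $K_0+i$ with $b_i(j-1) \neq b_i(a^*-1)$ (by $\Lam$). With $\sig^2 = 1$, \eqref{eqn:gamma} becomes
\begin{align*}
& \min_{\gam \geq 0,\, \gam_{a^*}=0} \ \sum_{j \in [K_0] \setminus \{a^*\}} \eps\,\gam_j + \sum_{i=1}^k (1 - \Lam b_i(a^*-1))\,\gam_{K_0+i} \\
& \text{s.t. } \ 2\eps^2 \gam_j + \tfrac{\Lam^2}{2} \sum_{i:\, b_i(j-1) \neq b_i(a^*-1)} \gam_{K_0+i} \geq 1 \quad \forall j \in [K_0] \setminus \{a^*\}.
\end{align*}

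Next I check feasibility of $\gam^\dagger$ and set up the dual. Substituting $\gam^\dagger$ into the $j$-th primal constraint, the left-hand side reduces to the Hamming distance between $j-1$ and $a^*-1$, which is at least $1$ since $j \neq a^*$; hence $\gam^\dagger$ is feasible with primal value $\sum_{i=1}^k \tfrac{2(1 - \Lam b_i(a^*-1))}{\Lam^2}$. The dual has one variable $\mu_j \geq 0$ per competing hypothesis $g_j$, objective $\max \sum_j \mu_j$, and constraints
\begin{align*}
& \mu_j \leq \tfrac{1}{2\eps} \quad \forall j \in [K_0] \setminus \{a^*\}, \\
& \tfrac{\Lam^2}{2} \sum_{j:\, b_i(j-1) \neq b_i(a^*-1)} \mu_j \leq 1 - \Lam b_i(a^*-1) \quad \forall i \in [k],
\end{align*}
arising from the arm-$j$ and cheating-arm-$K_0+i$ dual rows respectively. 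Using $K_0 = 2^k$, for each $i \in [k]$ let $J_i \in [K_0] \setminus \{a^*\}$ be the unique index whose binary code equals $a^*-1$ with only bit $i$ flipped, and set $\mu^\dagger_{J_i} := \tfrac{2(1 - \Lam b_i(a^*-1))}{\Lam^2}$ with all other $\mu^\dagger_j := 0$.

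Dual feasibility is then immediate: the first family holds because $\mu^\dagger_{J_i} \leq \tfrac{2}{\Lam^2} < \tfrac{1}{2\eps}$, which is exactly the hypothesis $\tfrac{1}{2\eps} > \tfrac{2}{\Lam^2}$; the second family holds with equality since only $J_i$ among $\{J_1, \ldots, J_k\}$ differs from $a^*-1$ in bit $i$. The dual value $\sum_i \mu^\dagger_{J_i}$ matches the primal value at $\gam^\dagger$, so strong LP duality yields optimality. The main delicate step is the enumeration of $\cC(f^*)$; once that is in place, the dual construction is essentially forced by the single-bit-flip geometry of the binary cube, and the assumption on $(\eps,\Lam)$ enters precisely to certify that cheating arms are cheaper per unit of information than the arms in $[K_0]$, thereby preventing the first family of dual constraints from binding.
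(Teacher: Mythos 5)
Your proof is correct, but it takes a genuinely different route from the paper's. You make the LP explicit (after enumerating $\cC(f^*)$ as the hypotheses $g_j=(h(a^*,j),\Lam\cdot b(j-1))$, $j\in[K_0]\setminus\{a^*\}$ — the same enumeration the paper uses implicitly, and I checked it is right) and then certify optimality of $\gam^\dagger$ by exhibiting a feasible dual solution supported on the $k$ single-bit-flip competitors with matching objective value. The paper instead argues purely in the primal, in two steps: an exchange argument showing that any feasible $\gam$ with mass on an arm in $[K_0]\setminus\{a^*\}$ can be strictly improved by shifting that mass onto a suitable cheating arm (this is exactly where $\fr{1}{2\eps}>\fr{2}{\Lam^2}$ enters), so an optimum is supported on the cheating arms; and then a coordinatewise lower bound $\gam_{K_0+j}\ge\fr{2}{\Lam^2}$ derived from precisely the single-bit-flip hypotheses that you use as your dual support. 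So the two proofs rest on the same structural facts but package them differently: your dual certificate is shorter, makes the role of the assumption transparent (it is exactly what keeps the arm-$j$ dual constraints slack), and handles the general objective coefficients $1-\Lam b_i(a^*-1)$ cleanly; the paper's exchange argument is more elementary (no duality) and yields a bit more structure, namely that no optimizer puts mass on $[K_0]$, which combined with the coordinatewise bound pins down $\gam^\dagger$ uniquely. Since the proposition asserts ``the solution,'' you may want to add one line recovering uniqueness from your certificate via complementary slackness: the dual constraints for arms in $[K_0]\setminus\{a^*\}$ are strictly slack at $\mu^\dagger$, so every optimal primal puts zero mass there, and then the tight single-bit-flip constraints together with the strictly positive cheating-arm costs (using $\Lam\le 1/2$) force $\gam_{K_0+i}=\fr{2}{\Lam^2}$ for all $i$.
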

\begin{proof}
  For clarity, our convention is that the coordinate $K$ is for the least significant bit of the code.
  For example, when $K_0 = 4$, then the hypothesis $f$ with $a^*(f) = 2$ is $f = (1,1+\eps,1-\eps,1-\eps,0,\Lam)$.
  
  The plan is to suppose that $u$ is a feasible solution to the optimization problem $c(f^*)$ for $\cF^\code$ for which there exists a coordinate $a\in[K_0] \sm \cbr{a^*}$ with $u_a > 0$.
  Then, we show:
  \begin{itemize}
    \item First, we prove that it is possible to construct a feasible solution $v$ that is strictly better than $u$ where $v$ does not have nonzero entries for the first $K_0$ coordinates; this proves that the optimal solution must be supported only on the cheating arms.
    \item Second, we show that $\gam^\dagger$ is the optimal solution.
  \end{itemize}
  
  As a starter, consider the example of~\eqref{eq:opt-ex}.
  Suppose we have a feasible solution $u=(0, \fr{1}{4{\eps}^2}, 0, 0, \fr{2}{\Lam^2}, \fr{1}{\Lam^2})$.
  Then, the coordinate $q = 2$ is nonzero.
  Consider a hypothesis $g$ that has arm $q$ as the best arm: $g = (1, 1+\eps, 1-\eps, 1-\eps, 0, \Lam)$.
  The arm $6$ is the cheating arm whose mean reward is different from that of $f^*$; let $j=2$ so that $6=K_0+j$.
  Then, we can modify $u$ by zeroing out $\gam_q$ and adding more mass to $\gam_{K_0+j}$ so that the first constraint of~\eqref{eq:opt-ex} is satisfied.
  This modification leads to $h^* = (0, 0, 0, 0, \fr{2}{\Lam^2}, \fr{2}{\Lam^2})$. 
  Note that this operation does not make other constraints violated because the variable $\gam_q$ appears in one constraint only, and adding more mass to $\gam_{K_0+j}$ never harms.
  We now generalize this example.
  
  Let $q \in [K_0] \sm\cbr{a^*}$ satisfy $u_q > 0$.
  Consider $g\in\cC^*$ with $a^*(g) = q$.
  Let ${j} \in[k]$ be the largest index $j$ for which $g(K_0+j) \neq f^*(K_0 + j)$ whose existence is certified by the definition of $\cF^\code$.
  Let $e_i$ be the $i$-th indicator vector.
  Let $\delta = \fr{2\eps^2 u_q}{\Lam^2/2}$.
  We then define
  \begin{align*}
  \blue{h^*(u,q)} = u - u_q e_q + \delta\cd e_{K_0 + j }
  \end{align*}
  One can show that our choice of $\delta$ indeed ensures that $h^*(u,q)$ is a feasible solution.
  We now show that this modified version $h^* := h^*(u,q)$ has a strictly smaller objective function value:
  \begin{align*}
  \del{\sum_{a=1}^{K_0}\eps u_a + \sum_{a=K_0+1}^{K} \Delta_a u_a} - \del{\sum_{a=1}^{K_0}\eps  h^*_a + \sum_{a=K_0+1}^{K} \Delta_a  h^*_a}
  &= \eps u_q  - \Delta_{j } \dt
  \\&\ge \eps u_q  - \dt
  \tag*{($\because~ \Delta_{j} \le 1$)}
  \\&= \eps u_q  - \fr{2\eps^2 u_q}{\Lam^2/2}
  \\&= \eps u_q(1  - \fr{2\eps}{\Lam^2/2})
  \\&> 0 \tag*{($\because$~ the assumption of the proposition)}
  \end{align*}
  Therefore, one can perform the following one sweep of coordinate descent.
  \kjunbox{
    \begin{itemize}
      \item Input: $f^*$, $u$: a feasible solution of $c(f^*)$.
      \item $v^{(1)} \leftarrow u$
      \item For $a = 1,\ldots, K_0$,
      \begin{itemize}
        \item $v^{(a+1)} \leftarrow h^*(v^{(a)},a)$
      \end{itemize}
      \item Output: $v^* := v^{(K_0+1)}$
    \end{itemize}
  }
  We conclude the first part of the proof by the following observations:
  \begin{itemize}
    \item The output $v^*$ above has a strictly smaller objective function than that of the input $u$ whenever $\exists a\in[K_0]: u_a > 0$.
    \item The output $v^*$ satisfies that $v^*_{1:K_0} = 0$.
  \end{itemize}
  
  We now show the second part of the proof.
  Let $\gam$ be a feasible solution that is supported only on the cheating arms.
  We claim that $\forall j \in [k]$, the value $\gam_{K_0+j}$ must be at least $\frac{2}{\Lambda^2}$.
  
  The reason is that, given $j\in[k]$, we can find $g\in\cC^*$ for which $(g(K_0+1),\ldots, g(K))$ differs from $(f^*(K_0+1),\ldots,f^*(K))$ at the coordinate $j$ only, by the definition of $\cF^\code$; i.e., the binary representation of $a^*(g)-1$ differs from that of $a^*-1$ only at the $(k-j+1)$-th least significant bit.
  In the example of~\eqref{eq:opt-ex}, for $j=1$, $g=(1,1+\eps,1-\eps,1-\eps,0,\Lam)$ and for $j=2$, $g=(1,1-\eps,1+\eps,1-\eps,\Lam,0)$.
  Then, the constraint induced by that hypothesis $g$ is:
  \[
  \gamma_{a^*(g)} \frac{(2\eps)^2}{2} + \gamma_{K_0+j} \frac{\Lambda^2}{2} \geq 1.
  \]
  Plugging the fact that $\gamma_{a^*(g)} = 0$,  we have $\gamma_{K_0+j} \frac{\Lambda^2}{2} \geq 1$, implying that $\gamma_{K_0+j} \geq \frac{2}{\Lambda^2}$.
  This proves the claim above, establishing a coordinate-wise lower bound on $\gam$.
  
  We observe that the objective function of this lower bound on $\gam$ is a lower bound for the optimal solution, which is achieved by $\gam^\dagger$; this concludes the proof.
\end{proof}

\subsection{More on CROP}

We here discuss more on the design choices for CROP.
We do not claim our design of $\psi$ and $\phi$ is the best; fine-tuning of those and the algorithm itself is left as future work.
In what follows, we focus on describing our intention on the current design \textit{at the time of development}.

\textbf{The design of $\psi(f)$ in \fallback.}
The reason why we now allow $\gam_{a^*(f)} \neq 0$ is for the following case where $\eps>0$ and we boldface the best arms:
\begin{center}\begin{tabular}{C{5ex}ccccc}\hline
    Arms & A1 & A2 & A3 \\ \hline
    $f_1$ &  \textbf{1}   & .25 & .25 \\
    $f_2$ &  \textbf{.75} & .25 & .25 \\
    $f_3$ &  0   & .25~--~2$\eps$ & \textbf{.25~--~$\eps$} \\
    $f_4$ &  0   & \textbf{.25} & 0 \\ \hline
  \end{tabular}
\end{center}
Suppose $f^* = f_2$ and $\eps>0$ is small enough.
At the beginning, $\cF_t = \{f_1,f_2,f_3,f_4\}$, $\tilcF_t = \{f_1\}$, $\barf_t = \{f_3\}$.
If we do not allow $\gam_{a^*(f)}$ to be nonzero in~\eqref{eq:psi}, then $\psi(\barf_t)$ will assign a very large number of pulls to arm A2 whereas pulling A3 will eliminate $f_4$ quickly.
On the other hand, those troublesome hypotheses $f_3$ and $f_4$ belong to the docile class w.r.t. $f_2$, and incurring high regret for those (albeit finite terms) seems unreasonable.

We have a constraint $\gam \succeq \phi(f) \vee \gam(f)$ in~\eqref{eq:psi}, which we call the \textit{extension} constraint.
The intention is, if $f$ is the ground truth, we will have to make $\gam_a(f) \ln(n)$ pulls for the informative arms $\{a: \gam_a(f) > 0\}$ anyways, so we add it to the constraint.
This guards against to the case where we inadvertently pull noninformative arms $\{a: \gam_a(f) = 0\}$ too much.
Our understanding is that, in general, this does not affect the regret bound too much, at least in the current analysis.

\subsection{More on analysis}

The proof of Theorem~\ref{thm:main} can be found in Appendix~\ref{sec:proofs}.

\begin{remark}
  Note we have a $\ln(|\cF|)$ dependence, which comes from the naive union bounds.
  One can extends CROP to a larger or even infinite hypothesis space using the covering number argument as done in Foster et al.~\cite[Lemma 4]{foster18practical}, although we have focused on the finite hypothesis for simplicity.
\end{remark}

\subsection{More on discussion}
\label{sec:more-discussion}
We provide missing details from the discussion section.

\textbf{Regret no more than the optimism.}
Consider $\cF^{\text{code}}$.
Assume $f^* = (1,1-\eps,\ldots,1-\eps,0, \ldots, 0) \in \cF^\code$ without loss of generality, and assume $\fr{1}{2\eps} > \fr{2}{\Lam^2}$.
The goal is to achieve a regret bound of $\EE \Reg_n = O\del{\min\cbr{ \fr{\ln(K)}{\Lam^2}\ln(n), \eps n}}$, which is depicted in Figure~\ref{fig:regretnomore}.
\begin{figure}
  \centering
  \includegraphics[width=.35\linewidth,valign=t]{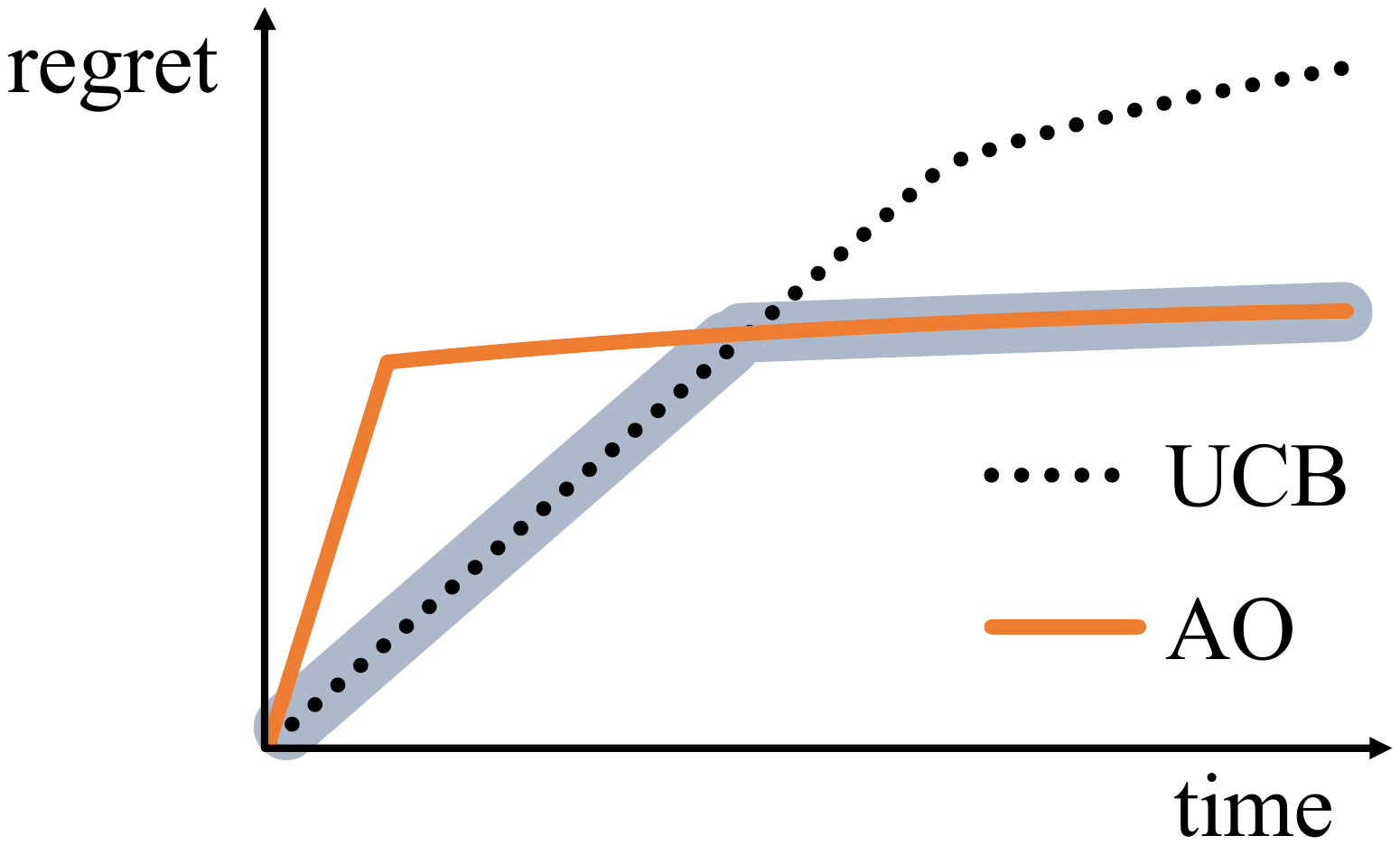}
  \caption{A cartoon showing the regret bound of UCB and AO where AO is the asymptotically optimal oracle described in Section~\ref{sec:prelim}. The minimum of the two curves is shaded. Can we achieve this regret bound? }
  \label{fig:regretnomore}
\end{figure}
Note that, in the fixed budget setting (i.e., the target $n$ is given before running the algorithm), it is trivial to perform no worse than the optimism.
Specifically, check the regret bound of the oracle or those that mimic the oracle (simply call them the oracle, hereafter) and see if it is larger than $\eps n$.
If true, then simply pull any of the first $K_0$ arms uniformly throughout; otherwise, invoke the oracle.

For achieving the goal in the anytime setting, note that the optimism, such as UCB1~\cite{auer02finite} run without knowing the structure or UCB-S~\cite{lattimore14bounded}, achieves an anytime regret bound of
\begin{align*}
\EE \Reg_n \le \min\cbr{ c_1 \fr{K}{\eps} \ln(n),~ \eps n}
\end{align*}
for $n\ge2$ and some numerical constant $c_1$.
The oracle achieves an anytime regret bound of
\begin{align*}
\EE \Reg_n \le  \min\cbr{ c_2 \fr{\ln(K)}{\Lam^2} \ln(n),~ n}
\end{align*}
for $n\ge2$ and some numerical constant $c_2$.

Suppose $K$ is large enough so that $c_1 \fr{K}{\eps} > c_2 \fr{\ln(K)}{\Lam^2}$.
Let $\blue{t_0}$ be the $t$ such that $c_2\fr{\ln(K)}{\Lam^2}\ln(t) = \eps t$, the time step after which the oracle outperforms UCB.
The idea is simple: run UCB up to time step $t \le t_0$ and then start the oracle as if we are starting from the beginning (in words, throw away all the samples so far).
Then,
\begin{align*}
\EE \Reg_n &= \one\{n \le t_0\} \eps n + \one\{n > t_0\} \EE\sbr{{ \eps t_0 + c_2 \fr{\ln(K)}{\Lam^2}\ln(n - t_0)}}
\\&\le  \one\{n\le t_0\} 2\cd \eps n
+ \one\{n>t_0\} 2\cd c_2\fr{\ln(K)}{\Lam^2} \ln (n)
\\&\le 2\cd \min\cbr{c_2 \fr{\ln(K)}{\Lam^2}\ln(n), \eps n}~,
\end{align*}
which achieves our goal.

Note, however, that this simple strategy was possible because we knew $\eps$ for the given class $\cF^\code$.
Consider $\blue{\cF^{\text{code2}}} := [1/2,1]^{K_0} \times \{0,\Lam\}^{k}$ where the last $k=\lceil\log_2(K_0)\rceil$ arms are binary codes encoding the best arm index (break ties with the smallest index).
Then, the class contains hypotheses that have $\eps$ gap for the first $K_0$ arms for any $\eps$.
Even if we do know that $f^*$ is one of those cases, we do not know $\eps$ and thus have to adaptively decide when to start pulling the informative arms via the rewards collected throughout.
Achieving the asymptotic optimality while maintaining the standard worst-case regret (e.g., $\sqrt{dn\ln(K)}$ for linear bandits) seems to be an interesting open problem.

\textbf{Future work.}
Our study unlocks numerous open problems besides achieving both the worst-case regret and asymptotic optimality.
Since many bandit studies have focused on the leading term $\ln(n)$, the optimal sampling strategy that minimize the arm pulls of those noninformative arms is not studied well.
We like to study the optimality of the number of arm pulls of noninformative arms w.r.t. not just $n$ but also problem-dependence parameters such as information gaps, which tend to matter when the number of arms is very large.
Towards practical algorithms, we believe this is more important than getting the exact asymptotic optimality.
Furthermore, we like to investigate if we can extend our pessimism to more popular structures such as linear bandits or Lipschitz bandits so one can achieve the asymptotic optimality without forced sampling (and without the dependence on the number of arms).


\section{Upper bound proof}
\label{sec:proofs}

In this section, we discuss the proof of Theorems~\ref{thm:main} and auxiliary lemmas for it.

\begin{thm}  \label{thm:main-appendix}
  Let $(\alpha,\ralpha,z,\rz) = (2,3,|\cF|,|\cF|)$.
  Suppose we run \crop with hypothesis class $\cF$ with the environment $f^*\in\cF$.
  Then, \crop has the following anytime regret guarantee: $\forall n\ge2$,
  \begin{align*}
  \EE\Reg_n
  &\le c_1 \cd \del{  P_1 \ln(n) +  P_2 \ln(\ln(n))
    +  P_3  \del{\ln(|\cF|)  +  \ln\del{ Q_1 }}
    + K_\psi \Delta_{\max} }~,
  \end{align*}
  where $c_1$ is a numerical constant, and
  \begin{align*}
  P_1 = \sum_a \Delta_a \gam^*_a,
  ~ P_2 = \sum_{a} \Delta_a \phi_a(\cE^*),
  ~ P_3 = \sum_a \Delta_a \psi_a({\cF})
  \end{align*}
  and $\blue{Q_1}= \Lam_{\min}^{-2} + K_\psi(1+  \max_a \psi_a({\cF}))$.
  Furthermore, when $\gam^* = 0$, we have $P_1=P_2=0$, achieving a bounded regret.
\end{thm}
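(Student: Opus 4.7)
The plan is to follow the standard template for bandit regret upper bounds: establish a high-probability good event, decompose the regret according to which branch of CROP is entered at each round, and bound each branch's contribution separately. First, I would construct the concentration event $\cE_1 = \{f^* \in \cF_t \text{ for all } t \le n\}$ via sub-Gaussian tail bounds on the squared loss $L_t(\cdot)$, in the spirit of Agarwal et al.; with the chosen threshold $\beta_t = 4\sigma^2 \ln(zt^\alpha)$ and $z = |\cF|$, $\alpha = 2$, a union bound over $\cF$ followed by $\sum_t t^{-2} < \infty$ shows $\Pr(\cE_1^c)$ contributes $O(1)$ to expected regret. In parallel I would define $\cE_2$, the analogous event for the refined set $\rcF_t$ with threshold $\rbeta_t = 4\sigma^2 \ln(\rz (\log_2 t)^{\ralpha})$; here the per-round failure probability is only $\operatorname{poly}(1/\ln t)$, which is crucial — it is exactly this looseness that we pay for later via the $\ln\ln(n)$ term, but it buys us the CONFLICT elimination we need.

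Next I would handle the simpler branches. EXPLOIT contributes zero regret on $\cE_1$, since $a^*(\cF_t)$ being a singleton combined with $f^* \in \cF_t$ forces the pulled arm to equal $a^*(f^*)$. For FEASIBLE, by definition $\pi_t = \gam(\barf_t)$ satisfies $\sum_a \gam_a(\barf_t) (\barf_t(a) - f(a))^2 / (2\sigma^2) \ge 1$ for every $f \in \tilcF_t$; combined with the tracking rule $a_t = \arg\min_a T_a(t-1)/\pi_{t,a}$, which ensures the realized pull counts deviate from $\pi_t \cdot t$ by at most $O(K_\psi)$ in any coordinate in its support, repeated FEASIBLE rounds force either $\barf_t$ or the entire $\tilcF_t$ to exit $\cF_t$ after roughly $\beta_t$ rounds. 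A potential-function argument tracking the elimination of equivalence classes in $\cF$ then shows that the cumulative regret from FEASIBLE rounds is $P_1 \ln(n) + O(P_3 \ln|\cF|)$ asymptotically, with the leading term coming from rounds where $\barf_t \in \cE^*$ (which is the steady-state behavior).

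For FALLBACK, I would argue that entry is triggered only when the feasibility constraint fails, meaning some $f \in \tilcF_t$ is not distinguishable from $\barf_t$ under $\gam(\barf_t)$; the remedy $\pi_t = \psi(\barf_t)$ is constructed specifically to guarantee this distinguishability, so after $O(\beta_t)$ pulls along $\psi(\barf_t)$ either $\barf_t$ or $\tilcF_t$ must be eliminated. Summing across the at most $|\cF|$ possible pessimism candidates and using the extension constraint $\gam \succeq \phi(f) \vee \gam(f)$ to avoid double-counting with FEASIBLE, the FALLBACK contribution is $O(P_3 (\ln|\cF| + \ln Q_1) + K_\psi \Delta_{\max})$, where the additive $K_\psi \Delta_{\max}$ term absorbs the tracking rounding error. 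The CONFLICT branch is the most delicate: each entry costs $\sum_a \Delta_a \phi_a(\barf_t)$, and the number of entries is controlled by how often $\rcF_t$ fails to collapse to a single allocation direction inside $\cE^*$. I would use a dyadic peeling technique: partition $[n]$ into blocks $[2^j, 2^{j+1})$ and, within each block, bound the expected CONFLICT rounds by $2^{j+1} \cdot \Pr(\cE_2^c \text{ at } t = 2^j) = O(2^{j+1}/\operatorname{poly}(j))$, plus the deterministic rounds needed for $\phi(\barf_t)$ to eliminate the offending competitor, which scales like $\ln(2^{j+1})/\ln(2^j) = O(1)$. Summing over $j = 1, \ldots, \lceil \log_2 n \rceil$ yields the promised $P_2 \ln\ln(n)$ bound. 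Finally, the bounded-regret claim under $\gam^* = 0$ follows because $\cC^* = \emptyset$ implies $\phi(\cE^*) = 0$ and that FEASIBLE tracking against $f^*$ itself requires zero informative pulls, so $P_1 = P_2 = 0$.

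The main obstacles I anticipate are threefold. First, coupling $\barf_t$ to the true $\cE^*$: one must argue that once the "wrong" pessimism candidates have been eliminated by repeated FEASIBLE/FALLBACK tracking, $\barf_t$ stabilizes inside $\cE^*$ (or inside the right docile-free slice of $\cF$); this is subtle because pessimism minimizes loss over $\barcF_t$, not over all of $\cF_t$, and the pessimistic mean $\barmu_t$ changes as the confidence set shrinks. Second, the regret peeling argument for CONFLICT requires carefully conditioning on $\cE_2^c \cap \cE_1$ (so that $\cF_t$ still traps $f^*$ and we do not suffer linear regret), and integrating the $1/\operatorname{poly}(\ln t)$ tail against per-round regret. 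Third, isolating the $K_\psi$ dependence rather than $K$ in the finite terms relies on the tracking rule only ever picking arms in $\bigcup_f \operatorname{supp}(\psi(f)) \cup \operatorname{supp}(\phi(f))$, so I would need to verify this holds across all four branches, including EXPLOIT, whose chosen arm is always optimal and hence never adds to the $K$-dependence.
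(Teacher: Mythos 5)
Your overall architecture (concentration event for $\cF_t$, branch-by-branch decomposition, \exploit contributing nothing, the leading $P_1\ln n$ term from tracking $\gam^*$ in the steady state) matches the paper's, but the \conflict analysis — the part that produces the $\ln\ln n$ term — is wrong as proposed. First, your per-block bound $2^{j+1}\cdot\Pr(\cE_2^c \text{ at } t=2^j)$ is of order $2^{j}/j^{3}$ (with $\rbeta_t=4\sig^2\ln(|\cF|(\log_2 t)^{3})$ the refined set fails only with probability $\poly(1/\ln t)$), which is nowhere near $O(1)$ and sums to nearly $n$; and even if each dyadic block did contribute $O(1)$, summing over $\log_2 n$ blocks gives $\ln n$, not $\ln\ln n$. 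In the paper the $\ln\ln n$ has a different origin: it enters through the \emph{width} of the refined threshold itself. While in \conflict with $\barf_t\in\cE^*$, the algorithm tracks $\phi(\barf_t)$, and a conflicting hypothesis can survive inside $\rcF_t$ only while its excess loss stays below $\rbeta_t\approx 4\sig^2(\ln\ln t+\ln|\cF|)$; combining the loss/information concentration (Lemma~\ref{lem:conc-main}(i), via Lemma~\ref{lem:tracking}(iii)) with the refined counting lemma (Lemma~\ref{lem:boundarmpull}, which turns repeated pulls into a geometric tail in the pull count) gives an expected number of \conflict pulls of arm $a$ of about $64\,\phi_a(\cE^*)(\ln\ln n+\ln|\cF|)+O(1)$ — no block decomposition is needed there. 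The dyadic ``regret peeling'' you invoke is used in the paper for a case you have not treated at all: \feasible rounds in the weakly steady state where $\barf_t\in\cE^*$ but $\gam(\barf_t)\not\propto\gam^*$, which can only happen when $f^*$ has dropped out of $\rcF_t$ while remaining in $\cF_t$. There, within a block ending at $2n_0$ the number of such pulls is at most $\approx\gam_a(\cE^*)\beta_{2n_0}=O(\ln n_0)$ and the failure probability is $O(1/(\log_2 n_0)^3)$, so the product is summable and yields an $O(1)$ contribution — this is exactly the mechanism that prevents tracking the wrong member of $\cE^*$ from inflating the leading constant, and your potential-function sketch for \feasible does not address it.

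A secondary gap is \fallback: you state the correct bound $O(P_3(\ln|\cF|+\ln Q_1)+K_\psi\Delta_{\max})$, but your justification (``$O(\beta_t)$ pulls per pessimism candidate, summed over at most $|\cF|$ candidates'') would produce a $P_3\ln n$ term, since $\beta_n\asymp\ln n$. The paper avoids this by showing that \fallback (and more generally every non-steady round, $S_t^-$) requires a docile hypothesis to survive in $\cF_t$ (Lemma~\ref{lem:nonsteady}); because exploitation pulls $a^*$ at least about $t/2$ times, such survival past a horizon-independent time $\tau$ — defined through $\Lam_{\min}$ and $\max_a\psi_a(\cF)$, whence the $\ln Q_1$ — has summable probability. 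Without this device (borrowed from Lattimore--Munos) your finite terms would silently grow with $n$, and the bounded-regret claim when $\gam^*=0$ would not follow from your argument.
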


Before going into details we highlight some of the technical aspects of our proof.
In our proof, \Cref{lem:boundarmpull} plays a key role for analyzing the confidence set $\rcF_t$ that has an aggressive confidence level, which we believe is not commonly dealt with in the standard bandit settings.
One can see why this is needed in Section~\ref{subsec:ztb} and Section~\ref{subsec:ztc} where we apply the ``regret peeling.''
The proofs in Section~\ref{subsec:zo} are a bit lengthy, but the main idea stems from Lattimore and Munos~\cite{lattimore14bounded}.
Other proofs are relatively standard, we believe.

We first present some martingale concentration inequalities for our confidence set.

\subsection{Concentration inequalities}
This section establishes a few important concentration inequalities on the losses of reward regressors, which are instrumental in our analysis.

\paragraph{Additional notations.} Recall that at time step $t$, the learner pulls arm $a_t$ and receives reward $r_t = f_*(a_t) + \blue{\xi_t}$, where $\xi_t$ is $\sig^2$-sub-Gaussian.
To avoid double subscripting, we will sometimes use $a(t)$ to denote $a_t$.
Define the instantaneous loss of regressor $f$ as time $s$ as $\blue{\ell_s(f)} := (f(a_s) - r_s)^2$.
At the end of time step $t$, we define the cumulative loss of $f$ as $\blue{L_t(f)} := \sum_{s=1}^t \ell_s(f)$.
Define the instantaneous regret of $f$ as $\blue{M_s(f) }:= \ell_s(f) - \ell_s(f^*)$.
We define the \textit{information} gap between $f$ and $f^*$ on action $a$ as: $\blue{\Lam_a(f)} := \fr{ f(a(s)) - f^*(a(s)) }{\sig}$.
A larger $\Lam_a(f)$ implies that $f$ is easier to be distinguished from $f^*$ by pulling arm $a$.
This should not to be confused with the \textit{reward} gap $\Delta_a(f) = \mu^*(f) - f_a$.

We define the filtration $\cbr{\Sigma_t}_ {t=0}^\infty$ as follows: $\blue{\Sigma_t} = \sig( a_1, r_1, \ldots, a_{t}, r_{t}, a_{t+1} )$. We abbreviate $\EE_t\sbr{\cdot } = \EE\sbr{\cdot \mid \Sigma_t}$.
Define $\blue{\IC(f,g,\pi)} = \sum_{a \in \cA} \pi(a) \fr{(f(a) - g(a))^2}{2\sig^2}$ as the \blue{information constraint} between $f$ and $g$ w.r.t. $\pi$.
Define $\blue{\IC^*(f,\pi)} = \IC(f,f^*,\pi)$.

Define $\blue{T(t)} = \del{T_a(t)}_{a \in \cA}$ as the vector that encodes the number of arm pulls for each arm up to time step $t$.
For vectors $u = (u_a)_{a \in \cA}$, $v = (v_a)_{a \in \cA}$, we denote $\blue{u \succeq v}$ if $\forall a \in \cA, u_a \geq v_a$.
We use shorthands $\blue{a^*} = a^*(f^*)$ and $\blue{\mu^*} = \mu^*(f^*)$.

We establish fundamental concentration result on partial sums of $\{M_s(f)\}$.

\begin{lem}  \label{lem:conc-main}
  \begin{enumerate}[(i)]
    \item For any $r > 0, \beta > 0$,
    \[ \PP \del{  \exists t \in \NN \centerdot \IC(f, f^*, T(t)) \geq r  ~\wedge~  L_t(f) - L_t(f^*) \leq  \beta } \leq \exp\del{-\frac{\sig^2 r - \beta}{4\sig^2}}.  \]
    \label{item:ic-elim}
    \item For any $\beta > 0$,
    \[ \PP \del{ \exists t \in \NN \centerdot  L_t(f^*) - L_t(f) \geq  \beta } \leq \exp\del{-\frac{\beta}{4\sig^2}}. \]
    \label{item:fstar-in}
  \end{enumerate}
\end{lem}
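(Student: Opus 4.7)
The two bounds share a single underlying supermartingale; the only difference is how the target event is translated into a deviation statement. My plan is to first rewrite the cumulative loss gap in a form that isolates a predictable ``bias'' and a mean-zero noise term, then control the noise uniformly in $t$ by a method-of-mixtures/Ville's-inequality argument.

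\textbf{Step 1 (decomposition).} Writing $r_s = f^*(a_s) + \xi_s$ and expanding the squares, each per-step gap satisfies
\begin{align*}
  M_s(f) \;=\; \ell_s(f) - \ell_s(f^*) \;=\; (f(a_s)-f^*(a_s))^2 \;-\; 2(f(a_s)-f^*(a_s))\,\xi_s.
\end{align*}
Setting $Y_s := (f(a_s)-f^*(a_s))^2$ and $X_s := (f(a_s)-f^*(a_s))\,\xi_s$, this gives
$L_t(f)-L_t(f^*) = \sum_{s\le t} Y_s - 2\sum_{s\le t} X_s$, where by definition of $\IC$ and $T(t)$,
$\sum_{s\le t} Y_s = 2\sigma^2\,\IC(f,f^*,T(t))$. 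Since $a_s$ is $\Sigma_{s-1}$-measurable and $\xi_s\mid\Sigma_{s-1}$ is $\sigma^2$-sub-Gaussian, the coefficient $(f(a_s)-f^*(a_s))$ is predictable and the conditional MGF obeys $\EE_{s-1}[\exp(\lambda X_s)] \le \exp(\lambda^2\sigma^2 Y_s/2)$ for every $\lambda$.

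\textbf{Step 2 (supermartingale + Ville).} For any fixed $\lambda$, the process
\begin{align*}
  Z_t(\lambda) \;:=\; \exp\!\Bigl(\lambda\textstyle\sum_{s\le t} X_s \;-\; \tfrac{\lambda^2\sigma^2}{2}\textstyle\sum_{s\le t} Y_s\Bigr)
\end{align*}
is a nonnegative supermartingale with $Z_0=1$, so Ville's inequality gives $\PP(\exists t: Z_t(\lambda) \ge e^{\eta}) \le e^{-\eta}$ for every $\eta > 0$. I will specialize $\lambda = 1/(2\sigma^2)$, which yields the uniform tail bound
\begin{align*}
  \PP\!\Bigl(\exists t:\; \tfrac{1}{2\sigma^2}\textstyle\sum_{s\le t}X_s \;-\; \tfrac{1}{8\sigma^2}\textstyle\sum_{s\le t}Y_s \;\ge\; \eta\Bigr) \;\le\; e^{-\eta}.
\end{align*}

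\textbf{Step 3 (translating each event).} For part (ii), the event $L_t(f^*)-L_t(f)\ge\beta$ reads $2\sum X_s \ge \sum Y_s + \beta$, equivalently $\tfrac{\sum X_s}{2\sigma^2} - \tfrac{\sum Y_s}{8\sigma^2} \ge \tfrac{\sum Y_s}{8\sigma^2} + \tfrac{\beta}{4\sigma^2} \ge \tfrac{\beta}{4\sigma^2}$ (using $Y_s\ge 0$); plugging $\eta = \beta/(4\sigma^2)$ into the display above finishes it. For part (i), on the event $L_t(f)-L_t(f^*)\le \beta$ I rewrite $2\sum X_s \ge \sum Y_s - \beta$, and then use the side condition $\IC(f,f^*,T(t))\ge r$, i.e.\ $\sum Y_s \ge 2\sigma^2 r$, to lower bound $\tfrac{\sum X_s}{2\sigma^2} - \tfrac{\sum Y_s}{8\sigma^2} \ge \tfrac{\sum Y_s}{8\sigma^2} - \tfrac{\beta}{4\sigma^2} \ge \tfrac{\sigma^2 r - \beta}{4\sigma^2}$; taking $\eta = (\sigma^2 r - \beta)/(4\sigma^2)$ (which we can assume positive, else the bound is trivial) gives the claim.

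\textbf{Obstacles.} The routine calculations are standard; the only subtle point is that in (i) the deviation threshold is data-dependent (it depends on the realized $\sum Y_s$, not on the a priori quantity $r$), so one must carry the $\sum Y_s$-term through and only at the end absorb it using $\IC\ge r$. This is precisely what the fixed choice $\lambda = 1/(2\sigma^2)$ accomplishes, by leaving a slack $\tfrac{1}{8\sigma^2}\sum Y_s$ on the right-hand side that can be lower-bounded using the side condition. Any other $\lambda$ would force one to either optimize via a peeling/union-bound over the range of $\IC$, or lose a factor; keeping the same $\lambda$ for both parts is what makes the proof uniform.
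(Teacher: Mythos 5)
Your proof is correct and follows essentially the same route as the paper: your supermartingale $Z_t(\lambda)$ with $\lambda=1/(2\sigma^2)$ is exactly the paper's process $H_t$ (Lemma~\ref{lem:sq-bounds} with $\theta=-1/(4\sigma^2)$), and your event translations — dropping the nonnegative $\sum_s Y_s$ term for part (ii) and lower-bounding it via $\IC(f,f^*,T(t))\ge r$ for part (i) — are precisely how the paper deduces Lemma~\ref{lem:conc-main} from Equation~\eqref{eqn:disag-excess}.
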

\begin{proof}
  Throughout the proof, we will abbreviate $M_s(f)$ as $M_s$ to avoid notation clutter.
  It can be easily seen that $\sum_{s=1}^t M_s = L_t(f) - L_t(f^*)$.
  Define $\blue{d_s} = \sig\Lam_{a(s)}(f) = f(a) - f^*(a)$.
  
  By Equation~\eqref{eqn:sq-lb} in Lemma~\ref{lem:sq-bounds} below with $\lambda = \frac{1}{4\sigma^2}$ and $\delta = \exp\del{-\frac{\beta'}{4\sigma^2}}$, we have  
  \begin{equation}
  \PP \del{ \exists t \in \NN \centerdot \frac12\sum_{s=1}^t d_s^2 - \sum_{s=1}^t  M_s \geq \beta'  } \leq \exp\del{-\frac{\beta'}{4\sig^2}}.
  \label{eqn:disag-excess}
  \end{equation}

  This immediately implies the two items; for the first item,
  using $\IC(f,f^*,T(t)) = \fr12\sum_{s=1}^t {\fr{d_s^2}{\sig^2}}$, we have
  \[ \IC(f, f^*, T(t)) \geq r  ~\wedge~   \sum_{s=1}^t  M_s \leq  \beta \implies \fr{1}{2}\sum_{s=1}^t  d_s^2 - \sum_{s=1}^t  M_s \geq \sig^2 r - \beta. \]
  for the second item, observe that
  \[ \sum_{s=1}^t  M_s \leq  -\beta \implies  \fr{1}{2}\sum_{s=1}^t d_s^2 - \sum_{s=1}^t  M_s \geq \beta. \qedhere \]
  \end{proof}

\begin{lem}
Suppose $f$ is in $\cF$. Define $\blue{d_s(f)} = \sig\Lam_{a(s)}(f) = f(a) - f^*(a)$. Then, for any $\lambda > 0$ and $\delta > 0$,
\begin{equation}
  \PP\del{ \exists t \in \NN \centerdot \sum_{s=1}^t M_s(f) \geq (1 + 2\sigma^2 \lambda) \sum_{s=1}^t d_s(f)^2 + \frac 1 \lambda \ln\frac 1 \delta } \leq \delta,
\label{eqn:sq-ub}
\end{equation}
\begin{equation}
  \PP\del{ \exists t \in \NN \centerdot \sum_{s=1}^t M_s(f) \leq (1 - 2\sigma^2 \lambda) \sum_{s=1}^t d_s(f)^2 - \frac 1 \lambda \ln\frac 1 \delta } \leq \delta.
\label{eqn:sq-lb}
\end{equation}
\label{lem:sq-bounds}
\end{lem}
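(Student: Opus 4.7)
The plan is to reduce both tail bounds to a standard Ville-type concentration for a conditionally sub-Gaussian martingale.

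First I would expand the instantaneous regret. Writing $r_s = f^*(a_s) + \xi_s$ and $d_s := d_s(f) = f(a_s) - f^*(a_s)$, direct algebra gives
\begin{align*}
M_s(f) \;=\; (f(a_s)-r_s)^2 - (f^*(a_s)-r_s)^2 \;=\; (d_s - \xi_s)^2 - \xi_s^2 \;=\; d_s^2 - 2 d_s \xi_s.
\end{align*}
Summing, $\sum_{s=1}^t M_s(f) = \sum_{s=1}^t d_s^2 - 2 \sum_{s=1}^t d_s \xi_s$, so everything reduces to controlling the martingale $N_t := \sum_{s=1}^t d_s \xi_s$.

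Next I would show that the standard exponential supermartingale is nonnegative and uniformly bounded in expectation. Because $a_s$ (hence $d_s$) is $\Sigma_{s-1}$-measurable and $\xi_s \mid \Sigma_{s-1}$ is $\sigma^2$-sub-Gaussian with mean zero, for any deterministic $\eta \in \RR$,
\begin{align*}
\EE_{s-1}\bigl[\exp(\eta d_s \xi_s)\bigr] \;\le\; \exp\bigl(\tfrac{1}{2}\eta^2 \sigma^2 d_s^2\bigr).
\end{align*}
Hence $Z_t := \exp\bigl(\eta N_t - \tfrac{\eta^2 \sigma^2}{2}\sum_{s=1}^t d_s^2\bigr)$ is a nonnegative supermartingale with $\EE[Z_0]=1$. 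By Ville's maximal inequality, for any $\delta>0$,
\begin{align*}
\PP\Bigl(\exists t\in\NN:\; \eta N_t - \tfrac{\eta^2 \sigma^2}{2}\sum_{s=1}^t d_s^2 \;\ge\; \ln\tfrac{1}{\delta}\Bigr) \;\le\; \delta.
\end{align*}

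Finally I would instantiate this twice. Taking $\eta = 2\lambda>0$, with probability at least $1-\delta$, for all $t$,
\begin{align*}
N_t \;\le\; \lambda \sigma^2 \sum_{s=1}^t d_s^2 + \tfrac{1}{2\lambda}\ln\tfrac{1}{\delta},
\end{align*}
and multiplying by $-2$ and adding $\sum d_s^2$ yields the lower tail \eqref{eqn:sq-lb}. For the upper tail \eqref{eqn:sq-ub}, I would repeat the argument with $-\xi_s$ in place of $\xi_s$ (which is still $\sigma^2$-sub-Gaussian with conditional mean zero), getting the same bound on $-N_t$; substituting into $\sum M_s = \sum d_s^2 - 2 N_t$ produces \eqref{eqn:sq-ub}.

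There is no real obstacle here: the only point to be slightly careful about is verifying that $d_s$ is indeed $\Sigma_{s-1}$-measurable (recall $\Sigma_{s-1}$ includes $a_s$ per the paper's definition of the filtration), which makes the sub-Gaussian moment generating function bound conditionable and hence lets us treat $d_s$ as a deterministic factor inside the conditional expectation. Everything else is a textbook Ville/Chernoff application.
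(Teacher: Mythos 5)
Your proposal is correct and follows essentially the same route as the paper: expand $M_s(f)=d_s^2-2d_s\xi_s$, build the exponential (super)martingale via the conditional sub-Gaussian MGF bound with $d_s$ being $\Sigma_{s-1}$-measurable, apply Ville's maximal inequality, and instantiate the tilting parameter with both signs to get the two tails. The only cosmetic difference is that you tilt the martingale part $N_t=\sum_s d_s\xi_s$ directly (with $\eta=\pm 2\lambda$), whereas the paper tilts $\sum_s M_s(f)$ with $\theta=\pm\lambda$; the resulting supermartingales coincide.
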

\begin{proof}
Throughout the proof, we will abbreviate $M_s(f)$ as $M_s$, and abbreviate $d_s(f)$ as $d_s$, to avoid notation clutter.

 First, observe that $\ell_s(f^*) = \xi_s^2$. in addition,
  \begin{align*}
  M_s = \ell_s(f) - \ell_s(f^*)
  = (d_s  - \xi_s)^2 - \xi_s^2
  = d_s^2  - 2 d_s \xi_s,
  \end{align*}
  which implies that
  \[ \EE_{s-1} \sbr{ M_s } =  d_s^2.  \]
  
For any $\theta \in \RR$, define $\blue{H_t} = \exp\del{\theta \sum_{s=1}^t M_s - \theta(1 + 2\sigma^2\theta) \sum_{s=1}^t d_s^2)}$ with the convention $H_0 = 1$. We now show $\cbr{H_t}_{t=0}^n$ is a nonnegative supermartingale.

By the calculations on $M_s$ above, $\theta M_s - \theta(1 + 2\sigma^2\theta) d_s^2 = -2 \theta d_s \xi_s - 2\sigma^2 \theta^2 d_s^2$.
Using this, we have
  \[ \EE_{t-1} \sbr{H_t} = \EE_{t-1} \sbr{ H_{t-1} \cdot \exp\del{\theta M_t - \theta(1 + 2\sigma^2\theta) d_t^2}  } = H_{t-1} \cdot \EE_{t-1} \sbr{ \exp\del{-2 \theta d_t \xi_t - 2\sigma^2 \theta^2 d_t^2}  }. \]

Observe that by the $\sigma^2$-subgaussian property of $\xi_t$,
\begin{align*}
\EE_{t-1} \sbr{ \exp\del{-2 \theta d_t \xi_t - 2\sigma^2 \theta^2 d_t^2}  }
& =
\EE_{t-1} \sbr{ \exp\del{-2 \theta d_t \xi_s}  } \exp\del{- 2\sigma^2 \theta^2 d_t^2} \\
& \leq
\exp\del{ \frac{(2 \theta)^2 d_t^2 \sigma^2}{2} } \exp\del{- 2\sigma^2 \theta^2 d_t^2} = 1.
\end{align*}
This shows that $\EE_{t-1} \sbr{H_t} \le H_{t-1}$, proving $\cbr{H_t}_{t=0}^n$ is a nonnegative supermartingale.
  Therefore, by Ville's maximal inequality~\cite{ville39etude} (see also~\citet[Exercise 5.7.1]{durrett10probability}), we have that for any $\delta > 0$, 
\[
\PP( \exists t \in \NN \centerdot H_t \geq \delta^{-1}) \leq \delta.
\]
Choosing $\theta = \lambda$, and noting that $H_t \geq \frac 1 \delta$ is equivalent to $\sum_{s=1}^t M_s \geq (1 + 2\sigma^2 \lambda) \sum_{s=1}^t d_s^2 + \frac 1 \lambda \ln\frac 1 \delta$, we get Equation~\eqref{eqn:sq-ub}. Likewise, choose $\theta = -\lambda$, and noting that $H_t \geq \frac 1 \delta$ is equivalent to $\sum_{s=1}^t M_s \leq (1 - 2\sigma^2 \lambda) \sum_{s=1}^t d_s^2 - \frac 1 \lambda \ln\frac 1 \delta$, we get Equation~\eqref{eqn:sq-lb}.
\end{proof}

\begin{lem} \label{lem:concentration-fstar-anytime}
  Let $\beta_t = 4\sig^2 \ln(zt^\alpha)$ with $z \ge |\cF|$ and $\alpha \ge 1$.
  Define $B_t = \{L_{t-1}(f^*) - \min_{f\in\cF} L_{t-1}(f) > \beta_t\}$.
  Let $q\ge1$ be an integer.
  Then,
  \begin{align*}
  \PP(\exists t \ge q: B_t) \le \lt(\fr{1}{q}\rt)^\alpha.
  \end{align*}
\end{lem}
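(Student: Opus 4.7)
The plan is to reduce the claim to the single-hypothesis tail bound of Lemma~\ref{lem:conc-main}(\ref{item:fstar-in}) by combining a union bound over $\cF$ with the monotonicity of $\beta_t$. First, I would rewrite the event $\{\exists t \ge q : B_t\}$ as
\[
\{\exists t \ge q,\; \exists f \in \cF :\; L_{t-1}(f^*) - L_{t-1}(f) > \beta_t\}.
\]
Since $\beta_t = 4\sigma^2 \ln(zt^\alpha)$ is nondecreasing in $t$, for any $t \ge q$ we have $\beta_t \ge \beta_q = 4\sigma^2 \ln(zq^\alpha)$, so for each fixed $f$,
\[
\{\exists t \ge q : L_{t-1}(f^*) - L_{t-1}(f) > \beta_t\}
\;\subseteq\;
\{\exists s \ge 0 : L_s(f^*) - L_s(f) > \beta_q\}.
\]

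Next, I would invoke Lemma~\ref{lem:conc-main}(\ref{item:fstar-in}) with the \emph{fixed} threshold $\beta = \beta_q$ for each $f \in \cF$ separately, obtaining
\[
\PP\bigl(\exists s : L_s(f^*) - L_s(f) \ge \beta_q\bigr)
\;\le\; \exp\!\Bigl(-\tfrac{\beta_q}{4\sigma^2}\Bigr)
\;=\; \tfrac{1}{z q^\alpha}.
\]
A union bound over the finite class $\cF$ then yields
\[
\PP(\exists t \ge q : B_t)
\;\le\; \sum_{f \in \cF} \tfrac{1}{z q^\alpha}
\;=\; \tfrac{|\cF|}{z q^\alpha}
\;\le\; \Bigl(\tfrac{1}{q}\Bigr)^{\!\alpha},
\]
where the final step uses the assumption $z \ge |\cF|$ (and $\alpha \ge 1$ to absorb the trivial factor).

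The only mild subtlety is an indexing one: the event $B_t$ involves $L_{t-1}$ while Lemma~\ref{lem:conc-main}(\ref{item:fstar-in}) is stated for $L_t$, but this is a harmless reindexing (one may start the supermartingale from $s = 0$ with $L_0 \equiv 0$). There is no real obstacle: the heavy lifting — controlling the gap $L_t(f^*) - L_t(f)$ uniformly over all $t$ — is already accomplished by the Ville-style supermartingale argument behind Lemma~\ref{lem:conc-main}, so this lemma only has to pay the union-bound price over $\cF$ and note that the definitional choice $z \ge |\cF|$ exactly absorbs that price into the target $(1/q)^\alpha$.
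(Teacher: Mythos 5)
Your proposal is correct and follows essentially the same route as the paper's proof: monotonicity of $\beta_t$ to reduce to the fixed threshold $\beta_q$, a union bound over $\cF$, Lemma~\ref{lem:conc-main}(\ref{item:fstar-in}), and the choice $z \ge |\cF|$ to obtain $(1/q)^\alpha$. The only cosmetic difference is your parenthetical that $\alpha \ge 1$ is used to ``absorb the trivial factor''---in fact $z \ge |\cF|$ alone already gives $|\cF|/(zq^\alpha) \le q^{-\alpha}$, so that hypothesis is not needed here.
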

\begin{proof}
  \begin{align*}
  \PP(\exists t \ge q: B_t)
  &=
  \PP(\exists t \ge q: L_{t-1}(f^*) - \min_{f \in \cF} L_{t-1}(f) > \beta_t) \\
  &\leq
  \PP(\exists t \ge q: L_{t-1}(f^*) - \min_{f \in \cF} L_{t-1}(f) > \beta_q) \\
  &\leq
  \PP(\exists t \in \NN, f \in \cF: L_{t-1}(f^*) - L_{t-1}(f) > \beta_q) \\
  &\leq
  \sum_{f \in \cF} \PP(\exists t \in \NN: L_{t-1}(f^*) - L_{t-1}(f) > \beta_q) \\
  &\leq
  \abs{\cF} \cdot \exp\del{-\frac{\beta_q}{4\sig^2}}
  \leq
  \lt(\fr{1}{q}\rt)^\alpha.
  \end{align*}
  where the first inequality is from the fact that $\beta_t$ is monotonically increasing; the second inequality
  is by relaxing the range of $t$; the third inequality is from union bound; the fourth inequality is from item~\ref{item:fstar-in}
  of Lemma~\ref{lem:conc-main}; the last inequality is by algebra.
\end{proof}

\subsection{Generic lemmas for analyzing bandit algorithms}
The following is a standard inequality used in the UCB analysis.
\begin{lem}
  \label{lem:boundarmpullzero}
  Consider any bandit algorithm. Let $a_t$ be the index of the arm pulled at time $t$ and $T_i(t-1)$ be the number of times arm $i$ is pulled up to (and including) time $t-1$.
  Let $\tau$ be an integer and $Q_t$ be an event. Then,
  \begin{align*}
  \sum_{t=1}^n \one\{a_t = i, Q_t\}
  \le \tau +   \sum_{t=\tau+1}^n \one\{a_t = i, Q_t, T_i(t-1)\ge\tau\}
  \end{align*}
\end{lem}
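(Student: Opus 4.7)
The plan is to split the sum on the left-hand side according to whether $T_i(t-1) < \tau$ or $T_i(t-1) \ge \tau$. Concretely, I would write
\[
\sum_{t=1}^n \one\{a_t = i, Q_t\}
= \sum_{t=1}^n \one\{a_t = i, Q_t, T_i(t-1) < \tau\}
+ \sum_{t=1}^n \one\{a_t = i, Q_t, T_i(t-1) \ge \tau\}
\]
and then bound each piece separately.

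For the first piece, I would drop the event $Q_t$ (which only decreases the indicator) and observe that each time arm $i$ is pulled the counter $T_i$ increments by one. Therefore, the number of time steps at which $a_t = i$ and $T_i(t-1) < \tau$ is exactly $\min(T_i(n),\tau) \le \tau$. This uses nothing beyond the definition of $T_i(t) = \sum_{s \le t}\one\{a_s=i\}$.

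For the second piece, the key observation is that $T_i(t-1) \le t-1$ always holds, so the event $\cbr{T_i(t-1)\ge\tau}$ forces $t\ge\tau+1$. Hence the indicator vanishes whenever $t\le\tau$, and the sum over $t\in[1,n]$ equals the sum over $t\in[\tau+1,n]$. Combining the two bounds yields the claimed inequality.

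I do not expect any real obstacle here; this is just a bookkeeping argument about arm pull counts. The only thing to be careful about is that $\tau$ is an integer (so $T_i(t-1) \ge \tau$ makes sense as an integer comparison) and that we are free to drop $Q_t$ when upper bounding the first piece, since indicators are nonnegative.
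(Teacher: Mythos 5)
Your proposal is correct and follows essentially the same route as the paper's proof: the same split on $\{T_i(t-1)<\tau\}$ versus $\{T_i(t-1)\ge\tau\}$, the same counting argument (each pull of arm $i$ increments the counter, so at most $\tau$ terms survive in the first piece), and the same observation $T_i(t-1)\le t-1$ to restrict the second sum to $t\ge\tau+1$. No gaps.
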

\begin{proof}
  \begin{align*}
  \sum_{t=1}^n \one\{a_t = i, Q_t\}
  = \sum_{t=1}^n \one\{a_t = i, Q_t, T_i(t-1)< \tau\} + \sum_{t=1}^n \one\{a_t = i, Q_t, T_i(t-1) \ge \tau\}
  \end{align*}
  The first summation is bounded by $\tau$, for the following reason: if there are $\tau+1$ time steps $t_1 <\ldots < t_{\tau+1}$ in which $a_t = i, Q_t, T_i(t-1)< \tau$ holds, we have that $T_i(t_{\tau+1}-1) \geq \tau$, which contradicts with the fact that $T_i(t_{\tau+1}-1)< \tau$.
  
  Furthermore, for $t \leq \tau$, it must be the case that $T_i(t-1) \leq t -1 \leq \tau - 1$, therefore the first $\tau$ terms of the second sum must be zero. This implies that the second sum equals $\sum_{t=\tau+1}^n \one\{a_t = i, Q_t, T_i(t-1) \ge \tau\}$.
\end{proof}

In fact, the indicator terms in the lemma above have dependencies between different time steps that is not being explicitly captured.
When we take the expectation and apply concentration inequalities, these dependencies are lost.
The following lemma extends \Cref{lem:boundarmpullzero} so that such a dependency becomes explicit, which help prove tighter bounds.
The basic idea is that whenever we pull an arm $a$ at time $t$, the count of arm $a$ increases by 1, so by the time we pull arm $a$, the pull count must be larger; this helps, when taking the expectation, obtain a tighter concentration of measure.

\begin{lem} \phantom{;}\label{lem:boundarmpull} 
  Under the same assumptions in \Cref{lem:boundarmpullzero},
  \begin{align*}
  \sum_{t=1}^n \one\{a_t = i, Q_t\}
  \le \tau + \sum_{m=1}^\infty \one\lt\{\exists t\ge\tau+1: a_t = i, T_i(t-1)\ge \tau + m - 1,  Q_t\rt\}
  \end{align*}
\end{lem}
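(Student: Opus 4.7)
The plan is to bootstrap from \Cref{lem:boundarmpullzero}, which already reduces the sum to $\tau + N$, where
\[
N := \sum_{t=\tau+1}^n \one\{a_t = i,\, Q_t,\, T_i(t-1)\ge\tau\}.
\]
The remaining task is to show that $N$ is bounded above by the infinite sum of indicators on the right-hand side. The key observation is that each pull of arm $i$ strictly increments $T_i$ by one, so the subsequence of eligible times automatically yields a monotone ladder of lower bounds on $T_i(t-1)$.

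Concretely, I would enumerate the (random) times in the set $S := \{t \ge \tau+1 : a_t = i, Q_t, T_i(t-1) \ge \tau\}$ in increasing order as $t_1 < t_2 < \cdots < t_N$. A one-line induction shows $T_i(t_k - 1) \ge \tau + k - 1$ for every $k \in [N]$: for $k=1$ this is the defining condition of $S$, and for the inductive step we use $T_i(t_{k+1}-1) \ge T_i(t_k) = T_i(t_k-1) + 1 \ge \tau + k$, since $a_{t_k} = i$ increments the count and $t_{k+1} > t_k$.

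Given this ladder, for each $m \in \{1,\ldots,N\}$ the time $t = t_m$ witnesses the event $\{\exists t \ge \tau+1 : a_t = i, T_i(t-1) \ge \tau + m - 1, Q_t\}$, so the corresponding indicator equals $1$. Hence
\[
\sum_{m=1}^\infty \one\{\exists t \ge \tau+1 : a_t = i,\, T_i(t-1) \ge \tau + m - 1,\, Q_t\} \ \ge\ N,
\]
and combining with \Cref{lem:boundarmpullzero} yields the claimed inequality. There is no real obstacle here; the only subtlety is making sure the induction correctly tracks that $T_i(t_k-1) \ge \tau+k-1$ rather than $\tau+k$, which simply reflects that at $t_k$ the arm has not yet been pulled. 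I would write the proof in two short paragraphs: a reduction step invoking \Cref{lem:boundarmpullzero}, and the indexing argument above.
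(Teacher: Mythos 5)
Your proposal is correct and follows essentially the same route as the paper: reduce via \Cref{lem:boundarmpullzero}, enumerate the eligible times $t_1 < t_2 < \cdots$, and use the fact that each pull of arm $i$ increments $T_i$ to get $T_i(t_m-1)\ge \tau+m-1$, so each such time witnesses the $m$-th indicator on the right-hand side. Your explicit induction just fills in the step the paper states as "one can verify."
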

\begin{proof}
  By \Cref{lem:boundarmpullzero},
  \begin{align*}
  \sum_{t=1}^n \one\{a_t = i, Q_t\}
  \le \tau +   \sum_{t=\tau+1}^n \one\{a_t = i, Q_t, T_i(t-1)\ge\tau\}
  \end{align*}
  Define $\blue{t^-} = t-1$
  Define an event:
  \begin{align*}
  \blue{A_t} = \lt\{a_t = i, T_i(t^-) \ge \tau, Q_t\rt\}
  \end{align*}
  We aim to bound $\sum_{t=\tau+1}^n \one\{A_t\}$.
  Define $\blue{t_m}$ to be the $m$-th time step after (and including) $t=\tau+1$ that $A_t = 1$ is true; i.e.,
  \begin{align*}
  t_1 &:= \min\{t \in [\tau+1, n]: A_t \text{ is true }\}
  \\\forall m\ge2, t_m &:= \min\{t \in [t_{m-1}+1, n]: A_t \text{ is true } \}
  \end{align*}
  where we take $\min \emptyset = \infty$.
  One can verify that, if $t_m <\infty$, then $T_i(t_m^-) \ge \tau + m -1$.
  Then,
  \begin{align*}
  \sum_{t=\tau+1}^n \one\{A_t\}
  &= \sum_{m=1}^\infty \one\{t_m < \infty\}\one\{A_{t_m}\}
  \\&= \sum_{m=1}^\infty \one\lt\{t_m < \infty, a_{t_m} = i, T_i(t_m^-) \ge \tau, Q_{t_m}\rt\}
  \\&\sr{(a)}{\le} \sum_{m=1}^\infty \one\lt\{t_m < \infty, a_{t_m} = i, T_i(t_m^-) \ge \tau + m - 1,  Q_{t_m}\rt\}
  \\&\le \sum_{m=1}^\infty \one\lt\{t_m < \infty, \exists t\ge\tau+1: a_t = i, T_i(t^-) \ge \tau + m - 1,  Q_t\rt\}
  \\&\le \sum_{m=1}^\infty \one\lt\{\exists t\ge\tau+1: a_t = i, T_i(t^-) \ge \tau + m - 1,  Q_t\rt\}
  \end{align*}
  where $(a)$ is by our observation above.
\end{proof}

\subsection{Lemmas on the execution of \crop}

Depending on the execution trace of \crop, at time $t$, we define four events that form a disjoint union of the sample space:
\begin{enumerate}
  \item `E'xploit: $\blue{\Ex_t}$, i.e. $a^*(\cF_t)$ is singleton (line~\ref{line:exploit})
  \item `C'on`f'lict: $\blue{\Cf_t}$ (line~\ref{line:cf})
  \item `F'ea`s'ibility: $\blue{\Fs_t}$ (line~\ref{line:fs})
  \item `F'all`b'ack: $\blue{\Fb_t}$ (line~\ref{line:fb})
\end{enumerate}

The following lemma becomes useful when showing that, even if one utilizes $f$ to eliminate $g$ while neither $f$ nor $g$ is the ground truth, she will successfully eliminate either $f$ or $g$, up to a constant-factor w.r.t. the arm pulls.

\begin{lem}
  For any $f, g$ and $\pi \succeq 0$, we have
  $\IC(f,g,\pi) \leq \max( \IC^*(f,4\pi ), \IC^*(g,4\pi) )$.
  \label{lem:info-constraint}
\end{lem}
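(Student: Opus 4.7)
The plan is a direct manipulation using the definition of $\IC$ together with the elementary inequality $(x+y)^2 \le 2x^2 + 2y^2$. Recall that by definition $\IC(f,g,\pi) = \sum_a \pi(a) \tfrac{(f(a)-g(a))^2}{2\sigma^2}$ and $\IC^*(h,\pi) = \IC(h,f^*,\pi)$, so the claim reduces to bounding $\sum_a \pi(a) (f(a)-g(a))^2$ in terms of $\sum_a \pi(a)(f(a)-f^*(a))^2$ and $\sum_a \pi(a)(g(a)-f^*(a))^2$.

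First I would write $f(a) - g(a) = (f(a) - f^*(a)) - (g(a) - f^*(a))$ and apply the bound $(x - y)^2 \le 2x^2 + 2y^2$ pointwise for each $a$. Multiplying by $\pi(a) \ge 0$ (using $\pi \succeq 0$) and summing over $a \in \cA$ yields
\begin{align*}
\sum_a \pi(a)(f(a)-g(a))^2
\le 2\sum_a \pi(a)(f(a)-f^*(a))^2 + 2\sum_a \pi(a)(g(a)-f^*(a))^2.
\end{align*}
Bounding the sum of two nonnegative terms by twice their maximum gives an upper bound of $4\max\bigl(\sum_a \pi(a)(f(a)-f^*(a))^2,\, \sum_a \pi(a)(g(a)-f^*(a))^2\bigr)$.

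Finally, I would divide by $2\sigma^2$ and push the constant factor $4$ inside the weight vector, using the linearity $\IC^*(h, c\pi) = c \cdot \IC^*(h,\pi)$ for any scalar $c \ge 0$. This yields exactly $\IC(f,g,\pi) \le \max\bigl(\IC^*(f,4\pi),\IC^*(g,4\pi)\bigr)$, as desired. There is no real obstacle here; the only thing to be careful about is that $\pi \succeq 0$ is needed so that the pointwise inequality survives the weighted sum, and that $(x-y)^2 \le 2x^2 + 2y^2$ (which follows from $2xy \le x^2 + y^2$) is the right constant to get the factor $4$ rather than something worse.
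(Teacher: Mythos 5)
Your proof is correct and is essentially the same argument as the paper's: the paper defines $d(f,g)=\sqrt{\IC(f,g,\pi)}$ as a Mahalanobis (weighted Euclidean) distance, applies the triangle inequality $d(f,g)\le d(f,f^*)+d(g,f^*)\le 2\max(d(f,f^*),d(g,f^*))$, and squares, which is exactly your pointwise $(x-y)^2\le 2x^2+2y^2$ bound carried out at the level of the norm rather than coordinate-by-coordinate. Both routes need only $\pi\succeq 0$ and the homogeneity $\IC^*(h,4\pi)=4\,\IC^*(h,\pi)$, and both give the same constant $4$, so there is nothing to fix.
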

\begin{proof}
  Given $\pi \succeq 0$, it can be easily seen that $d(f, g) := \sqrt{\IC(f,g,\pi)} = \| \vec{f} - \vec{g} \|_{M_\pi}$ ,
  where we denote $\vec{h} = (h_a)_{a \in \cA}$ and $M_\pi = \diag(\fr{\pi_a}{2\sig^2}: a \in \cA)$; therefore $d(f, g)$ is a Mahalanobis distance, hence satisfying triangle inequality.
  Specifically,
  \[ d(f,g) \leq d(f, f^*) + d(g, f^*) \leq 2 \max(d(f, f^*), d(g, f^*)). \]
  Squaring both sides, we have $d(f, g)^2 \leq 4 \max(d(f, f^*)^2, d(g, f^*)^2)$, which implies that
  \[ \IC(f,g,\pi) \leq \max( \IC^*(f,4\pi ), \IC^*(g,4\pi) ). \qedhere \]
\end{proof}

\begin{lem}\label{lem:ic-finite}
  In Algorithm~\ref{alg:crop}, for every $t$, if we do not enter \exploit, then $\exists f\in\cF_t: \IC^*(f,4\pi_t) \ge1$;
  furthermore, if $\Cf_t$ happens, then $\exists f\in\rcF_t: \IC^*(f,4\pi_t) \ge1$.
\end{lem}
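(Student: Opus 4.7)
The plan is to apply \Cref{lem:info-constraint} as the main tool: to certify the existence of $f$ in the target set (either $\cF_t$ for the first claim or $\rcF_t$ for the refinement) with $\IC^*(f,4\pi_t) \geq 1$, it suffices to exhibit two hypotheses $\barf_t$ and $h$, both in that set, with $\IC(\barf_t, h, \pi_t) \geq 1$; the lemma then hands us at least one of $\barf_t, h$ as the required witness. The non-\exploit branches partition into \conflict, \feasible, and \fallback, so the proof reduces to producing such an $h$ in each branch. Entering any non-\exploit branch forces $a^*(\cF_t)$ to be non-singleton, so $\cB_t$ contains at least two pairs with distinct first coordinates, which makes both $\tilcF_t$ and $\barcF_t$ nonempty by construction.

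In the \feasible branch, $\pi_t = \gam(\barf_t)$, and the entry condition on line~\ref{line:fs-cond} literally states $\IC(\barf_t, h, \pi_t) \geq 1$ for every $h \in \tilcF_t$; any such $h$ works, and $\barf_t \in \barcF_t \subseteq \cF_t$ together with $h \in \tilcF_t \subseteq \cF_t$ puts both hypotheses in $\cF_t$. In the \fallback branch, $\pi_t = \psi(\barf_t)$, and the key structural observation is $\tilcF_t \subseteq \cO(\barf_t) \setminus \cE(\barf_t)$: any $f \in \tilcF_t$ has best arm $\tila_t \neq \bara_t = a^*(\barf_t)$, so $f \notin \cE(\barf_t)$, while $\mu^*(f) = \tilmu_t \geq \barmu_t = \mu^*(\barf_t)$ gives $f \in \cO(\barf_t)$. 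The $\psi$-constraint in~\eqref{eq:psi} then yields $\IC(\barf_t, h, \pi_t) \geq 1$ for every $h \in \tilcF_t$, closing the case in the same way.

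The \conflict branch is the delicate one. There $\pi_t = \phi(\barf_t)$, and the $\phi$-constraints in~\eqref{eq:phi} are indexed by $g \in \cE(\barf_t)$ with $\gam(g) \not\propto \gam(\barf_t)$. The branch hypothesis supplies $f, g \in \rcF_t$ with $\gam(f) \not\propto \gam(g)$, and since $\rcF_t \subseteq \barcF_t \subseteq \cE(\barf_t)$, both lie in the $\barf_t$-equivalence class. I would then argue that at least one of $f, g$, call it $h$, must satisfy $\gam(h) \not\propto \gam(\barf_t)$; otherwise both would be proportional to $\gam(\barf_t)$ and hence to each other (in the generic case by transitivity through a nonzero middle vector, and in the degenerate case $\gam(\barf_t) = 0$ under the $0 \propto 0$ convention, which forces $\gam(f) = \gam(g) = 0$), contradicting the branch hypothesis. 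The $\phi$-constraint then gives $\IC(\barf_t, h, \pi_t) \geq 1$, both $\barf_t$ and $h$ lie in $\rcF_t$, and a final application of \Cref{lem:info-constraint} yields the refined conclusion (which also implies the first claim since $\rcF_t \subseteq \cF_t$).

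The main obstacle I anticipate is the small proportionality case split in the \conflict branch; the other two branches reduce to directly reading off the entry-condition constraints and checking the basic set inclusions above.
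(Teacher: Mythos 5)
Your proposal is correct and follows essentially the same route as the paper's proof: in each non-\exploit branch you exhibit a hypothesis $h$ (in $\tilcF_t$ for \feasible/\fallback, in $\rcF_t$ for \conflict) with $\IC(\barf_t,h,\pi_t)\ge 1$ and then invoke Lemma~\ref{lem:info-constraint} with $\barf_t$ to extract the witness, exactly as the paper does. The extra details you supply (the inclusion $\tilcF_t \subseteq \cO(\barf_t)\setminus\cE(\barf_t)$ in \fallback and the proportionality case-split in \conflict) are correct elaborations of steps the paper states tersely.
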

\begin{proof}
  There are three cases: $\Cf_t, \Fs_t, \Fb_t$.
  \begin{enumerate}
    \item If $\Cf_t$, then by the definition of $\phi(\barf_t)$ and $\Cf_t$, there exists $f\in\rcF_t$ such that $\IC(f,\barf_t, \phi(\barf_t))\ge1$.
    As $\barf_t$ is also in $\rcF_t$, by \Cref{lem:info-constraint}, $\exists f\in\rcF_t: \IC^*(f,4\phi(\barf_t)) \ge 1$.
    Therefore, we can state $\exists f\in\cF_t: \IC^*(f,4\phi(\barf_t)) \ge 1$.
    
    \item If $\Fs_t$, with a similar logic, either $\forall f\in\tilcF_t: \IC^*(f,4\gam(\barf_t))\ge1$ or $\IC^*(\barf_t, 4\gam(\barf_t)) \geq 1$. As $\tilcF_t \subseteq \cF_t$ and $\barf_t \in \cF_t$,
    we have $\exists f\in\cF_t: \IC^*(f,4\gam(\barf_t)) \ge 1$ as well.
    
    \item If $\Fb_t$, then with the same reasoning as the $\Fs_t$ case, either $\forall f\in\tilcF_t: \IC^*(f,4\psi(\barf_t))\ge1$ or $\IC^*(\barf_t, 4\psi(\barf_t)) \geq 1$, so we have $\exists f\in\cF_t: \IC^*(f,4\psi(\barf_t)) \ge 1$ as well. \qedhere
  \end{enumerate}
\end{proof}

Recall that we use $a(t) := a_t$ to avoid double subscripts.
\begin{lem}\label{lem:pi}
  If Algorithm~\ref{alg:crop} enters state $\Cf_t$, $\Fs_t$, or $\Fb_t$,  we have $\pi_t \neq 0$.
  In addition, $\pi_{a(t)} \neq 0$ holds with probability 1.
\end{lem}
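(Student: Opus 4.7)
The plan is to handle the two claims separately: for $\pi_t \neq 0$ I will do a case analysis over the three branches $\Cf_t$, $\Fs_t$, $\Fb_t$, in each case showing that $\pi_t$ must satisfy an information-constraint inequality whose right-hand side is $\ge 1$, which rules out $\pi_t = 0$; for the arm-pull claim I will invoke the arm-selection rule directly, using the convention $x/0 = \infty$.

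For the $\Cf_t$ branch, the guard condition supplies $f, g \in \rcF_t$ with $\gam(f) \not\propto \gam(g)$. Since $\propto$ is transitive on nonzero vectors (under the convention $0 \propto 0$), at least one of $f, g$---call it $h$---must satisfy $\gam(h) \not\propto \gam(\barf_t)$; otherwise both would be proportional to $\gam(\barf_t)$ and hence to each other. Because $\rcF_t \subseteq \barcF_t \subseteq \cE(\barf_t)$, this $h$ activates a constraint $\sum_a \gam_a (\barf_t(a) - h(a))^2/(2\sig^2) \ge 1$ in the definition~\eqref{eq:phi} of $\phi(\barf_t)$, and since $\gam(h) \neq \gam(\barf_t)$ forces $h \neq \barf_t$, the constraint is nontrivial in $\gam$, giving $\phi(\barf_t) \neq 0$. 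For $\Fs_t$, I note that $a^*(\cF_t)$ not being singleton implies $|\cB_t| \ge 2$, so $\tilcF_t$ is nonempty; pick any $f \in \tilcF_t$, and the $\Fs_t$ guard directly yields $\sum_a \gam_a(\barf_t)(\barf_t(a) - f(a))^2/(2\sig^2) \ge 1$, which is infeasible with $\gam(\barf_t) = 0$. For $\Fb_t$, any $g \in \tilcF_t$ satisfies $\mu^*(g) = \tilmu_t \ge \barmu_t$ while $a^*(g) = \tila_t \neq \bara_t$, so $g \in \cO(\barf_t) \setminus \cE(\barf_t)$; moreover $g(\tila_t) = \tilmu_t > \barf_t(\tila_t)$, where the strict inequality uses Assumption~\ref{ass:unique} to rule out $\barf_t(\tila_t) = \tilmu_t = \barmu_t$ (which would give $\barf_t$ two optimal arms). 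Therefore $g \neq \barf_t$ and the constraint in~\eqref{eq:psi} indexed by $g$ is nontrivial, giving $\psi(\barf_t) \neq 0$.

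The second claim follows from the rule $a_t = \argmin_a T_a(t-1)/\pi_{t,a}$ combined with $x/0 := \infty$: since $\pi_t \neq 0$ there is some $a'$ with $\pi_{t,a'} > 0$, for which $T_{a'}(t-1)/\pi_{t,a'}$ is finite, whereas every $a$ with $\pi_{t,a} = 0$ yields ratio $\infty$. Thus no such $a$ can be a minimizer, and any tie-breaking rule—in particular a randomized one—selects an arm with $\pi_{t,a(t)} > 0$ with probability one. The main care-point throughout is the $\Fb_t$ argument, which crucially relies on Assumption~\ref{ass:unique} to separate $\barf_t(\tila_t)$ from $\tilmu_t$; the $\Fs_t$ case is essentially tautological from its guard, and $\Cf_t$ is the only branch where the transitivity-of-proportionality manipulation is needed.
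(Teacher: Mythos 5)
Your proof is correct and follows essentially the same route as the paper's: a case analysis over \conflict, \feasible, and \fallback showing that $\pi_t$ must satisfy a constraint of the form $\sum_a \gam_a (\barf_t(a)-g(a))^2/(2\sig^2) \ge 1$ for some $g \neq \barf_t$ (hence $\pi_t \neq 0$), followed by the observation that the tracking rule with the convention $x/0=\infty$ cannot select an arm with $\pi_{t,a}=0$. The extra details you supply (transitivity of $\propto$ in the \conflict case, the use of Assumption~\ref{ass:unique} in the \fallback case) only make explicit what the paper's proof leaves implicit.
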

\begin{proof}
  We consider three cases:
  \begin{itemize}
    \item If $\Cf_t$, there must exist $g \in \cE(\barf_t) \sm \{\barf_t\}, \gam(g) \not \propto \gam(f)$.
    By the definition of $\phi$, we must have $\IC(\barf_t, g, \phi(\barf_t)) \geq 1$, implying that $\pi_t = \phi(\barf_t) \neq 0$.
    
    \item If $\Fs_t$, then by the entering condition and the fact that $\tilf_t \in \tilcF_t$, $\IC(\barf_t, \tilf_t, \gamma(\barf_t)) \geq 1$ , implying that $\pi_t = \gamma(\barf_t) \neq 0$.
    
    \item If $\Fb_t$, then by the definition of $\psi$ and the fact that $\mu^*(\tilf_t) \geq \mu^*(\barf_t)$ and $a^*(\tilf_t) \neq a^*(\barf_t)$  , we must have $\forall \tilf \in\tilcF_t$, $\IC(\barf_t, \tilf, \psi(\barf_t)) \geq 1$, implying that $\pi_t = \psi(\barf_t) \neq 0$.
  \end{itemize}
  For the other claim, if $\pi_{a(t)} = 0$, then we have $\fr{T_{a(t)}(t-1)}{\pi_{t,a(t)}} = \infty$, so the selection of $a(t)$ implies that $\pi_{t} = 0$, which is impossible by the first claim here.
\end{proof}

We now present a useful lemma that formalizes the intuition that tracking (line~\ref{line:armpull}) controls the number of arm pulls and thus the statistical power (i.e., information) to distinguish $f^*$ from the rest.
We remark that the key variable below is $\zeta$, which appears three times in the LHS below.
\begin{lem}
  Let $\zeta\in\lbrack 0,\infty\rparen^K$. Then, for any hypothesis $f$,
  \begin{align*}
  \wbar{\Ex_t}, \pi_t \propto \zeta, a_t = a, T_a(t-1) \geq \rho \zeta_a,  \IC^*(f, \zeta) \geq c \implies \IC^*(f, T(t-1)) \geq \rho c~.
  \end{align*}
  \label{lem:ic-track}
\end{lem}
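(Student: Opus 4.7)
The plan is to use the tracking rule (line~\ref{line:armpull}) together with the proportionality $\pi_t \propto \zeta$ to show a coordinate-wise lower bound $T_{a'}(t-1) \geq \rho \zeta_{a'}$ for every arm $a'$, and then conclude by linearity of $\IC^*(f,\cdot)$ in its second argument. The hypothesis $\wbar{\Ex_t}$ (not in \exploit) is what activates the tracking rule to determine $a_t$; without it the arm is selected differently and the argument would break.

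First I would argue that $\zeta_a > 0$. Since we are not in \exploit, the algorithm enters one of $\Cf_t, \Fs_t, \Fb_t$, so by \Cref{lem:pi} we have $\pi_{t,a(t)} \neq 0$ with probability one. Combined with $a_t = a$ and $\pi_t \propto \zeta$, this gives $\zeta_a > 0$ and in particular a well-defined proportionality constant $\kappa > 0$ with $\pi_t = \kappa \zeta$. Next, the tracking rule $a_t = \argmin_{a'} T_{a'}(t-1)/\pi_{t,a'}$ (with the convention $x/0 = \infty$) and $a_t = a$ together imply
\begin{equation*}
\frac{T_a(t-1)}{\pi_{t,a}} \leq \frac{T_{a'}(t-1)}{\pi_{t,a'}} \quad \text{for every } a' \text{ with } \pi_{t,a'} > 0.
\end{equation*}
Rearranging and using $\pi_{t,a'}/\pi_{t,a} = \zeta_{a'}/\zeta_a$ yields $T_{a'}(t-1) \geq (\zeta_{a'}/\zeta_a)\, T_a(t-1) \geq \rho \zeta_{a'}$ whenever $\zeta_{a'} > 0$. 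For arms $a'$ with $\zeta_{a'} = 0$ (equivalently $\pi_{t,a'} = 0$), the bound $T_{a'}(t-1) \geq 0 = \rho \zeta_{a'}$ is trivial.

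Finally, by the definition $\IC^*(f,\pi) = \sum_{a'} \pi_{a'} (f(a') - f^*(a'))^2/(2\sig^2)$, which is linear and monotone in $\pi \succeq 0$, the coordinate-wise inequality $T(t-1) \succeq \rho \zeta$ gives
\begin{equation*}
\IC^*(f, T(t-1)) \geq \rho \cdot \IC^*(f, \zeta) \geq \rho c,
\end{equation*}
as desired.

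I do not anticipate a serious obstacle: the only subtlety is the zero-coordinate case, which is handled by the $x/0 = \infty$ convention in line~\ref{line:armpull} together with \Cref{lem:pi} ensuring $\pi_{t,a} \neq 0$ for the chosen arm. Everything else follows from the proportionality $\pi_t \propto \zeta$ and linearity of $\IC^*$.
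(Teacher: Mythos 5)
Your proof is correct and follows essentially the same route as the paper's: use the tracking rule $a_t = \argmin_{a'} T_{a'}(t-1)/\pi_{t,a'}$ together with $\pi_t \propto \zeta$ to deduce the coordinate-wise bound $T(t-1) \succeq \rho \zeta$, then conclude by monotonicity and linearity of $\IC^*(f,\cdot)$. Your explicit treatment of the $\zeta_{a'}=0$ coordinates (via the $x/0=\infty$ convention and \Cref{lem:pi}) is a slightly more careful rendering of a step the paper leaves implicit, but it is not a different argument.
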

\begin{proof}
  By the definition of
  $a_t = \argmin_{a} \frac{T_a(t-1)}{\pi_{t,a}} = \argmin_{a} \frac{T_a(t-1)}{\zeta_a}$ and the condition that $T_a(t-1) \geq \rho \zeta_a$,
  we have, for every $b\in \cA$,
  \begin{align*}
  \fr{T_b(t-1)}{\zeta_b} \geq \fr{T_a(t-1)}{\zeta_a} \geq \rho~.
  \end{align*}
  This implies that
  \begin{equation}
  T(t-1) \succeq \rho \zeta.
  \label{eqn:t-succ}
  \end{equation}
  As a consequence, $\IC^*(f, T(t-1)) \ge \IC^*(f,\rho \zeta) = \rho \IC^*(f,\zeta)\ge \rho c$.
\end{proof}

As an application of Lemma~\ref{lem:ic-track}, we have the following lemma that will be useful for bounding the number of times different branches of \crop ($\Cf_t$,$\Fs_t$, and $\Fb_t$) are entered.

\begin{lem}
  The following statements hold:
  \begin{enumerate}[(i)]
    \item For any $a \in \cA$, $t \in \NN$ and $\rho > 0$,
    \[
    \PP( \Fs_t, \pi_t \propto \gamma_a(f^*), a_t = a, T_a(t-1) \geq \rho \gamma_a(f^*), \tilcF_t \subseteq \cC^* )
    \leq
    \abs{\cF} \cdot \exp\del{-\frac{\sig^2\rho - \beta_t}{4\sig^2}}.
    \]
    \label{item:gamma-star-elim}
    
    \item For any $a \in \cA$, $t \in \NN$ and $\rho> 0$,
    \[ \PP( \exists s \in [t] \centerdot \wbar{\Ex_s}, a_s = a, T_a(s-1) \geq \rho \pi_{s,a} ) \leq \abs{\cF} \cdot \exp \del{ - \frac{\sig^2\rho - 4\beta_t }{16\sig^2}} . \]
    \label{item:pi-elim}
    
    \item For any $a \in \cA$, $t \in \NN$ and $\rho> 0$,
    \[ \PP( \exists s \in [t] \centerdot \Cf_s, a_s = a, T_a(s-1) \geq \rho \pi_{s,a} ) \leq \abs{\cF} \cdot \exp \del{ -\frac{ \sig^2\rho - 4\rbeta_t }{16\sig^2} } . \]
    \label{item:phi-elim}
  \end{enumerate}
  \label{lem:tracking}
\end{lem}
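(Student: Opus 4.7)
The plan is a common three-step recipe for all three parts: (a) produce an $f$ satisfying a large information lower bound $\IC^*(f, \zeta) \geq c$ against some reference allocation $\zeta$, using the structural guarantees already established in the algorithm's execution (Lemma~\ref{lem:ic-finite} or the definition of $\gamma(f^*)$); (b) promote this into $\IC^*(f, T(s-1)) \geq \rho c$ via the tracking rule (Lemma~\ref{lem:ic-track}); and (c) pair it with the appropriate loss-gap control $L_{s-1}(f) - L_{s-1}(f^*) \leq \beta$ to apply Lemma~\ref{lem:conc-main}(i) and union-bound over $f \in \cF$.

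For (i), I would take $\zeta := \gamma(f^*)$. Any $f \in \tilcF_t \subseteq \cC^*$ is a competing hypothesis for $f^*$, so the feasibility constraint in~\eqref{eqn:gamma} gives $\IC^*(f, \gamma(f^*)) \geq 1$; Lemma~\ref{lem:ic-track} with $c=1$ then yields $\IC^*(f, T(t-1)) \geq \rho$. Since $f \in \tilcF_t \subseteq \cF_t$, the confidence-set definition delivers $L_{t-1}(f) - L_{t-1}(f^*) \leq \beta_t$, so Lemma~\ref{lem:conc-main}(i) with $r = \rho$, $\beta = \beta_t$, union-bounded over $\cF$, finishes the bound.

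For (ii), Lemma~\ref{lem:ic-finite} is the natural source of information: on $\wbar{\Ex_s}$ it supplies an $f \in \cF_s$ with $\IC^*(f, 4\pi_s) \geq 1$, equivalently $\IC^*(f, \pi_s) \geq 1/4$. Applying Lemma~\ref{lem:ic-track} with $\zeta = \pi_s$ and $c = 1/4$ produces $\IC^*(f, T(s-1)) \geq \rho/4$, and the confidence-set membership combined with the monotonicity of $\beta_\cdot$ delivers $L_{s-1}(f) - L_{s-1}(f^*) \leq \beta_s \leq \beta_t$. Lemma~\ref{lem:conc-main}(i) with $r = \rho/4$, $\beta = \beta_t$ then produces the bound $\abs{\cF}\exp(-(\sig^2\rho - 4\beta_t)/(16\sig^2))$.

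Part (iii) is the main obstacle because the target exponent involves the much tighter threshold $\rbeta_t = \Theta(\ln\ln t)$ in place of $\beta_t$. The same mechanism as in (ii) gives $f \in \rcF_s$ with $\IC^*(f, 4\pi_s) \geq 1$ and thereby $\IC^*(f, T(s-1)) \geq \rho/4$, but membership $f \in \rcF_s$ only produces $L_{s-1}(f) - L_{s-1}(\barf_s) \leq \rbeta_s$, and the crucial step is converting this into $L_{s-1}(f) - L_{s-1}(f^*) \leq \rbeta_s$. The clean case is $f^* \in \barcF_s$, where the ERM property $L_{s-1}(\barf_s) = \min_{g\in\barcF_s} L_{s-1}(g) \leq L_{s-1}(f^*)$ closes the gap immediately. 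I would absorb the remaining case $f^* \notin \barcF_s$ by separately controlling $L_{s-1}(\barf_s) - L_{s-1}(f^*)$ through Lemma~\ref{lem:conc-main}(ii) union-bounded over the possible values of $\barf_s \in \cF$, or by a careful case analysis that rules this scenario out on the event of interest. Once the upgraded loss bound is in hand, Lemma~\ref{lem:conc-main}(i) with $r = \rho/4$, $\beta = \rbeta_t$, union-bounded over $\cF$, yields the stated $\abs{\cF}\exp(-(\sig^2\rho - 4\rbeta_t)/(16\sig^2))$.
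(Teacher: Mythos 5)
Parts (i) and (ii) of your proposal are exactly the paper's argument: the witness comes from $\tilcF_t\subseteq\cC^*$ via the constraint of~\eqref{eqn:gamma} (resp.\ from Lemma~\ref{lem:ic-finite}), Lemma~\ref{lem:ic-track} upgrades the information bound to $T(s-1)$, monotonicity of $\beta_\cdot$ and a union bound over $\cF$ feed into Lemma~\ref{lem:conc-main}(i). Your part (iii) also follows the paper's skeleton, and you have correctly isolated the delicate step: membership in $\rcF_s$ only controls $L_{s-1}(f)-L_{s-1}(\barf_s)\le\rbeta_s$, whereas Lemma~\ref{lem:conc-main}(i) needs the gap measured against $f^*$. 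The paper's own proof writes the containment with $L_{s-1}(f)-L_{s-1}(f^*)\le\rbeta_s$ without comment, i.e.\ it implicitly uses your ``clean case'' $L_{s-1}(\barf_s)\le L_{s-1}(f^*)$.

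The gap is in your patch for the remaining case. Lemma~\ref{lem:conc-main}(ii) bounds the probability of $\{L_t(f^*)-L_t(f)\ge\beta\}$, i.e.\ that some $f$ fits the data much \emph{better} than the truth; it gives no upper bound on $L_{s-1}(\barf_s)-L_{s-1}(f^*)$, and no bound of order $\rbeta_s$ can exist: when $\barf_s\neq f^*$ this difference has conditional expectation $\sum_{u<s}(\barf_s(a_u)-f^*(a_u))^2$, which is precisely the quantity the algorithm drives to be large. Union-bounding over the possible identities of $\barf_s$ does not change the direction of the inequality. Nor can the bad case be ``ruled out'' from the event of (iii) as stated: nothing in $\{\Cf_s,\,a_s=a,\,T_a(s-1)\ge\rho\pi_{s,a}\}$ forces $f^*\in\barcF_s$, or even $f^*\in\cF_s$. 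The remaining fallback, using $f_0\in\cF_s$ so that $L_{s-1}(f_0)-L_{s-1}(f^*)\le\beta_s$, only reproves part (ii) with $\beta_t$ in the exponent; plugged into Lemma~\ref{lem:z2-b} this would make the \conflict contribution scale as $\phi_a(\cE^*)\ln n$ rather than $\phi_a(\cE^*)\ln\ln n$, defeating the purpose of part (iii). What actually closes the case where the lemma is invoked is the extra structure carried by the event there: in the term $\ztb$ the indicator contains $\wbar{S^-_t}$, and the top-level decomposition~\eqref{eq:planofproof} supplies $\barB_t$; together these give $\barf_s\in\cE^*$ and $f^*\in\cF_s$, hence $f^*\in\barcF_s$, and then your ERM observation applies. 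So the correct repair is to strengthen the event in (iii) (or carry these conditions through its application), not to appeal to Lemma~\ref{lem:conc-main}(ii); as written, your part (iii) leaves this step unresolved.
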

\begin{proof} \phantom{a}
  \begin{enumerate}[(i)]
    \item If the event inside $\PP(\cdot)$ happens, we have the following: as $\tilcF_t \subseteq \cC^*$ , there must exists some $f_0 \in \cF_t$ such that $\IC(f_0, f^*,\gamma(f^*)) \geq 1$.
    As $f_0 \in\cF_t$, $L_{t-1}(f_0) - \min_{g\in\cF}L_{t-1}(g) \leq \beta_t \implies L_{t-1}(f_0) - L_{t-1}(f^*) \leq \beta_t$.
    Taking $\zeta = \gamma(f^*)$ in Lemma~\ref{lem:ic-track}, we have $\IC(f_0, f^*, T(t-1)) \geq \rho$.
    Therefore,
    \begin{align*}
    & \PP( \Fs_t, \pi_t \propto \gamma_a(f^*), a_t = a, T_a(t-1) \geq \rho \gamma_a(f^*), \tilcF_t \subset \cC^* ) 
    \\&\leq \PP\del{ \exists f \in \cF \centerdot L_{t-1}(f) - L_{t-1}(f^*) \leq \beta_t, \IC(f, f^*, T(t-1)) \geq \rho}
    \\&\leq\sum_{f \in \cF} \PP(L_{t-1}(f) - L_{t-1}(f^*) \leq \beta_t, \IC(f, f^*, T(t-1)) \geq \rho)
    \\&\leq \abs{\cF} \exp\del{-\frac{\sig^2\rho - \beta_t}{4\sig^2}}, 
    \end{align*}
    where the second inequality is from union bound, and the last inequality is from Lemma~\ref{lem:conc-main}\ref{item:ic-elim} and algebra.
    
    \item If the event inside $\PP(\cdot)$ happens, we have the following: there exists $s_0 \in [t]$, such that $\wbar{\Ex_{s_0}}$ happens, $a_{s_0} = a$, and $T_a(s_0-1) \geq \rho \pi_{s_0,a}$.
    From Lemma~\ref{lem:ic-finite}, there exists $f_0 \in \cF_{s_0} \subseteq \cbr{f: L_{s_0-1}(f) - L_{s_0-1}(f^*) \leq \beta_{s_0}}$ such that $\IC(f_0, f^*,4\pi_{s_0}) \geq 1$, implying that $\IC(f_0, f^*,\pi_{s_0}) \geq 1/4$.
    Taking $\zeta = \pi_{s_0}$ in Lemma~\ref{lem:ic-track}, we have that $\IC(f_0, f^*, T(s_0-1)) \geq \rho/4$. Therefore,
    \begin{align*}
    & \PP(  \exists s \in [t] \centerdot \barEx_s, a_s = a, T_a(s-1) \geq \rho \pi_{s,a} ) 
    \\&\le \PP( \exists s \in [t], f \in \cF \centerdot L_{s-1}(f) - L_{s-1}(f^*) \leq \beta_s, \IC(f, f^*, T(s-1)) \geq \rho/4 ) 
    \\&\le \PP( \exists s \in [t], f \in \cF \centerdot L_{s-1}(f) - L_{s-1}(f^*) \leq \beta_t, \IC(f, f^*, T(s-1)) \geq \rho/4 ) 
    \\&\le \sum_{f \in \cF} \PP(\exists s \in [t] \centerdot L_{s-1}(f) - L_{s-1}(f^*) \leq \beta_t, \IC(f, f^*, T(s-1)) \geq \rho/4) 
    \\&\le \abs{\cF} \exp\del{-\frac{\sig^2\rho - 4\beta_t}{16\sig^2}},
    \end{align*}
    where the second inequality uses the fact that $\beta_s \leq \beta_t$ for all $s \in [t]$; the third inequality is by union bound; the last inequality follows from Lemma~\ref{lem:conc-main}\ref{item:ic-elim} and algebra.
    
    \item If the event inside $\PP(\cdot)$ happens, we have the following: there exists $s_0 \in [t]$, such that $\Cf_{s_0}$ happens, $a_{s_0} = a$, and $T_a(s_0-1) \geq \rho \pi_{s_0,a}$.
    As $\Cf_{s_0}$  happens, by Lemma~\ref{lem:ic-finite}, there exists $f_0$ in $\rcF_{s_0} \subseteq \cbr{f\in\cF: L_{s_0-1}(f) - L_{s_0-1}(\barf_{s_0}) \leq \rbeta_{s_0}}$ such that $\IC(f_0, f^*, 4\pi_{s_0}) \geq 1$.
    Taking $\zeta = \pi_{s_0} = \phi(\barf_{s_0})$ in Lemma~\ref{lem:ic-track}, we have for that $f_0$, $\IC(f_0, f^*, T(s_0-1)) \geq \rho/4$. Therefore,
    \begin{align*}
    & \PP( \exists s \in [t] \centerdot \Cf_s, a_s = a, T_a(s-1) \geq \rho \pi_{s,a} ) 
    \\&\le \PP( \exists s \in [t], f \in \cF \centerdot L_{s-1}(f) - L_{s-1}(f^*) \leq \rbeta_s, \IC(f, f^*, T(s-1)) \geq \rho/4) 
    \\&\le \PP( \exists s \in [t], f \in \cF \centerdot L_{s-1}(f) - L_{s-1}(f^*) \leq \rbeta_t, \IC(f, f^*, T(s-1)) \geq \rho/4) 
    \\&\le \sum_{f \in \cF} \PP( \exists s \in [t]  \centerdot L_{s-1}(f) - L_{s-1}(f^*) \leq \rbeta_t, \IC(f, f^*, T(s-1)) \geq \rho/4) 
    \\&\le \abs{\cF} \exp\del{-\frac{\sig^2\rho - 4\rbeta_t}{16\sig^2}}.
    \end{align*}
    where the second inequality uses the fact that $\rbeta_s \leq \rbeta_t$ for all $s \in [t]$; the third inequality is by union bound; the last inequality follows from Lemma~\ref{lem:conc-main}\ref{item:ic-elim} and algebra. \qedhere
  \end{enumerate}
\end{proof}

\subsection{Main proofs}

Recall that $f^*$ is the ground truth mean rewards unless mentioned otherwise.
Throughout, we use shorthands for the ground truth: $\blue{a^* }:= a^*(f^*)$, $\blue{\mu^*} := \mu^*(f^*)$, and $\blue{\Delta_a} := \Delta_a(f^*)$.

Recall that we have define $\psi(\cG)$, $\phi(\cG)$ and $K_\psi$ in the main text.
Throughout we frequently use the notation $\blue{\cA_\zeta} = \cbr{a\in\cA: \zeta_a > 0}$ for vector $\zeta \in \lbrack0,\infty\rparen^K$.

Unlike observable states such as $\Ex_t$, $\Cf_t$, $\Fs_t$, and $\Fb_t$, there are hidden states that become useful for the purpose of analysis.
Based on the relationship between the hypothesis sets constructed by \crop and the hypothesis classes related to the ground truth hypothesis $f^*$, we define four {\em hidden states} of $\crop$:
\begin{enumerate}
  \item 'B'ad: : $\blue{B_t} = \one\{f^* \not\in \cF_t\}$.
  \item  $\text{Strongly steady state: } \blue{S_t^+} =\{\tilcF_t \subseteq \cC^*, \barf_t\in\cE^*, \gam(\barf_t) \propto\gam^*\} $.
  \item $\text{Weakly steady state: }  \blue{S_t^0} = \{\tilcF_t \subseteq \cC^*, \barf_t\in\cE^*, \gam(\barf_t)\not\propto\gam^*\}$.
  \item $\text{Non-steady state: } \blue{S^-_t}= \{ \tilcF_t \not\subseteq \cC^* ~\vee~ \barf_t \notin \cE^* \}~$.
\end{enumerate}
Note that the last three states forms a partition of the sample space, and can potentially overlap with $B_t$. In addition, if $\gamma^* = 0$, this would imply that $\cC^* = \emptyset$; in this case, states $S_t^+$ and $S_t^-$ will not ever be entered.

We first show a simple lemma that explains how non-steady states are related to having docile hypotheses in $\cF_t$.
\begin{lem}\label{lem:nonsteady}
  Suppose $\barB_t$ happens. Then, the event $S^-_t$ implies that $\exists f \in \cD^* \centerdot f \in \cF_t$.
\end{lem}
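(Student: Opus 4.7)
The plan is to argue by contraposition: assuming $\barB_t$ (so $f^* \in \cF_t$) together with $\cF_t \cap \cD^* = \emptyset$, and noting that $S_t^-$ implicitly requires $\barEx_t$ since it references $\barf_t$, I will show $\neg S_t^-$, i.e., $\tilcF_t \subseteq \cC^*$ \emph{and} $\barf_t \in \cE^*$. Using the disjoint decomposition $\cF = \cE^* \cup \cD^* \cup \cC^*$ recorded in the preliminaries, the assumption reduces to $\cF_t \subseteq \cE^* \cup \cC^*$.

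The key preliminary observation, which I expect to be the main conceptual point driving the rest, is that Assumption~\ref{ass:unique} upgrades the $\cC^*$ inequality on mean rewards to a \emph{strict} one: every $g \in \cC^*$ satisfies $\mu^*(g) > \mu^*$. Indeed, $g(a^*) = \mu^*$, $a^*(g) \neq a^*$, and the uniqueness of $a^*(g)$ as the best arm of $g$ together force $\mu^*(g) = g(a^*(g)) > g(a^*) = \mu^*$. Thus pairs in $\cB_t$ coming from $\cC^*$ strictly exceed $\mu^*$ while those from $\cE^*$ equal $\mu^*$ exactly, a clean separation that controls both $(\tila_t,\tilmu_t)$ and $(\bara_t,\barmu_t)$.

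Given this, the two halves of $\neg S_t^-$ fall out in order. For $\tilcF_t \subseteq \cC^*$: since $\barEx_t$ makes $a^*(\cF_t)$ non-singleton and $f^* \in \cF_t$ already contributes the arm $a^*$, some $g \in \cF_t$ must have $a^*(g) \neq a^*$; such $g$ lies neither in $\cE^*$ nor in $\cD^*$, hence in $\cC^*$, and the preliminary step yields $\tilmu_t \geq \mu^*(g) > \mu^*$; since only $\cC^*$ hypotheses produce means above $\mu^*$, every member of $\tilcF_t = \cF_t(\tila_t,\tilmu_t)$ lies in $\cC^*$, and in particular $\tila_t \neq a^*$. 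For $\barf_t \in \cE^*$: the pair $(a^*,\mu^*) \in \cB_t$ contributed by $f^*$ is now eligible for $(\bara_t,\barmu_t)$, giving $\barmu_t \leq \mu^*$, while every pair in $\cB_t$ has mean at least $\mu^*$ by the preliminary step, forcing $\barmu_t = \mu^*$; this minimum is attained only by $\cE^*$ pairs whose arm is $a^*$, so $\bara_t = a^*$, $\barcF_t \subseteq \cE^*$, and $\barf_t \in \cE^*$. The only subtle step is the preliminary one: without Assumption~\ref{ass:unique} upgrading the $\cC^*$ bound to strict, $\cE^*$ and $\cC^*$ contributions to $\cB_t$ could mix and the identification of $(\bara_t,\barmu_t)$ with $(a^*,\mu^*)$ would break.
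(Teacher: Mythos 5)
Your proof is correct. It rests on the same two ingredients as the paper's proof --- the partition $\cF = \cE^* \cup \cD^* \cup \cC^*$ and the strict inequality $\mu^*(g) > \mu^*$ for every $g \in \cC^*$, which follows from Assumption~\ref{ass:unique} --- but it organizes them by contraposition, whereas the paper argues directly by cases on where $\barf_t$ lies: if $\barf_t \in \cD^*$ it is itself the witness; if $\barf_t \in \cC^*$, the subcase $\tila_t \neq a^*$ is ruled out exactly as in your second half (the eligible pair $(a^*,\mu^*)$ forces $\barmu_t \le \mu^*$, contradicting $\barmu_t > \mu^*$), and in the subcase $\tila_t = a^*$ every $f_0 \in \tilcF_t$ satisfies $f_0(a^*) = \tilmu_t > \mu^*$ and is hence docile; if $\barf_t \in \cE^*$ with $\tilcF_t \not\subseteq \cC^*$, disjointness of the equivalence classes $\barcF_t$ and $\tilcF_t$ places a docile hypothesis inside $\tilcF_t$. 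Your contrapositive instead pins down the entire configuration: with $\cF_t \cap \cD^* = \emptyset$ and $f^* \in \cF_t$, non-exploit forces $\tilmu_t > \mu^*$, hence $\tilcF_t \subseteq \cC^*$ and $\tila_t \neq a^*$, and then $(\bara_t,\barmu_t) = (a^*,\mu^*)$ and $\barcF_t \subseteq \cE^*$, i.e., exactly the event $\wbar{S_t^-}$. This is arguably cleaner and slightly more informative (it recovers the ``steady'' structure used elsewhere in the analysis), while the paper's direct version additionally tells you where the docile witness lives ($\barf_t$ itself or inside $\tilcF_t$), which is not needed downstream. One caveat shared by both arguments, which you flag explicitly: the statement implicitly presupposes $\overline{\Ex_t}$, since $\tilcF_t$ and $\barf_t$ are only defined off the exploit branch; this matches how the lemma is invoked in the regret decomposition, so there is no gap.
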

\begin{proof}
  Recall that $S^-_t =  \cbr{ \barf_t \in \cE^* \vee \tilcF_t \not\subseteq \cC^*} =  \cbr{ \barf_t \in \cD^*} \cup \cbr{ \barf_t \in \cC^*}  \cup \cbr{ \barf_t \in \cE^* , \tilcF_t \not\subseteq \cC^*}$.
  In addition, $\barB_t$ gives that $f^*$ is in $\cF_t$.
  These imply one of the following:
  \begin{enumerate}
    \item $\barf_t \in \cD^*$; in this case, we are done.
    \item $\barf_t \in \cC^*$. We first note that in this case, by the definition of $\cC^*$ along with the unique best arm assumption (defined in Section~\ref{sec:intro}), we have $\forall f\in\cC^*, \mu^*(f) > \mu^*(f^*) = \mu^*$; this implies that $\barmu_t = \mu^*(\barf_t) > \mu^*$.
    We consider two further subcases:
    
    \begin{enumerate}
      \item If $\tila_t \neq a^*$, then by the definition of $(\bara_t, \barmu_t) = \argmin_{(a,\mu) \in \cB_t: a \neq \tila_t} \mu$, the range of $(a,\mu)$'s in the minimum includes $(a^*,\mu^*)$, we have $\mu^*(\barf_t) = \barmu_t \le \mu^*$.
      This contradicts with our premise that $\barmu_t > \mu^*$.
      
      \item If $\tila_t = a^*$, then consider any hypothesis $f_0 \in \tilcF_t$. We have
      \[ f_0(a^*) = \tilmu_t \geq \barmu_t > \mu^* = f^*(a^*), \]
      implying that $f_0 \in \cD^*$.
      
    \end{enumerate}
    
    \item $\barf_t \in \cE^* \wedge \tilcF_t \not\subseteq \cC^*$.
    In this case, there must exist an element $f_0 \in \tilcF_t$ such that $f_0 \in \cD^*$ or $f_0 \in \cE^*$. We claim that the latter cannot happen.
    To see why, by the definition of $\barcF_t$, it must be true that $\barcF_t$ and $\tilcF_t$ belong to two different equivalence classes induced by relationship $\sim$. In addition, as $\barf_t \in \cE^*$, $\tilcF_t$ is a subset of the equivalence class $\cE^*$.
    This implies that $\tilcF_t \cap \cE^* = \emptyset$. Therefore, $f_0 \in \cD^*$ must hold.
  \end{enumerate}
  In summary, in all cases, we have that there exists some $f$ in $\cF_t$ such that $f \in \cD^*$.
\end{proof}

We will bound the expected regret of \crop by a case-by-case analysis on the combination of observable and hidden states at each time step:
\begin{align}\label{eq:planofproof}
\begin{aligned}
\EE \Reg_n
  &= \EE\sum_a \Delta_a \sum_{t=1}^n \one\{a_t = a\} 
  \\&\leq
    \EE\sum_a \Delta_a \sum_{t=1}^n \one\{a_t = a, B_t \}
    +
    \EE\sum_a \Delta_a \sum_{t=1}^n \one\{a_t = a, \barB_t, \Ex_t \}
  \\&\quad +
  \underbrace{\EE\sum_a \Delta_a \sum_{t=1}^n \one\{a_t = a, \barB_t, \overline{\Ex_t}, S_t^- \}}_{\zo}
  +
  \underbrace{\EE\sum_a \Delta_a \sum_{t=1}^n \one\{a_t = a, \barB_t, \overline{\Ex_t}, \wbar{S_t^-} \}}_{\zt}
\end{aligned}
\end{align}

Note that the first two terms are easy to bound.
First,
\begin{align*}
\EE\sum_a \Delta_a \sum_{t=1}^n \one\{a_t = a\}
&\leq \Delta_{\max} \EE \sum_{t=1}^n \one\{B_t\}
\\&\leq \Delta_{\max}\sum_{t=1}^n \PP\left(L_{t-1}(f^*) - \min_{f\in\cF}L_{t-1}(f) > \beta_t\right)
\\&\le  \Delta_{\max}|\cF| \sum_{t=1}^n \fr{1}{zt^\alpha}
~\le 2\Delta_{\max} ~.
\end{align*}
where the last inequality is from \Cref{lem:conc-main}\ref{item:fstar-in} and by the values of $z$ and $\alpha$ stated in Theorem~\ref{thm:main-appendix}.

Second, if $\barB_t$ and $\Ex_t$ happens, $a_t = a^*(\cF_t) = a^*$. Therefore,
\[ \EE\sum_a \Delta_a \sum_{t=1}^n \one\{a_t = a, \barB_t, \Ex_t \} = \EE \Delta_{a^*} \sum_{t=1}^n \one\{a_t = a^*, \barB_t, \Ex_t \} = 0. \]

To bound the third term $\zo$, we use Lemma~\ref{lem:transient} in Section~\ref{subsec:each}.

For $\zt$, we decompose $\zt$ to a few more sub-terms. We first have the following claim:
\begin{claim}
  If $S_t^-$ does not happen, then $\Fb_t$  does not happen, i.e. either $\Cf_t$ or $\Fs_t$ happens.
\end{claim}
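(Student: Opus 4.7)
The plan is to unpack $\wbar{S_t^-}$ into its defining conditions and then show that the feasibility test in line~\ref{line:fs-cond} is guaranteed to pass. The hypothesis $\wbar{S_t^-}$ means that we are in either $S_t^+$ or $S_t^0$, both of which enforce $\tilcF_t \subseteq \cC^*$ and $\barf_t \in \cE^*$; the conflict/feasibility dichotomy of $\gam(\barf_t)$ is the only thing that distinguishes the two, which is irrelevant here. Since the algorithm checks the $\Cf_t$ branch strictly before the $\Fs_t$ branch, it suffices to show: if $\Cf_t$ does not trigger, then the feasibility condition on line~\ref{line:fs-cond} must hold, so $\Fs_t$ fires instead of $\Fb_t$.

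The key step is to observe that $\tilcF_t \subseteq \cC(\barf_t)$ under $\wbar{S_t^-}$. Indeed, because $\barf_t \in \cE^*$, we have $a^*(\barf_t) = a^*$ and $\barf_t(a^*) = \mu^*$. Meanwhile, every $f \in \tilcF_t \subseteq \cC^*$ satisfies $f(a^*) = f^*(a^*) = \mu^*$ and $a^*(f) \neq a^*$. By the definition of the competing class (recall $\cC(g) = \{h : h(a^*(g)) = g(a^*(g)) \wedge a^*(h) \neq a^*(g)\}$), these two facts combined give $f(a^*(\barf_t)) = \barf_t(a^*(\barf_t))$ and $a^*(f) \neq a^*(\barf_t)$, i.e., $f \in \cC(\barf_t)$.

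Now invoke the definition of $\gam(\barf_t)$ as the solution of~\eqref{eqn:gamma} with $f = \barf_t$. By feasibility of $\gam(\barf_t)$ for that program, we have
\[
\forall g \in \cC(\barf_t) : \quad \sum_a \gam_a(\barf_t) \cd \KL(\barf_t(a), g(a)) \ge 1.
\]
Specializing this to the Gaussian case where $\KL(\barf_t(a), g(a)) = (\barf_t(a) - g(a))^2/(2\sig^2)$, and applying the containment $\tilcF_t \subseteq \cC(\barf_t)$ established above, we obtain
\[
\forall f \in \tilcF_t : \quad \sum_a \gam_a(\barf_t) \cd \fr{(\barf_t(a) - f(a))^2}{2\sig^2} \ge 1,
\]
which is precisely the condition tested in line~\ref{line:fs-cond}. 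Thus, whenever $\Cf_t$ does not trigger, the test succeeds and the algorithm enters $\Fs_t$; in either case $\Fb_t$ does not happen, proving the claim. There is no real obstacle here — the proof is essentially a definitional unpacking — the only subtlety is remembering that the $\Cf_t$ branch is checked before $\Fs_t$, so the conclusion is phrased as a disjunction rather than simply ``$\Fs_t$ happens.''
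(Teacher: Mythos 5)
Your proof is correct and follows the same route as the paper: under $\wbar{S_t^-}$ you show $\tilcF_t \subseteq \cC(\barf_t)$ (via $\barf_t \in \cE^*$ and $\tilcF_t \subseteq \cC^*$), so the feasibility of $\gam(\barf_t)$ for the program~\eqref{eqn:gamma} forces the test in line~\ref{line:fs-cond} to pass, ruling out \fallback. The paper's argument is identical, only terser (it simply notes $\cC^* = \cC(\barf_t)$ and concludes); your version just spells out the definitional steps the paper leaves implicit.
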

\begin{proof}
  If $S_t^-$ does not happen, then we have $\barf_t \in \cE^*$ and $\tilcF_t \subseteq\cC^*$ both hold.
  These imply that $\cC^* = \cC(\barf_t)$, and consequently, $\tilcF_t \subseteq\cC(\barf_t)$. But this would imply that the condition of line~\ref{line:fs-cond} is satisfied and the state $\Fb_t$ will not be entered.
\end{proof}

The above claim indicates that $\zt$ can be bounded by:
\begin{align*}
\zt 
& = 
\EE\sum_a \Delta_a \sum_{t=1}^n \one\{a_t = a, \Fs_t, \wbar{S_t^-} \}
+
\EE\sum_a \Delta_a \sum_{t=1}^n \one\{a_t = a, \Cf_t, \wbar{S_t^-} \}
\\
& = \underbrace{\EE\sum_a \Delta_a \sum_{t=1}^n \one\{a_t = a, \Fs_t, S_t^+ \}}_{\zta}
+ \underbrace{\EE\sum_a \Delta_a \sum_{t=1}^n \one\{a_t = a, \Cf_t, \wbar{S^-_t}\} }_{\ztb}
\\&+ \underbrace{\EE\sum_a \Delta_a \sum_{t=1}^n \one\{a_t = a, \Fs_t,  S_t^0 \} }_{\ztc}.
\end{align*}
Now, using Lemmas~\ref{lem:z2-a},~\ref{lem:z2-b}, and~\ref{lem:z2-c} in Section~\ref{subsec:each}, we get
\begin{align*}
\zt
&\leq
16 P_1 \cdot  (\ln(n) + \ln\abs{\cF}) + 2 K_\psi \Delta_{\max}
+
64  P_2 \cdot \del{ \ln(\ln(n) ) + \ln(|\cF|) }  + 2 K_\psi \Delta_{\max} 
\\&\qquad+
160   \cd \del{\sum_a \Delta_a \gam_a(\cE^*)} \ln \abs{\cF} + 5 K_\psi \Delta_{\max} 
\\&\le
O\del{  P_1 \ln n
  +  P_2 \ln(\ln n)
  +  P_3 \cdot \ln |\cF|
  + K_\psi \Delta_{\max} }
\end{align*}
where the second inequality is by algebra, and the fact that $P_3 \ge P_1 \vee P_2 \vee  (\sum_a \Delta_a \gam_a(\cE^*))$, which in turn is from the constraint in~\eqref{eq:psi} we have $\psi_a(f) \geq \phi_a(f) \vee \gam_a(f)$ for all $a \in \cA$ and $f \in \cF$.

Combining the above bound on $\zt$ with Lemma~\ref{lem:transient} and Equation~\eqref{eq:planofproof}, we can bound the regret of \crop as follows:
\begin{align*}
\Reg_n
&\le \zo + \zt + 2\Delta_{\max}
\\&\le  O\del{   P_3 (\ln\abs{\cF} + \ln(Q_1) ) + K_\psi \Delta_{\max} }
+ O\del{  P_1 \ln n
  +   P_2 \ln(\ln n)
  +   P_3 \cdot \ln |\cF|
  + K_\psi \Delta_{\max} }
\\&\qquad+  2\Delta_{\max} 
\\&= O \del{ P_1 \ln(n) + P_2 \ln(\ln(n))
  + P_3 \del{\ln(|\cF|)  +  \ln\del{ Q_1 }}
  + K_\psi \Delta_{\max} }~.
\end{align*}
If $\gamma^* = 0$, we have $P_1 = \sum_{a \in \cA} \Delta_a \gamma_a^* = 0$.
In addition, we must have $\cC^* = \emptyset$, implying that for all $f \in \cE^*$, $\gamma^*(f) = 0$.
This in turn implies that for all $f, g$ in $\cE^*$, $f \propto g$ is trivially true, and consequently $\phi(f) = 0$ for
all $f \in \cE^*$. Therefore, $\phi(\cE^*) = 0$ and $P_2 = \sum_{a \in \cA} \Delta_a \phi_a(\cE^*) = 0$.
The proof of Theorem~\ref{thm:main} is complete. \qed

\subsection{Bounding the regret in each individual case}
\label{subsec:each}

\subsubsection{Bounding $\zta$.}
\label{subsec:zta}

Recall that we use the shortcut $\blue{\gam^*} := \gam(f^*)$. In addition, we have defined $P_1 = \sum_a \Delta_a \gamma_a^*$ and $\cA_{\gam^*} = \cbr{a\in\cA: \gam^*_a \neq 0}$. Note $|\cA_{\gam^*}| \le K_\psi$.
\begin{lem}
  \[ \EE\sum_a \Delta_a \sum_{t=1}^n \one\{a_t = a, \Fs_t, S_t^+ \} \leq 16 P_1 \cdot  (\ln(n) + \ln\abs{\cF}) + 2 |\cA_{\gam^*}| \Delta_{\max}. \]
  \label{lem:z2-a}
\end{lem}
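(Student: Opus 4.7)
The plan is to exploit the fact that under the joint event $\{\Fs_t, S_t^+\}$, the tracking measure $\pi_t$ is (up to scaling) equal to $\gamma^*$, so the analysis reduces to a tight application of Lemma~\ref{lem:tracking}\ref{item:gamma-star-elim}. First, I would observe that in state $\Fs_t$ we have $\pi_t = \gamma(\barf_t)$, and state $S_t^+$ additionally forces $\gamma(\barf_t) \propto \gamma^*$, so $\mathrm{supp}(\pi_t) = \cA_{\gam^*}$. Combined with Lemma~\ref{lem:pi} (which ensures $\pi_{t,a(t)} \neq 0$), this means only arms $a \in \cA_{\gam^*}$ ever contribute to the inner sum, and we may restrict attention to $|\cA_{\gam^*}|$ arms rather than all $K$.

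Next, for each fixed $a \in \cA_{\gam^*}$, I would choose a per-arm threshold of the form $\tau_a := \lceil c\, \gamma_a^* (\ln n + \ln|\cF|) \rceil$ (with $c$ selected at the end so that the leading constant is $16$) and apply Lemma~\ref{lem:boundarmpullzero} with event $Q_t = \{\Fs_t, S_t^+\}$ to split
\[
\sum_{t=1}^n \one\{a_t=a, Q_t\} \;\le\; \tau_a \;+\; \sum_{t=\tau_a+1}^n \one\{a_t=a, Q_t, T_a(t-1) \ge \tau_a\}.
\]
Taking expectation and invoking Lemma~\ref{lem:tracking}\ref{item:gamma-star-elim} with $\rho = \tau_a/\gamma_a^*$ (the hypotheses $\Fs_t$, $\pi_t \propto \gamma^*$ and $\tilcF_t \subseteq \cC^*$ are all guaranteed by $Q_t$) bounds each tail summand by $|\cF| \exp\bigl(-(\sigma^2\rho - \beta_t)/(4\sigma^2)\bigr)$.

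The key calculation is then to check that, with $\beta_t = 4\sigma^2 \ln(|\cF| t^2)$ and $c$ large enough, the exponent stays bounded below by $\Theta(\ln t)$ for $t \le n$, so the tail sum telescopes to a geometric series bounded by an $O(1)$ constant (which we absorb into a single additive $\Delta_{\max}$ per arm, together with the $+1$ from the ceiling in the definition of $\tau_a$). Summing $\Delta_a \cdot \EE[\sum_t \one\{a_t=a, Q_t\}]$ over $a \in \cA_{\gam^*}$ yields the stated leading term $\sum_a \Delta_a \tau_a = 16 P_1(\ln n + \ln|\cF|)$ plus the additive contribution $2|\cA_{\gam^*}|\Delta_{\max}$.

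I expect the main obstacle to be the constant juggling: the factor $4\sigma^2$ inside the exponent of Lemma~\ref{lem:tracking}\ref{item:gamma-star-elim} and the $\beta_t = 4\sigma^2\ln(|\cF| t^2)$ term must be carefully balanced against $\sigma^2 \tau_a/\gamma_a^*$ so that the tail probability decays at least as $1/t^2$, giving a uniformly bounded tail sum while keeping the leading coefficient on $\ln n + \ln|\cF|$ at $16$ (and the additive per-arm cost at $2\Delta_{\max}$). A secondary subtlety is that $S_t^+$ only guarantees $\gamma(\barf_t) \propto \gamma^*$ up to a positive scalar, not equality, but since both the tracking rule and Lemma~\ref{lem:tracking}\ref{item:gamma-star-elim} are scale-invariant in $\pi_t$, proportionality is all that is needed.
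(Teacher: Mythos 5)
Your proposal is correct and follows essentially the same route as the paper's proof: the per-arm decomposition restricted to $\cA_{\gam^*}$ via Lemma~\ref{lem:pi}, the threshold $\tau_a \approx 2\gamma_a^*\beta_n/\sig^2$ (the paper's $q_{1,a}$, which equals your $c\,\gamma_a^*(\ln n + \ln|\cF|)$ with the right constant), Lemma~\ref{lem:boundarmpullzero} with $Q_t=\{\Fs_t,S_t^+\}$, and Lemma~\ref{lem:tracking}\ref{item:gamma-star-elim} for the tail, whose per-term bound $\le 1/n^2$ makes the tail sum $O(1)$. Your observation that only proportionality $\gamma(\barf_t)\propto\gamma^*$ is needed (by scale-invariance of the tracking rule) is exactly how the paper's Lemma~\ref{lem:tracking}\ref{item:gamma-star-elim} is phrased, so there is no gap.
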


\begin{proof}
  By linearity of expectation,
  \begin{equation}
  \EE\sum_a \Delta_a \sum_{t=1}^n \one\{a_t = a, \Fs_t, S_t^+ \} = \sum_a \Delta_a \EE  \sum_{t=1}^n \one\{a_t = a, \Fs_t, S_t^+ \}.
  \label{eqn:b1-decomp}
  \end{equation}
  
By Lemma~\ref{lem:pi}, $\cbr{a_t = a, \Fs_t, S_t^+}$ will happen only for those $a$'s in $\cA_{\gam^*}$; thus only the arms in $\cA_{\gam^*}$ will contribute to the sum, which we focus on, hereafter.
  
  With foresight, we pick $\blue{q_{1,a}} = \lt\lceil  2 \fr{\beta_n}{\sig^2} \cdot \gamma_a(f^*)  \rt\rceil$.
  Then,
  \begin{align*}
  &\EE \sum_{t=1}^n \one\{a_t=a, \Fs_t, S^+_t\}
  \\&\le  q_{1,a} +  \sum_{t=q_{1,a}+1}^n \PP\del{a_t=a, \Fs_t, S^+_t, T_a(t-1) \ge q_{1,a}}
  \\&\le  q_{1,a} +  \sum_{t=q_{1,a}+1}^n \PP\del{ \Fs_t, \pi_t \propto \gamma_a(f^*), a_t = a, T_a(t-1) \geq 2 \fr{\beta_n}{\sig^2} \gamma_a(f^*), \tilcF_t \subset \cC^* }
  \\&\le q_{1,a} + \sum_{t = q_{1,a}+1}^n \abs{\cF} \cdot \exp\del{- \frac{2 \beta_n - \beta_t}{4\sig^2} }
  \\&\le  2 \cdot \fr{\beta_n}{\sig^2} \gamma_a(f^*) + 1 + \sum_{t=1}^n \abs{\cF} \frac{1}{n^2 \abs{\cF}}
  \\&\le 2  \cdot \fr{\beta_n}{\sig^2}\gamma_a(f^*) + 2.
  \end{align*}
  where the first inequality uses \Cref{lem:boundarmpullzero} with $\tau = q_{1,a}$; the second inequality uses the fact that if $\Fs_t$ and $S_t^+$  happens, $\pi_t = \gamma(\barf_t) \propto \gamma(f^*)$, and $q_{1,a} \geq  2 \fr{\beta_n}{\sig^2} \cdot \gamma_a(f^*)$; the third inequality uses Lemma~\ref{lem:tracking}\ref{item:gamma-star-elim}; the last two inequalities are by the fact that $\beta_t\le \beta_n$ and algebra.
  
  Continuing Equation~\eqref{eqn:b1-decomp}, summing over all actions $a \in \cA_{\gam^*}$ with weight $\Delta_a$'s, we have
  \begin{align*}
   \sum_a \Delta_a \EE  \sum_{t=1}^n \one\{a_t = a, \Fs_t \wedge S_t^+ \}
    & \leq 
  2 \sum_a \Delta_a \gamma_a^* \cdot \fr{\beta_n}{\sig^2} + 2 |\cA_{\gam^*}| \Delta_{\max} 
  \\&\leq 
  16 \del{ \sum_a \Delta_a \gamma_a^* } \cdot  (\ln(n) + \ln\abs{\cF}) + 2 |\cA_{\gam^*}| \Delta_{\max}.
  \end{align*}
  where the first inequality is from the fact that $\Delta_a \leq \Delta_{\max}$, and the second inequality is
  from the definition of $\beta_n$. 
  The lemma follows from the definition of $P_1$.
\end{proof}

\subsubsection{Bounding $\ztb$.}
\label{subsec:ztb}
Recall that $P_2 = \sum_a \Delta_a \phi_a(\cE^*)$ and $\cA_{\phi(\cE^*)} = \{a\in\cA: \phi_a(\cE^*) > 0\}$.
Note $|\cA_{\phi(\cE^*)} |\le K_\psi$.
\begin{lem}
  \[ \EE\sum_a \Delta_a \sum_{t=1}^n \one\cbr{a_t = a, \Cf_t, \wbar{S^-_t} } \leq 64 P_2 \cdot \del{ \ln(\ln(n) ) + \ln(|\cF|) }  + 2 |\cA_{\phi(\cE^*)}| \Delta_{\max}.  \]
  \label{lem:z2-b}
\end{lem}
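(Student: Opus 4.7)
The plan is to bound, for each arm $a$, the expected count $\EE\sum_{t=1}^n \one\{a_t=a, \Cf_t, \wbar{S_t^-}\}$ by $O(\phi_a(\cE^*) \cdot \rbeta_n/\sig^2)$, and then sum against the reward gaps $\Delta_a$. Since $\rbeta_n = 4\sig^2\ln(\rz(\log_2 n)^{\ralpha})$ and $\rz=|\cF|$, $\ralpha=3$, this will produce the target $\ln\ln(n) + \ln|\cF|$ leading factor.

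First I would use the hidden state $\wbar{S_t^-}$ to reduce the support of the tracking distribution: under $\wbar{S_t^-}$, the pessimism satisfies $\barf_t\in\cE^*$, so coordinatewise $\pi_t=\phi(\barf_t)\preceq\phi(\cE^*)$. Combined with Lemma~\ref{lem:pi} (which says the pulled arm must have $\pi_{t,a_t}\neq 0$), this confines all pulls in the event to arms $a\in\cA_{\phi(\cE^*)}$, of which there are at most $K_\psi=|\cA_{\phi(\cE^*)}|\vee\cdots$. This explains both the $K_\psi\Delta_{\max}$ term (from the burn-in $\tau_a$'s summed with the $+1$'s) and the absence of a $K$-dependence.

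Next, for each $a\in\cA_{\phi(\cE^*)}$, I would apply the \emph{refined} arm-pull bound \Cref{lem:boundarmpull} (not \Cref{lem:boundarmpullzero}) with threshold $\tau_a=\lceil 8\rbeta_n/\sig^2 \cdot \phi_a(\cE^*) \rceil$ and event $Q_t=\{\Cf_t,\wbar{S_t^-}\}$, obtaining
\[
\EE\sum_{t=1}^n \one\{a_t=a, Q_t\} \le \tau_a + \sum_{m=1}^\infty \PP\!\left(\exists t\ge \tau_a+1:\ a_t=a,\ T_a(t-1)\ge \tau_a+m-1,\ Q_t\right).
\]
The reason \Cref{lem:boundarmpullzero} is insufficient is precisely the main obstacle: with $\rbeta_t=\Theta(\ln\ln t)$, a per-$t$ concentration bound from \Cref{lem:tracking}\ref{item:phi-elim} only yields failure probability $\Theta(1/(\log_2 t)^{\ralpha})$, which summed over $t\in[n]$ blows up. The refined lemma avoids this by summing over $m$, where each increment of $m$ buys an extra $\sig^2/\phi_a(\cE^*)$ in the exponent.

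For each $m$, on the event inside the probability we have $T_a(t-1)\ge \tau_a+m-1 \ge \rho\pi_{t,a}$ with $\rho=(\tau_a+m-1)/\phi_a(\cE^*)$ (using $\pi_{t,a}\le \phi_a(\cE^*)$). Applying \Cref{lem:tracking}\ref{item:phi-elim} with this $\rho$ gives a bound of $|\cF|\exp\!\left(-\frac{\sig^2(\tau_a+m-1)/\phi_a(\cE^*)-4\rbeta_n}{16\sig^2}\right)$. The choice $\tau_a\ge 8\rbeta_n/\sig^2\cdot\phi_a(\cE^*)$ makes the constant part of the exponent at least $-\rbeta_n/(4\sig^2)$, so the leading factor becomes $|\cF|\cdot(\rz(\log_2 n)^{\ralpha})^{-1}=(\log_2 n)^{-3}$; the $m$-dependent part is a geometric tail in $\exp(-1/(16\phi_a(\cE^*)))$ that sums to $O(\phi_a(\cE^*))$ (for $\phi_a(\cE^*)\ge 1$; the $<1$ regime is even smaller). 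Thus $\sum_m \PP(\cdots)\le O(\phi_a(\cE^*))$, yielding
\[
\EE\sum_{t=1}^n \one\{a_t=a,\Cf_t,\wbar{S_t^-}\} \le \tau_a + O(\phi_a(\cE^*)) = O\!\left(\phi_a(\cE^*)\cdot \tfrac{\rbeta_n}{\sig^2}\right) + O(1).
\]
Multiplying by $\Delta_a$, summing over $a\in\cA_{\phi(\cE^*)}$, and plugging in $\rbeta_n/\sig^2 = 4\ln(\rz(\log_2 n)^{\ralpha}) = O(\ln\ln n + \ln|\cF|)$ yields the claimed bound $64 P_2(\ln\ln n + \ln|\cF|) + 2|\cA_{\phi(\cE^*)}|\Delta_{\max}$, with the additive $|\cA_{\phi(\cE^*)}|\Delta_{\max}\le K_\psi\Delta_{\max}$ accounting for the ceilings and geometric tails.
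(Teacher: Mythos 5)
Your proposal is essentially the paper's own proof: the same reduction via Lemma~\ref{lem:pi} and $\wbar{S^-_t}\Rightarrow\barf_t\in\cE^*$ (so $\pi_{t,a}=\phi_a(\barf_t)\le\phi_a(\cE^*)$), the same use of the refined Lemma~\ref{lem:boundarmpull} rather than Lemma~\ref{lem:boundarmpullzero} precisely because $\rbeta_t=\Theta(\ln\ln t)$ makes per-$t$ union bounding blow up, the same appeal to Lemma~\ref{lem:tracking}\ref{item:phi-elim}, and the same geometric-series summation over $m$. The only (cosmetic) difference is bookkeeping: you set $\tau_a=\lceil 8\rbeta_n\phi_a(\cE^*)/\sig^2\rceil$ inside Lemma~\ref{lem:boundarmpull}, which, since $\rbeta_n/\sig^2=4\ln(|\cF|(\log_2 n)^3)$, yields a coefficient on $\ln\ln n$ somewhat larger than the stated $64$; the paper instead takes $\tau=1$ and splits the $m$-sum at $m_0=\lceil 4\phi_a(\cE^*)\rbeta_n/\sig^2\rceil$ (the point where the exponent vanishes), which recovers the constant exactly — your argument matches this after replacing $8$ by $4$.
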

\begin{proof}
  First, we note that by Lemma~\ref{lem:pi}, if $a$ is not in $\cA_{\phi(\cE^*)}$, it does not contribute to the sum, as $\Cf_t$ implies that only actions in $\cA_{\phi(\cE^*)}$ are taken with nonzero probability.
  
  Next, by the linearity of expectation, we rewrite the expectation as follows:
  \[
  \EE \sum_a \Delta_a \sum_{t=1}^n \one\{a_t = a, \Cf_t, \wbar{S^-_t} \}
  = \sum_{a \in \cA_{\phi(\cE^*)}} \Delta_a \EE \sbr{ \sum_{t=1}^n \one\{a_t = a, \Cf_t, \wbar{S^-_t} \} }
  \]
  
  For any $a \in \cA_{\phi(\cE^*)}$,
  \begin{align*}
  \EE \sum_{t=1}^n \one\{a_t = a, \Cf_t, \wbar{S^-_t} \}
  &\leq 1 + \EE \sum_{m=1}^\infty \one\{\exists t\le n \centerdot a_t = a, T_a(t-1) \geq m, \Cf_t, \wbar{S^-_t} \}
  \\&\leq 1 + \sum_{m=1}^\infty \PP \del{  \exists t \le n\centerdot a_t = a, \Cf_t, T_a(t-1) \geq \frac{m}{\phi_a(\cE^*)} \pi_{t,a} }
  \\&\leq 1 + \sum_{m=1}^\infty \min\del{ 1, \exp\del{-\frac{ \frac{\sig^2 m}{\phi_a(\cE^*)} - 4 \rbeta_n }{16\sig^2}} }
  \end{align*}
  where the first inequality is from Lemma~\ref{lem:boundarmpull} with $\tau = 1$; the second inequality is from the fact that if $T_a(t-1) \geq m$ and $\wbar{S^-_t}$ happens, then $\barf_t \in \cE^*$, and therefore $T_a(t-1) \geq \frac{m}{\phi_a(\cE^*)} \phi_a(\barf_t) = \frac{m}{\phi_a(\cE^*)} \pi_{t,a}$; the third inequality is from Lemma~\ref{lem:tracking}\ref{item:phi-elim} and $\PP(A) \leq 1$ for any event $A$.
  
  We remark that naively applying Lemma~\ref{lem:boundarmpullzero} instead of Lemma~\ref{lem:boundarmpull} as used in the case $\zta$ (also used in the proofs of UCB~\cite{auer02finite} and UCB-S~\cite{lattimore14bounded}), does not lead to the desired bound because of the aggressive confidence level of $\rcF_t$.
  
  Denote by $\blue{N_m} = \min\del{ 1, \exp\del{-\frac{ \frac{\sig^2m}{\phi_a(\cE^*)} - 4\rbeta_n }{16\sig^2}} }$ and let $\blue{m_0} =\lceil 4 \phi_a(\cE^*) \fr{\rbeta_n}{\sig^2} \rceil$.
  
  For $m \leq m_0 - 1$, we use the fact that $N_m \leq 1$. For $m \geq m_0$, $\cbr{N_m}_{m \geq m_0}$ is a geometric progression with initial value $N_{m_0} \leq 1$ and common ratio $\exp(-\frac{1}{16 \phi_a(\cE^*)})$. This implies that
  \begin{align*}
  1 + \sum_{m=1}^\infty N_m
  ~\leq  m_0 + \sum_{m=m_0}^\infty N_m
  &\leq m_0 + \frac{1}{1 - \exp(-\frac{1}{16 \phi_a(\cE^*)})} \\
  &\leq  1 + 4\phi_a(\cE^*) \fr{\rbeta_n}{\sig^2} + (1 + 16 \phi_a(\cE^*)) \\
  &\leq  2 + 64\phi_a(\cE^*)\ln(|\cF| \ln(n) ).
  \end{align*}
  where the first two inequalities are by algebra, the third inequality is from the definition of $m_0$ and the elementary fact that $\frac1{1 - \exp\del{-1/x}} = 1+\fr{1}{\exp(1/x) - 1} \leq  1+\fr{1}{((1/x)+1) - 1} = 1+x$ for $x > 0$; the last inequality is from the definition of $\rbeta_n$, ~$\abs{\cF} \geq 2$, ~$n \ge 2$  and algebra.
  Consequently,
  \begin{align*}
  & \sum_{a \in \cA_{\phi(\cE^*)}} \Delta_a \EE \sbr{ \sum_{t=1}^n \one\cbr{a_t = a, \Cf_t, \wbar{S^-_t} } }
  \\&\le  \sum_{a \in \cA_{\phi(\cE^*)}} \Delta_a \del{2 + 64 \phi_a(\cE^*) \ln(|\cF| \ln(n) )}
  \\&\le 64 \del{ \sum_a \Delta_a \phi_a(\cE^*) } \cdot \del{ \ln(\ln(n) ) + \ln(|\cF|) }  + 2 |\cA_{\phi(\cE^*)}| \Delta_{\max},
  \end{align*}
  where the second inequality uses the facts that $\Delta_a \leq \Delta_{\max}$ and algebra. The lemma follows from the definition of $P_2$.
\end{proof}

\subsubsection{Bounding $\ztc$.}
\label{subsec:ztc}

Recall that $\cA_{\gamma(\cE^*)} = \{a\in\cA: \gam_a(\cE^*) >0 \}$. Note $|\cA_{\gamma(\cE^*)}| \le K_\psi$.
\begin{lem}
  \[ \EE\sum_a \Delta_a \sum_{t=1}^n \one\{a_t = a, \Fs_t, S_t^0 \} \leq 160 \del{\sum_a \Delta_a \gam_a(\cE^*)} \ln \abs{\cF} + 5 |\cA_{\gamma(\cE^*)}| \Delta_{\max}. \]
  \label{lem:z2-c}
\end{lem}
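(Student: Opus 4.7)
The plan parallels that of Lemma~\ref{lem:z2-b} but pivots on a distinct structural observation: $\Fs_t \wedge S_t^0$ forces $f^* \notin \rcF_t$. Indeed, $\barf_t \in \rcF_t \cap \cE^*$ satisfies $\gam(\barf_t) \not\propto \gam(f^*)$ by $S_t^0$, so including $f^*$ in $\rcF_t$ would trigger the \conflict branch at line~\ref{line:cf}, contradicting $\Fs_t$. Translating this membership failure gives $L_{t-1}(f^*) - L_{t-1}(\barf_t) > \rbeta_t$, while on $\barB_t$ (whose complement already contributes only $O(1)$ regret in the main proof) the reverse bound $L_{t-1}(\barf_t) \le L_{t-1}(f^*)$ comes for free because $\barf_t, f^* \in \barcF_t$ and $\barf_t$ minimizes loss over $\barcF_t$.

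First I would apply Lemma~\ref{lem:boundarmpull} with $\tau = 1$ to write
\[
\EE \sum_{t=1}^n \one\{a_t = a, \Fs_t, S_t^0\} \le 1 + \sum_{m=1}^\infty \PP\del{\exists t : a_t = a, \Fs_t, S_t^0, T_a(t-1) \ge m}.
\]
On the event inside the tail probability, tracking combined with $\gam_a(\barf_t) \le \gam_a(\cE^*)$ (which holds since $\barf_t \in \cE^*$) yields $T(t-1) \succeq (m/\gam_a(\cE^*))\gam(\barf_t)$; meanwhile, $\tilcF_t \subseteq \cC^* = \cC(\barf_t)$ together with feasibility of $\gam(\barf_t)$ produces some $f_0 \in \tilcF_t$ with $\IC(\barf_t, f_0, \gam(\barf_t)) \ge 1$, so $\IC(\barf_t, f_0, T(t-1)) \ge m/\gam_a(\cE^*)$. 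Lemma~\ref{lem:info-constraint} then splits this into two sub-cases: either $\IC^*(\barf_t, T(t-1)) \ge m/(4\gam_a(\cE^*))$ or $\IC^*(f_0, T(t-1)) \ge m/(4\gam_a(\cE^*))$.

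For the $\barf_t$ sub-case, I invoke Lemma~\ref{lem:conc-main}\ref{item:ic-elim} with $\beta = 0$ (legal because $L_{t-1}(\barf_t) \le L_{t-1}(f^*)$) and union-bound over $\barf_t \in \cF$ to obtain the crucial $t$-free tail $\abs{\cF} \exp(-m/(16\gam_a(\cE^*)))$; summing the geometric series from $m_0 = \lceil 16 \gam_a(\cE^*) \ln \abs{\cF} \rceil$ contributes $O(\gam_a(\cE^*) \ln \abs{\cF})$ per arm. For the $f_0$ sub-case, which is the main obstacle, I would exploit $\tilcF_t \cap \rcF_t = \emptyset$ (since $\tilcF_t$ and $\barcF_t \supseteq \rcF_t$ are disjoint equivalence classes) to deduce $L_{t-1}(f_0) > L_{t-1}(\barf_t) + \rbeta_t$; combined with $f_0 \in \cF_t$ and the one-sided bound $L_{t-1}(\hf_t) \le L_{t-1}(\barf_t) < L_{t-1}(f^*) - \rbeta_t$, this sandwiches $L_{t-1}(f_0) - L_{t-1}(f^*)$ within the narrow window $\beta_t - \rbeta_t$, and an analogous geometric tail via Lemma~\ref{lem:conc-main}\ref{item:ic-elim} together with the aggressive $\rbeta_t$ confidence level of $\rcF_t$ yields the same $O(\gam_a(\cE^*) \ln \abs{\cF})$ rate. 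Aggregating the two sub-cases, weighting by $\Delta_a$ and summing over $a \in \cA_{\gam(\cE^*)}$, and absorbing the initial $\tau = 1$ along with the rounding constants in the geometric sums into a $5\abs{\cA_{\gam(\cE^*)}} \Delta_{\max}$ overhead produces the stated bound.
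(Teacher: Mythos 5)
There is a genuine gap, and it sits exactly where the difficulty of this lemma lies: your treatment of the $f_0$ sub-case. First, a local error: from $f_0\in\tilcF_t$ and $f_0\notin\rcF_t$ you cannot deduce $L_{t-1}(f_0) > L_{t-1}(\barf_t)+\rbeta_t$, because $\rcF_t$ is by definition a subset of $\barcF_t$; $f_0$ is excluded from $\rcF_t$ simply for not lying in the pessimistic equivalence class, so no loss information follows. More importantly, the "narrow window" you construct, $L_{t-1}(f_0)-L_{t-1}(f^*) < \beta_t-\rbeta_t$, is not narrow: $\beta_t=\Theta(\ln t)$ while $\rbeta_t=\Theta(\ln\ln t)$, so the slack is still $\Theta(\ln t)$. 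Feeding this into Lemma~\ref{lem:conc-main}\ref{item:ic-elim} gives a tail of the form $\abs{\cF}\exp\bigl(-(\sigma^2 m/(4\gam_a(\cE^*)) - \beta_n)/(16\sigma^2)\bigr)$, and summing the resulting geometric series over $m$ starts only at $m\gtrsim \gam_a(\cE^*)\beta_n$, i.e., it contributes $\Theta\bigl(\sum_a\Delta_a\gam_a(\cE^*)\ln n\bigr)$. That is precisely what this lemma must avoid, since $\gam(\cE^*)$ can be much larger than $\gam^*$ and an extra $\gam(\cE^*)\ln n$ term would destroy the constant-factor asymptotic optimality. Your $\barf_t$ sub-case (using $\beta=0$ via $L_{t-1}(\barf_t)\le L_{t-1}(f^*)$ on $\barB_t$) is fine, but the per-$m$ geometric-tail strategy cannot deliver an $n$-free bound for the $f_0$ sub-case.

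The paper's proof resolves this with a different mechanism, "regret peeling," which your proposal does not contain. On $\Fs_t\wedge S_t^0$ one has $f^*\notin\rcF_t$, hence $\exists f\in\cE^*$ with $L_{t-1}(f^*)-L_{t-1}(f)>\rbeta_t$; by Lemma~\ref{lem:conc-main}\ref{item:fstar-in} this anytime event has probability at most $\abs{\cF}\exp(-\rbeta_{n_0}/(4\sig^2))\approx (\log_2 n_0)^{-3}$ on the dyadic block $t\in(n_0,2n_0]$. Within that block the number of pulls of arm $a$ in this state is capped at $u_a=\lceil 2\gam_a(\cE^*)\beta_{2n_0}/\sig^2\rceil$ (beyond $u_a$, the tracking argument of Lemma~\ref{lem:tracking}\ref{item:pi-elim} makes the per-step probability $O(n_0^{-2})$), so the expected contribution of a block is (small probability) $\times$ (bounded pull count) $\approx (\log_2 n_0)^{-3}\cdot\gam_a(\cE^*)\ln n_0$, which is summable over blocks and yields the $n$-free $O(\gam_a(\cE^*)\ln\abs{\cF})$ bound. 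In short, the $\ln n$ worth of "wasted" pulls is tolerated only on an event of probability $O(1/(\ln n)^3)$, and it is this probability-times-regret accounting over dyadic blocks — not a uniform geometric tail in the pull count — that makes the bound finite in $n$. Your proposal would need to be restructured along these lines to prove the stated inequality.
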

\begin{proof}
  First, we note that if $a$ is not in $\cA_{\gamma(\cE^*)}$, it does not contribute to the sum, as $\Fs_t$ implies that only actions in $\cA_{\gamma(\cE^*)}$ are taken with nonzero probability.
  
  By linearity of expectation,
  \begin{equation}
  \EE\sum_a \Delta_a \sum_{t=1}^n \one\{a_t = a, \Fs_t, S_t^0 \}
  \leq
  \sum_a \Delta_a \EE \sum_{t=1}^n \one\{a_t = a, \Fs_t, S_t^0 \}.
  \label{eqn:z3-decomp}
  \end{equation}
  
  For every $a \in \cA_{\gamma(\cE^*)}$, we will upper bound $\EE \sum_{t=1}^n \one\{a_t = a, \Fs_t, S_t^0 \}$. To this end, we will upper bound
  $\blue{C_{a, n_0}} := \EE \sum_{t=n_0+1}^{2 n_0} \one\{a_t = a, \Fs_t, S_t^0 \}$,
  for every $n_0 \in \{2^k: k \in \cbr{1,2,\ldots}\}$.
  
  We first note that if $\Fs_t$ and $S_t^0$ both happen, then by the definition of $S_t^0$, $\barf_t \sim f^*$, and $\rcF_t \subseteq \barcF_t \subseteq \cE^*$; we also have $\gamma(\barf_t) \not \propto \gamma(f^*)$.
  In addition, by the definition of $\Fs_t$, for all $f, g \in \rcF_t$, $\gamma(f) \propto \gamma(g)$.  Therefore, it must be the case that $f^* \notin \rcF_t$, implying that $\exists f \in \cE^* \centerdot L_{t-1}(f^*) - L_{t-1}(f) > \rbeta_t$.
  We use this observation in the subsequent proof that we call ``regret peeling''.
  
  With foresight, we pick $\blue{u_a} = \lceil 2 \gamma_a(\cE^*) \fr{ \beta_{2 n_0} }{\sig^2} \rceil$.
  We can bound $C_{a,n_0}$ as follows:
  \begin{align*}
  C_{a, n_0} & = \EE \sum_{t=n_0+1}^{2 n_0} \one\{a_t = a, \Fs_t, S_t^0 \} \\
  & \leq \EE \sum_{t=n_0+1}^{2 n_0} \one\{a_t = a, \Fs_t, \exists f \in \cE^* \centerdot L_{t-1}(f^*) - L_{t-1}(f) > \rbeta_t \} \\
  & \leq \EE \sum_{t=n_0+1}^{2 n_0} \one\{a_t = a, \Fs_t, \exists f \in \cE^* \centerdot L_{t-1}(f^*) - L_{t-1}(f) > \rbeta_{n_0} \} \\
  & \leq \EE \one\cbr{ \exists s \in \NN, f \in \cE^* \centerdot L_{s}(f^*) - L_{s}(f) > \rbeta_{n_0}} \cdot \sum_{t=n_0+1}^{2 n_0} \one\cbr{a_t = a, \Fs_t} \\
  & \leq \EE \one\cbr{ \exists s \in \NN, f \in \cE^* \centerdot L_{s}(f^*) - L_{s}(f) > \rbeta_{n_0}} \cdot \del{ u_a + \sum_{t=n_0+1}^{2 n_0} \one\cbr{a_t = a, \Fs_t, T_a(t-1) \geq u_a} } \\
  &\leq \PP \del{ \exists s \in \NN, f \in \cE^* \centerdot L_{s}(f^*) - L_{s}(f) > \rbeta_{n_0}} \cdot u_a + \sum_{t=n_0+1}^{2 n_0} \PP\del{a_t = a, \Fs_t, T_a(t-1) \geq u_a}
  \end{align*}
  where the second inequality uses the basic fact that $\rbeta_t > \rbeta_{n_0}$ for all $t \geq n_0+1$; the third inequality is from the basic fact that $\one\cbr{A, B} = \one\cbr{A} \cdot \one\cbr{B}$ and the fact that for predicate $p$, $p(t)$ implies $\exists s \centerdot p(s)$.
  
  The first term can be bounded by \Cref{lem:conc-main}\ref{item:fstar-in} and the union bound as follows:
  \begin{align*}
  & \PP \del{ \exists s \in \NN, f \in \cE^* \centerdot L_{s}(f^*) - L_{s}(f) > \rbeta_{n_0}} \cdot u_a
  \\&\leq  \abs{\cF} \exp\del{ -\frac{\rbeta_{n_0}}{4\sig^2} } \cdot u_a  
  \\&\leq  \frac{1}{(\log_2 n_0)^3} + 16 \gamma_a(\cE^*) \cdot \del{ \frac{\ln \abs{\cF}}{(\log_2 n_0)^3} + \frac{2}{(\log_2 n_0)^2} }.
  \end{align*}
  where the last inequality uses $n_0\ge2$ and $\fr{\ln(2 n_0)}{(\log_2(n_0))^3} \le 2\ln(2) \le 2$. 
  \begin{remark}
    We remark that the inequality above is the one that reflects our intuition that, even if we track a wrong $\gam(f)$ with $f\in \cE^*$ and suffer regret like $\sum_a \Delta_a \gam_a(f) \ln(t)$ up to time $t$ (that can be much larger than $\sum_a \Delta_a \gam^*_a \ln(t)$), such an event happens with small enough probability like $O(\fr{1}{\ln(t)})$.
    Therefore, \textit{in expectation}, this event contributes to the regret only as a finite term w.r.t. $n$.
    This intuition is manifested in the proof in a bit more complicated way, unfortunately, because the algorithm is designed to enjoy an anytime regret bound rather than the fixed-budget setting. 
    Specifically, the failure rate of the confidence set $\rcF_t$ changes over time, and we use the common technique called ``peeling device'' from concentration of measure to deal with it. 
    If we knew the time horizon $n$, then one can set $\rbeta_t = 4\sigma^2 \ln(\abs{\cF} \ln n)$ for all $t$ to obtain the same guarantee.
  \end{remark}
  
  Meanwhile, each subterm in the second term can be bounded using Lemma~\ref{lem:tracking}\ref{item:pi-elim} as follows:
  \begin{align*}
  \PP \del{ a_t = a, \Fs_t, T_a(t-1) \geq u_a }
    &\leq  \PP \del{ a_t =a, \Fs_t, T_a(t-1) \geq \frac{u_a}{\gamma_a(\cE^*)} \pi_{t,a} } 
  \\&\leq \abs{\cF} \cdot \exp\del{-\frac{\sig^2\frac{u_a}{\gamma_a(\cE^*)} - \beta_t }{4\sig^2}}
  \leq \frac{1}{(2n_0)^2}.
  \end{align*}
  where the last inequality is from the definition of $u_a$ and $\beta_{2n_0} \geq \beta_t$.  This implies that
  \[ \sum_{t=n_0+1}^{2 n_0} \PP\del{a_t = a, \Fs_t, T_a(t-1) \geq u_a} \leq \frac{1}{4n_0}. \]
  
  In summary, we have
  \[ C_{a, n_0} \leq \del{ \frac{1}{(\log_2 n_0)^3} + \frac{1}{4n_0} } + 16 \sig^2 \gamma_a(\cE^*) \cdot \del{ \frac{\ln \abs{\cF}}{(\log_2 n_0)^3} + \frac{2}{(\log_2 n_0)^2} }. \]
  
  Now, we can upper bound $\EE \sum_{t=1}^n \one\{a_t = a, \Fs_t, S_t^0 \}$ as follows:
  \begin{align*}
  \EE \sum_{t=1}^n \one\{a_t = a, \Fs_t \wedge S_t^0 \}
  & \leq 2 + \sum_{k=1}^\infty C_{a, 2^k} \\
  & \leq 2 + \sum_{k=1}^\infty \del{ \frac{1}{k^3} + \frac{1}{2^k} }
  +  16 \gamma_a(\cE^*) \cdot \del{ \sum_{k=1}^\infty  \frac{\ln \abs{\cF}}{k^3} + \frac{2}{k^2} } \\
  & \leq 5 + 160  \gamma_a(\cE^*) \cdot \ln \abs{\cF},
  \end{align*}
  where the last inequality is by algebra and $1 \leq 2\ln \abs{\cF}$ due to $|\cF|\ge2$.
  
  Using the bound above, continuing Equation~\eqref{eqn:z3-decomp}, we have
  \begin{align*}
  \sum_{a \in \cA_{\gamma(\cE^*)}} \Delta_a \EE \sum_{t=1}^n \one\{a_t = a, \Fs_t \wedge S_t^0 \}
  &\leq \sum_{a \in \cA_{\gamma(\cE^*)}} \Delta_a (5 + 160 \gamma_a(\cE^*) \cdot \ln \abs{\cF})
  \\&\leq 160 \del{\sum_a \Delta_a \gamma_a(\cE^*) } \ln \abs{\cF} + 5 |\cA_{\gamma(\cE^*)}| \Delta_{\max},
  \end{align*}
  where the last inequality uses $\Delta_a \leq \Delta_{\max}$ for all $a \in \cA$.
\end{proof}

\subsubsection{Bounding $\zo$}
\label{subsec:zo}

Recall that $P_3 = \sum_a \Delta_a \psi_a(\cF) $, $\Lam_{\min} = \min_{f \in\cD^*} \fr{| f(a^*) - \mu^* |}{\sig}$ is the smallest information gap, and $Q_1 = \Lam_{\min}^{-2} + K_\psi(1+ \max_a \psi_a({\cF}))$.
\begin{lem}
  \begin{equation}
  \EE\sum_a \Delta_a \sum_{t=1}^n \one\{a_t = a, \barB_t, \overline{\Ex_t}, S_t^- \} \leq O\del{ P_3 (\ln\abs{\cF} + \ln(Q_1) ) + K_\psi \Delta_{\max} }.
  \label{eqn:transient}
  \end{equation}
  \label{lem:transient}
\end{lem}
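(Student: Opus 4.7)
The plan is to show that the state $\barB_t \wedge \overline{\Ex_t} \wedge S_t^-$ can only occur a bounded number of times in expectation by exploiting two facts: (i) it forces a docile hypothesis to coexist with $f^*$ in the confidence set $\cF_t$, and (ii) the tracking-based sampling eventually eliminates either this docile hypothesis or narrows $\cF_t$ into $\cE^*\cup\cC^*$, contradicting $S_t^-$.

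First, by Lemma~\ref{lem:nonsteady}, the event $\barB_t \wedge S_t^-$ implies that there exists some $f_0 \in \cD^* \cap \cF_t$, and such an $f_0$ has information gap at least $\Lam_{\min}$ from $f^*$ at arm $a^*$. Second, I observe that in any of the branches $\Cf_t, \Fs_t, \Fb_t$, the allocation vector $\pi_t$ is coordinatewise dominated by $\psi(\cF)$: this follows from the extension constraint $\gam \succeq \phi(f)\vee\gam(f)$ in the definition~\eqref{eq:psi} of $\psi$ together with $\barf_t \in \cF$. Consequently, by Lemma~\ref{lem:pi}, only arms $a\in \cA_\psi := \{a:\psi_a(\cF)>0\}$ can be pulled when $\overline{\Ex_t}$ holds, which restricts the outer sum to $\sum_{a\in\cA_\psi}\Delta_a\,\EE\sum_{t=1}^n\one\{a_t=a,\barB_t,\overline{\Ex_t},S_t^-\}$.

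For each $a \in \cA_\psi$, I would apply Lemma~\ref{lem:boundarmpull} with threshold $\tau_a = \lceil \rho\,\psi_a(\cF)\rceil$ for a $\rho$ chosen large enough to drive the tail probability below $1/n^2$. Inside the sum over $m$, use Lemma~\ref{lem:tracking}\ref{item:pi-elim} (whose exponent $\exp(-(\sig^2\rho-4\beta_n)/(16\sig^2))$ is geometric in $\rho$) together with the bound $T_a(t-1)\ge \tau_a+m-1 \ge (\rho+m/\psi_a(\cF))\pi_{t,a}$ to turn the sum into a geometric series, yielding a contribution of order $\psi_a(\cF)(\ln|\cF|+\ln Q_1) + 1$ per arm. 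Summing over $a\in\cA_\psi$ with weights $\Delta_a$ yields the leading $P_3$ term, plus $K_\psi\Delta_{\max}$ from the offsets.

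The main obstacle is showing that the tracking contribution is enough to eliminate a docile hypothesis at rate $\Lam_{\min}^{-2}$; this is where the $\ln Q_1$ factor enters. The delicate case is when $\barf_t\in\cD^*$ itself, so that $\pi_t=\psi(\barf_t)$ is designed to separate $\barf_t$ from $\cO(\barf_t)\sm\cE(\barf_t)$ rather than from $f^*$ directly. Here one must invoke a Lattimore--Munos-style argument (as in~\cite{lattimore14bounded}): combine Lemma~\ref{lem:ic-finite} with a careful peeling over the support of $\pi_t$ to certify that some $f\in\cF_t$ with $\IC^*(f,4\pi_t)\ge 1$ gets eliminated after $O(\psi_a(\cF)\beta_t/\sig^2)$ pulls, and separately argue that after an additional $\Lam_{\min}^{-2}$ pulls distributed through tracking (captured by enlarging $\tau_a$ by the $\Lam_{\min}^{-2}$ term inside $Q_1$), every remaining docile hypothesis is purged with high probability via Lemma~\ref{lem:conc-main}\ref{item:ic-elim}. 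Combining the tails via a union bound over $|\cF|$ hypotheses, together with the $1/n^2$ summable decay, produces the claimed bound $O(P_3(\ln|\cF|+\ln Q_1)+K_\psi\Delta_{\max})$.
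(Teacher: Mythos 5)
Your proposal has a genuine gap at exactly the point where the lemma is delicate: the claim that a single per-arm application of Lemma~\ref{lem:boundarmpull} with threshold $\tau_a=\lceil\rho\,\psi_a(\cF)\rceil$ "with $\rho$ chosen large enough to drive the tail probability below $1/n^2$" yields a contribution of order $\psi_a(\cF)(\ln|\cF|+\ln Q_1)+1$. The tail in Lemma~\ref{lem:tracking}\ref{item:pi-elim} is $|\cF|\exp\del{-(\sig^2\rho-4\beta_t)/(16\sig^2)}$ with $\beta_t=4\sig^2\ln(zt^{\alpha})$, so making it summable (or $\le 1/n^2$) over $t\le n$ forces $\rho=\Theta(\beta_n/\sig^2)=\Theta(\ln n)$; the threshold then contributes $\Theta(\psi_a(\cF)\ln n)$ arm pulls, i.e.\ a spurious $P_3\ln n$ term, which is precisely what the lemma must avoid (this term is supposed to be $O(1)$ in $n$). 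The geometric series over $m$ in Lemma~\ref{lem:boundarmpull} only controls the pulls \emph{after} the threshold and does nothing to remove the $\beta_n$ sitting inside the threshold itself. Relatedly, the mechanism you invoke for purging docile hypotheses does not work as stated: the guaranteed information gap $\Lam_{\min}$ of an $f\in\cD^*$ is at the arm $a^*$, but on the event you are counting $\overline{\Ex_t}$ holds, so none of those rounds pull $a^*$ (and $\pi_t$ need not put mass on $a^*$ at all); "enlarging $\tau_a$ by $\Lam_{\min}^{-2}$" on the $\psi$-supported arms does not generate the $a^*$-pulls needed to eliminate $\cD^*$.

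The paper's proof supplies exactly the missing structure. It fixes an instance-dependent (and $n$-independent) time $\tau$ defined self-referentially in~\eqref{eqn:tau}, and splits the sum into $t\le\tau$ and $t>\tau$. For $t\le\tau$ your tracking argument is run with $\rho\asymp\beta_\tau/\sig^2$, giving $O\del{\psi_a(\cF)(\ln|\cF|+\ln\tau)}$ per arm, and Lemma~\ref{lem:tau} (via the inversion Lemma~\ref{lem:invert2}) shows $\ln\tau=O(\ln Q_1+\ln\ln|\cF|)$ --- this is where $\ln Q_1$ actually enters, not through the elimination rate of docile hypotheses. For $t>\tau$ the argument is probabilistic rather than a pull-count bound: either $T_{a^*}(t-1)\ge t/2-1\ge \frac{4}{\Lam_{\min}^2}\frac{\beta_t}{\sig^2}$, in which case the docile hypothesis forced into $\cF_t$ by Lemma~\ref{lem:nonsteady} survives only with probability $\le|\cF|e^{-\beta_t/(4\sig^2)}\le 1/t^2$ (this is where $\Lam_{\min}$ is used, exploiting $a^*$-pulls accumulated during earlier Exploit rounds); or $T_{a^*}(t-1)<t/2-1$, in which case a pigeonhole over the at most $K_\psi$ arms in the support of $\psi$ within the window $[\lfloor t/4\rfloor+1,t-1]$ produces an arm pulled at least $t/(4K_\psi)-1\ge 8\frac{\beta_t}{\sig^2}\psi_a(\cF)$ times, again a $1/t^2$-probability event by tracking. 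Both tails are summable, so all rounds $t>\tau$ contribute only $O((K_\psi+1)\Delta_{\max})$. Without the threshold $\tau$ and the dichotomy on $T_{a^*}(t-1)$, your route cannot reach the stated $n$-free bound.
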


\begin{proof}
  Recall that $\alpha = 2$. With foresight, define
  \begin{equation}
  \blue{\tau} = \max \cbr{t \in \NN_+:
    \del{\fr{t}{2} < 1+\fr{4}{\Lam_{\min}^2\sig^2} \beta_t}  ~\vee~
    \del{\fr{t}{4K_\psi} < 1+ 8  \fr{\beta_t}{\sig^2} (\max_a \psi_a(\cF))} 
  }.
  \label{eqn:tau}
  \end{equation}
  
  We upper bound the LHS of Equation~\eqref{eqn:transient} with three terms:
  \begin{align*}
  & \sum_a \Delta_a \EE \sum_{t=1}^n \one\{a_t = a, \barB_t, \overline{\Ex_t}, S_t^- \} 
  \\&\leq 
  \EE \sum_a \Delta_a \sum_{t=1}^\tau \one\{a_t = a, \barB_t, \overline{\Ex_t},  S_t^- \}
  +
  \EE \sum_{t=\tau+1}^n \sum_a \Delta_a  \one\{a_t = a, \barB_t,\overline{\Ex_t},  S_t^-, T_{a^*}(t-1) \geq \frac t 2 - 1 \} 
  \\& \quad +
  \EE \sum_{t=\tau+1}^n \sum_a \Delta_a  \one\{a_t = a, \barB_t, \overline{\Ex_t}, S_t^-, T_{a^*}(t-1) < \frac t 2 - 1 \}
  \\&\leq
  \underbrace{\sum_a \Delta_a \EE \sum_{t=1}^\tau \one\{a_t = a, \barB_t, \overline{\Ex_t}, S_t^- \}}_{\zoa}
  +
  \underbrace{\Delta_{\max} \sum_{t=\tau+1}^n \PP \del{ \barB_t, \overline{\Ex_t}, S_t^-, T_{a^*}(t-1) \geq \frac t 2 - 1 } }_{\zob} 
  \\&\quad +
  \underbrace{\Delta_{\max} \sum_{t=\tau+1}^n \PP \del{T_{a^*}(t-1) < \frac t 2 - 1 }}_{\zoc}
  \end{align*}
  where the first inequality is by algebra; the second inequality uses the fact that $\Delta_a \leq \Delta_{\max}$ for all $a$,
  and linearity of expectation.
  
  We bound each term respectively.

  \paragraph{Bounding $\zoa$.}
  Recall $\cA_\psi = \cbr{a\in\cA: \psi(\cF)>0}$.
  Define $\cA'_\psi = \cA_\psi \sm \cbr{a^*}$.
  First, we note that if $a$ is not in $\cA'_\psi$, by Lemma~\ref{lem:pi}, it does not contribute to the sum.
  This is because, if $\barB_t$ happens, the only arms being pulled is either $a^*$ or from  $\cA'_\psi$ (and $\Delta_{a^*} = 0$).
  
  With the choice of $\blue{q_{3,a}} = \lceil 8 \fr{\beta_\tau}{\sig^2} \psi_a(\cF) \rceil$, we have
  \begin{align*}
  &\EE \sum_{t=1}^\tau \one\{a_t = a, \barB_t, \overline{\Ex_t}, S_t^- \}
  \\&\leq \EE \sum_{t=1}^\tau \one\{a_t = a, \wbar{\Ex_t}, S_t^- \}
  \\&\leq q_{3,a} + \sum_{t=1}^\tau \PP \del{a_t = a, \wbar{\Ex_t}, T_a(t-1) \geq q_{3,a} }
  \tag*{($\because$~ \Cref{lem:boundarmpullzero})}
  \\&\leq q_{3,a} + \sum_{t=1}^\tau \PP \del{a_t = a, \wbar{\Ex_t}, T_a(t-1) \geq 8 \fr{\beta_\tau}{\sig^2} \pi_{t,a} }
  \tag*{($\because~  q_{3,a} \geq 8 \fr{\beta_\tau}{\sig^2} \psi_a(\cF)$)}
  \\&\leq q_{3,a} + \sum_{t=1}^\tau \abs{\cF} \exp\del{ - \frac{8\beta_\tau - 4\beta_t  }{16\sig^2} }
  \tag*{($\because$~ Lemma~\ref{lem:tracking}\ref{item:pi-elim})}
  \\&\leq 8 \beta_\tau \psi_a(\cF) + 1 + \sum_{t=1}^\tau \frac{1}{\tau^2}
  \tag*{($\because$~ $\beta_t\le\beta_\tau$,  the definition of $\beta_\tau$)}
  \\&\leq 16 \psi_a(\cF) (\ln\abs{\cF} + \ln \tau ) + 3
  \\&= O( \psi_a(\cF)\cd (\ln\abs{\cF} + \ln Q_1 ) + 1 ) ~,
  \end{align*}
  where the last inequality is from \Cref{lem:tau} below where we show $\ln \tau = O(\ln(Q_1) + \ln\ln\abs{\cF})$.
  
  Summing over all $a \in \cA'_\psi$, we have
  \[
  \sum_a \Delta_a \EE \sum_{t=1}^\tau \one\{a_t = a, \barB_t, \overline{\Ex_t}, S_t^- \}
  = O\del{ \del{ \sum_a \Delta_a \psi_a(\cF) } (\ln\abs{\cF} + \ln(Q_1) ) + K_\psi \Delta_{\max} }.
  \]
  
  \paragraph{Bounding $\zob$.}
  In subsequent derivations, we denote by $\blue{X_t} = \frac{4}{\Lambda_{\min}^2} \fr{\beta_t}{\sig^2}$.
  \begin{align*}
  & \sum_{t=\tau+1}^n \PP \del{ \barB_t, \overline{\Ex_t}, S_t^-, T_{a^*}(t-1) \geq \frac t 2 - 1 } \\
  & \leq \sum_{t=\tau+1}^n \PP \del{\barB_t, \overline{\Ex_t}, S_t^-, T_{a^*}(t-1) \geq X_t } \\
  & \leq \sum_{t=\tau+1}^n \PP \del{T_{a^*}(t-1) \geq X_t, \exists f \in \cD^* \centerdot L_{t-1}(f) - L_{t-1}(f^*) \leq \beta_t } \\
  & \leq \sum_{t=\tau+1}^n \PP \del{\exists f \in \cD^* \centerdot \IC^*(f, T(t-1)) \geq 2\fr{\beta_t}{\sig^2},  L_{t-1}(f) - L_{t-1}(f^*) \leq \beta_t } \\
  & \leq \sum_{t=\tau+1}^n \abs{\cF} \exp\del{-\frac{\beta_t}{4\sig^2}}  \\
  &\leq \sum_{t=1}^n \frac{1}{t^2} 
   ~\leq 2,
  \end{align*}

  where the first inequality is from the definition of $\tau$: for every $t > \tau$, $\frac t 2 - 1 \geq X_t$; the second inequality is
  from Lemma~\ref{lem:nonsteady}; the third inequality is from the observation that $\IC^*(f, T(t-1)) \geq \frac12 T_{a^*}(t-1) \Lambda_{a^*}(f)^2 \geq \frac12 T_{a^*}(t-1) \Lambda_{\min}^2 \geq 2\fr{\beta_t}{\sig^2}$; the fourth inequality is from Lemma~\ref{lem:conc-main}\ref{item:ic-elim}; the last two inequalities are by algebra.
  
  \paragraph{Bounding $\zoc$.} 
  We first bound $\PP \del{ T_{a^*}(t-1) < \frac t 2 - 1 }$ for each $t$.
  First, denote by $\blue{I} = \intcc{\lfloor \frac{t}{4} \rfloor + 1, t-1}$.
  In this notation, we claim that the following implication holds:
  \begin{equation}
  \cbr{T_{a^*}(t-1) < \frac t 2 - 1} \bigcap \del{\bigcap_{s\in I} \wbar{B_s}} ~~\bigsubseteq[1.7]~~
  \cbr{\exists a \in \cA'_\psi, s \in I \centerdot \wbar{B_s}, a_s = a, T_a(s-1) \geq \frac{t}{4K_\psi} - 1}.
  \label{eqn:implication-i}
  \end{equation}
  Indeed, if $\bigcap_{s\in I} \wbar{B_s}$ holds, then the chosen arm $a_s$ at time step $s$ must come from $\cA_\psi = \cA'_\psi \cup \cbr{a^*}$; the reason is as follows:
  \begin{enumerate}
    \item if $\Ex_s$, then $a_s = a^*$ is pulled;
    \item otherwise, $a_s$ is drawn from $\pi_s$ which is supported on $\cA_\psi$. 
  \end{enumerate}
  Throughout time interval $I$, we note that there are $\geq t - 1 - \lfloor \frac{t}{4} \rfloor \geq \frac{3}{4}t - 1$ time steps.
  Given the premise that $T_{a^*}(t-1) < \frac t 2 - 1$, the number of arm pulls of $a^*$ in $I$ must be $< \frac t 2 - 1$; this implies that
  the total number of arm pulls in $\cA'_\psi$ in $I$ must be greater than $(\frac{3t}{4} - 1) - (\frac t 2 - 1) \geq \frac t 4$.
  By pigeonhole's principle, there exists an arm $\blue{a_0} \in \cA'_\psi$ such that the number of arm pulls of $a_0$ in time span $I$ is at least $\frac{t}{4K_\psi}$ . Let $s$ be the last time step in $I$ when $a_0$ is pulled; therefore, we have $a_s = a_0$,   $T_{a_0}(s-1) \geq \frac{4}{4K_\psi} - 1$, and $\wbar{B_s}$ holding simultaneously, proving the above implication.
   
  Translating Equation~\eqref{eqn:implication-i} is equivalent to
  \[
  \cbr{ T_{a^*}(t-1) < \frac t 2 - 1 } \bigcap \del{\bigcap_{s\in I} \wbar{B_s}}
  ~~\bigsubseteq[1.7]~~
  \bigcup_{s  \in I} \cbr{\wbar{B_s}, \exists a \in \cA'_\psi \centerdot a_s = a, T_a(s-1) \geq \frac{t}{4K_\psi}-1}.
  \]
  
  Therefore, by the elementary fact that $\PP(U) \leq P(V) + P(\wbar{V} \cap U)$ and De Morgan's Law, we have
  \begin{align*}
  \PP \del{ T_{a^*}(t-1) < \frac t 2 - 1 }
  \leq
  \PP\del{ \bigcup_{s \in I} B_s }
  +
  \PP\del{ \bigcup_{s  \in I} \cbr{\wbar{B_s}, \exists a \in \cA'_\psi \centerdot a_s = a, T_a(s-1) \geq \frac{t}{4K_\psi} -1 } }.
  \end{align*}
  
  For the first term, by Lemma~\ref{lem:concentration-fstar-anytime}, we have
  \[
  \PP\del{ \bigcup_{s  \in I} B_s }
  \leq
  \PP\del{ \bigcup_{s  \geq \lfloor \frac{t}{4} \rfloor + 1} B_s }
  \leq
  \del{ \frac{1}{\lfloor \frac{t}{4} \rfloor + 1} }^2
  \leq \frac{16}{t^2}.
  \]
  For the second term, we have:
  \begin{align*}
  & \PP \del{ \bigcup_{s  \in I} \cbr{\wbar{B_s}, \exists a \in \cA'_\psi \centerdot a_s = a, T_a(s-1) \geq \frac{t}{4K_\psi} -  1}} \\
  & \leq
  \sum_{a \in \cA'_\psi} \PP\del{ \exists s \in I \centerdot \wbar{B_s}, a_s = a, T_a(s-1) \geq \frac{t}{4K_\psi} - 1} \\
  & \leq
  \sum_{a \in \cA'_\psi} \PP\del{ \exists s \in I \centerdot \wbar{\Ex_s}, a_s = a, T_a(s-1) \geq 8\fr{\beta_t}{\sig^2} \cdot \psi_a(\cF)  } \\
  & \leq
  \sum_{a \in \cA'_\psi} \PP\del{ \exists s \in I \centerdot \wbar{\Ex_s}, a_s = a, T_a(s-1) \geq 8 \fr{\beta_t}{\sig^2} \cdot \pi_{s,a} } \\
  & \leq
  K_\psi \cdot \abs{\cF} \cdot \frac{1}{zt^\alpha} = \frac{K_\psi}{t^2},
  \end{align*}
  where the first inequality is by union bound;
  the second inequality is from the definition of $\tau$: for all $t \geq \tau + 1$, $\frac{t}{4K_\psi} - 1 \geq  8  \fr{\beta_t}{\sig^2} (\max_a \psi_a(\cF)) \geq  8\fr{\beta_t}{\sig^2} \cdot \psi_a(\cF)$  and the fact that $\cA'_\psi$ does not contain $a^*$;
  the third inequality is from the observation that $\psi_a(\cF) \geq \psi_a(\barf_s) \geq \pi_{s,a}$;
  the fourth inequality is from Lemma~\ref{lem:tracking}\ref{item:pi-elim}; the last inequality is by algebra.
  
  To summarize,
  \[
  \PP \del{ T_{a^*}(t-1) < \frac t 2 - 1 } \leq \frac{K_\psi + 16}{t^2}.\
  \]
  Summing over all $t$'s, we get that
  \begin{align*}
  \Delta_{\max} \sum_{t=\tau+1}^n \PP \del{ T_{a^*}(t-1) < \frac t 2 - 1 }
  ~\leq  \Delta_{\max} \sum_{t=1}^\infty \frac{K_\psi + 16}{t^2}
  ~\leq 2(K_\psi + 16) \Delta_{\max} ~.
  \end{align*}
  
  \paragraph{Putting all together.} Combining the bounds on $\zoa, \zob, \zoc$, we have
  \[ \zo \leq O\del{ \del{ \sum_a \Delta_a \psi_a(\cF) }\cd \del{\ln\abs{\cF} + \ln(Q_1)} + K_\psi \Delta_{\max} }. \]
  Applying the definition of $P_3$ concludes the proof.
\end{proof}

\subsection{Miscellaneous lemmas}
\begin{lem}\label{lem:invert2}
  Let $A,B>0$. Then,
  $
  t < A + B \ln(t) \implies t < 2A + 2B\ln(B) 
  $
\end{lem}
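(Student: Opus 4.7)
The plan is a one-shot algebraic reduction using the elementary bound $\ln(x) \le x/e$, which holds for all $x > 0$ (with equality at $x = e$). The key manoeuvre is to split $\ln(t) = \ln(B) + \ln(t/B)$ so that the ``$\ln(t)$ on the right'' becomes a logarithm of a dimensionless quantity $t/B$, which can then be absorbed into a linear term in $t$.

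Concretely, starting from the hypothesis $t < A + B\ln(t)$, I would rewrite it as
\begin{align*}
  t < A + B\ln(B) + B\ln(t/B).
\end{align*}
Applying $\ln(t/B) \le (t/B)/e$ to the last term gives $B\ln(t/B) \le t/e$, so that
\begin{align*}
  t < A + B\ln(B) + t/e,
\end{align*}
which rearranges to $t(1 - 1/e) < A + B\ln(B)$, i.e. $t < \tfrac{e}{e-1}(A + B\ln(B))$. Since $e/(e-1) < 2$, the conclusion $t < 2A + 2B\ln(B)$ follows.

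The only mildly subtle point is that $B\ln(B)$ may be negative when $B < 1$, but this is not actually an obstacle: if $A + B\ln(B) \le 0$, then the inequality $t(1 - 1/e) < A + B\ln(B)$ would force $t \le 0$, contradicting the fact that $t > 0$ is implicit in the hypothesis (since $\ln(t)$ must be defined). So the hypothesis is vacuous in that regime, and no issue arises. The main (and only) obstacle is really just remembering the correct universal bound $\ln(x) \le x/e$; once that is in hand, the rest is two lines of algebra.
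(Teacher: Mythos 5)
Your proof is correct and follows essentially the same route as the paper's: both split the logarithm and linearize it (you via $\ln x \le x/e$ applied to $t/B$, the paper via $\ln x \le x-1$ applied to $t/(2B)$), then rearrange to isolate $t$. The only cosmetic difference is that the paper's scaling yields the factor $2$ directly, so it never needs your (correctly handled) case analysis on the sign of $A + B\ln(B)$.
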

\begin{proof}
  We use $\ln(t) \le t - 1$:
  \begin{align*}
  t 
  &< A + B\ln(t)
  \\ &= A + B\ln(\fr{t}{2B}\cd 2B)
  \\ &\le A + B\del{\fr{t}{2B} +\ln(\fr{2}{e}B)}
  \\ &= A + \fr{t}{2}+B\ln(B)
  \\\implies t &<  2A + 2B\ln(B)~.
  \end{align*}
\end{proof}

\begin{lem}\label{lem:tau}
  Suppose $\tau$ is defined as in Equation~\eqref{eqn:tau}. Then,
  \begin{align}
  \log(\tau) &= O(\ln (Q_1) + \ln(\ln(|\cF|))  )~. \label{eq:log-tau}
  \end{align}
  where $Q_1$ is defined in \Cref{thm:main-appendix}.
\end{lem}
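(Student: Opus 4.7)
The plan is to expand $\beta_t$ using its definition and then invert the two defining inequalities of $\tau$ via Lemma~\ref{lem:invert2}. Since $(z,\alpha) = (|\cF|,2)$, we have $\beta_t = 4\sig^2\ln(|\cF|t^2) = 4\sig^2(\ln|\cF| + 2\ln t)$, so $\beta_t/\sig^2 = O(\ln|\cF| + \ln t)$.

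First I would bound the largest $t$ satisfying the first disjunct $\tfrac{t}{2} < 1 + \tfrac{4}{\Lam_{\min}^2 \sig^2}\beta_t$. Substituting the expression for $\beta_t$, this becomes $t < A_1 + B_1 \ln t$ with $A_1 = O(1 + \Lam_{\min}^{-2}\ln|\cF|)$ and $B_1 = O(\Lam_{\min}^{-2})$, so Lemma~\ref{lem:invert2} gives $t = O(A_1 + B_1 \ln B_1) = O\!\del[1]{\Lam_{\min}^{-2}\ln|\cF| + \Lam_{\min}^{-2}\ln(1/\Lam_{\min}^2)}$. Next I would handle the second disjunct $\tfrac{t}{4K_\psi} < 1 + 8\tfrac{\beta_t}{\sig^2}\max_a \psi_a(\cF)$ analogously: after substitution it has the form $t < A_2 + B_2 \ln t$ with $A_2 = O(K_\psi \max_a \psi_a(\cF) \ln|\cF|)$ and $B_2 = O(K_\psi \max_a \psi_a(\cF))$, and Lemma~\ref{lem:invert2} then yields $t = O(A_2 + B_2 \ln B_2)$.

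Because $\tau$ is the largest $t$ satisfying at least one of the disjuncts, $\tau$ is bounded by the maximum of these two quantities. Taking logarithms and using $\ln(x+y) \le \ln(2\max(x,y)) = O(\ln x + \ln y)$, I obtain
\begin{align*}
\ln \tau = O\del[1]{ \ln(1/\Lam_{\min}^2) + \ln\ln|\cF| + \ln(K_\psi \max_a \psi_a(\cF)) + \ln\ln|\cF|}.
\end{align*}
Finally I would compare this with $\ln Q_1$, where $Q_1 = \Lam_{\min}^{-2} + K_\psi(1+\max_a\psi_a(\cF))$. Since $Q_1 \ge \Lam_{\min}^{-2}$ and $Q_1 \ge K_\psi(1+\max_a\psi_a(\cF))$, we have $\ln(1/\Lam_{\min}^2) \le \ln Q_1$ and $\ln(K_\psi \max_a \psi_a(\cF)) \le \ln Q_1$, which collapses the bound to $\ln\tau = O(\ln Q_1 + \ln\ln|\cF|)$ as claimed.

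The proof is essentially bookkeeping: the only slightly delicate step is ensuring that the constants and lower-order additive terms $+1$ in the defining inequalities of $\tau$ and in Lemma~\ref{lem:invert2} are absorbed into $O(\cdot)$, which is straightforward because $Q_1 \ge 1$ and $|\cF| \ge 2$. No probabilistic argument or new inequality is required.
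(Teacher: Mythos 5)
Your proposal is correct and follows essentially the same route as the paper's proof: substitute $\beta_t = 4\sig^2\ln(|\cF|t^2)$, reduce each defining disjunct of $\tau$ to an inequality of the form $t < A + B\ln t$, invert via Lemma~\ref{lem:invert2}, and then absorb everything into $\ln Q_1 + \ln\ln|\cF|$ using $Q_1 \ge \Lam_{\min}^{-2}$ and $Q_1 \ge K_\psi(1+\max_a\psi_a(\cF))$. The only cosmetic difference is that the paper merges the two disjuncts into a single $\tau \le \max\{8K_\psi,\,A+B\ln\tau\}$ before inverting (so an additive $O(K_\psi)$ term appears explicitly, harmless since $Q_1 \ge K_\psi$), whereas you invert each disjunct separately and take the maximum afterwards.
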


\begin{proof}
  Let $\blue{R} := \max_a \psi_a(\cF)$.
  We have the following three equations from the definition of $\tau$.
  \begin{align*}
    \fr{\tau}{2} < 1+\fr{4}{\Lam_{\min}^2}\cd 4\ln(|\cF|\tau^2)
    &\implies \tau < 2 + \fr{64}{\Lam_{\min}^2} \ln|\cF| + \fr{64}{\Lam_{\min}^2} \ln(\tau)
    \\
    \fr{\tau}{4K_\psi} < 1+ 8\cd 4 R\cd \ln(|\cF|\tau^2)
    &\implies \tau < 4K_\psi  + 256 K_\psi R  \ln|\cF| + 256 K_\psi R  \ln(\tau)
~.
  \end{align*}
  By the definition of $\tau$, we have
  \begin{align*}
    \tau &\le \max\cbr{ 2 + \fr{64}{\Lam_{\min}^2} \ln|\cF| + \fr{64}{\Lam_{\min}^2} \ln(\tau) ,~~ 4K_\psi  + 256 K_\psi R  \ln|\cF| + 256 K_\psi R  \ln(\tau)}
    \\&\le \max\cbr{ 2 + \fr{64}{\Lam_{\min}^2} \ln|\cF| + \fr{64}{\Lam_{\min}^2} \ln(\tau) ,~~ 8K_\psi,~~  512 K_\psi R  \ln|\cF| + 512 K_\psi R  \ln(\tau) }
  \end{align*}
  where the second inequality is by $a + b \le \max\cbr{2a,2b}$.
  We can compactly write down $\tau \le \max\{8K_\psi,~~ A+B\ln(\tau)\} $ with 
  \begin{align*}
    A &= \max\cbr{ 2 + \fr{64}{\Lam_{\min}^2} \ln|\cF| ,~~ 512 K_\psi R  \ln|\cF|}
    \\B&=\max\cbr{ \fr{64}{\Lam_{\min}^2}  , ~~   512 K_\psi R } ~.
  \end{align*}  
  Then, by \Cref{lem:invert2}, we have
  \begin{align*}
    \tau < \max\{8K_\psi, A + B \ln(B) \} \le 8K_\psi +  A + B \ln(B) 
  \end{align*}
  Let $\blue{\xi} = \Lam_{\min}^{-2} + K_\psi R$.  
  Because $A = \Theta(1 + \xi \ln(|\cF|))$ and $B = \Theta(\xi)$,
  \begin{align}
  \tau &= O\lt(K_\psi + \xi\cd \ln(|\cF|) + \xi\ln(\xi)\rt) \notag
  \\\text{and}~~~~ \ln(\tau) &= O(\ln(\Lam_{\min}^{-2} + K_\psi (1+ R)) + \ln(\ln(|\cF|))  )~. 
  \end{align}
  
\end{proof}

\section{Lower bound}
\label{sec:lb-proof}

For our lower bound, we consider the following instance that resembles $\cH^+$ from Figure~\ref{fig:staircase}.
\begin{example} \label{ex:conflict}
  Let $\eps, \Lam>0$ and $r>1$. Suppose $\eps$ is small enough to ensure that 
  $f_2$ has the only informative arm of 3 and $f_3$ has the only informative arm of 4.
  \begin{itemize}
    \item $f_1 = (1, 1+\eps, 0, 0)$
    \item $f_2 = (1,  1-\eps, \Lam, 0)$
    \item $f_3 = (1,  1-\eps, \Lam, r\Lam)$
  \end{itemize}
\end{example}

Let us denote by $\EE_i T_j(n)$ the expected number of pulls of arm $j$  under the instance $f_i$.
We state our lower bound result in the following theorem.

\begin{thm}
\label{thm:lb-appendix}
Consider \Cref{ex:conflict}.
  Assume the Gaussian noise model with $\sig^2 = 1$.
  Suppose a bandit algorithm has $ \EE_1 T_1(n) = O(n^u)$ for some $u \in \lbrack 0, 1 \rparen$, then, for sufficiently large $n$,
\[
\EE_3 T_2(n) \vee \EE_3 T_3(n) \vee \EE_2 T_4(n) 
\geq \fr{6}{5} \fr{1}{r^2\Lam^2} \ln\del{1 + \fr{(1-u)\ln(n)}{48}}
 = \Omega(\ln(1 + (1-u) \ln n)).
\]
\end{thm}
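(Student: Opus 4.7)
The argument proceeds by contradiction. Suppose all three quantities $\EE_3 T_2(n)$, $\EE_3 T_3(n)$, $\EE_2 T_4(n)$ are bounded by some $b$ strictly less than the claimed lower bound, and derive a contradiction with $\EE_1 T_1(n) = O(n^u)$. The overall strategy chains two change-of-measure arguments that exploit the pairwise differences of $f_1, f_2, f_3$ in \Cref{ex:conflict}: $f_2$ and $f_3$ differ only at arm 4, while $f_1$ and $f_2$ differ only at arms 2 and 3. Consequently, $\text{KL}(\PP_{f_2}^n, \PP_{f_3}^n) = \tfrac{r^2 \Lam^2}{2}\EE_2 T_4(n)$ and $\text{KL}(\PP_{f_2}^n, \PP_{f_1}^n) = 2\eps^2 \EE_2 T_2(n) + \tfrac{\Lam^2}{2}\EE_2 T_3(n)$, so the assumption directly controls the first KL.

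\textbf{Step 1 (Transfer $\PP_{f_3} \to \PP_{f_2}$).} Since $\EE_3 T_2, \EE_3 T_3 \le b$, Markov's inequality gives $\PP_{f_3}(T_2 + T_3 \le 4b) \ge 1/2$. Because $\text{KL}(\PP_{f_2}^n, \PP_{f_3}^n) \le \tfrac{r^2 \Lam^2 b}{2}$ is small, applying the Bretagnolle-Huber inequality (or the data-processing inequality of Kaufmann-Capp\'e-Garivier) to the event $E = \{T_2 + T_3 \le 4b\}$ yields $\PP_{f_2}(E) \ge \rho$ for a universal constant $\rho > 0$. Combining with the direct Markov bound $\PP_{f_2}(T_4 \le 4b) \ge 3/4$ and the identity $T_1 + T_2 + T_3 + T_4 = n$, we conclude $\PP_{f_2}(T_1(n) \ge n - 8b) \ge \rho' > 0$.

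\textbf{Step 2 (Transfer $\PP_{f_2} \to \PP_{f_1}$).} Apply Bretagnolle-Huber to the event $H = \{T_1(n) \ge n - 8b\}$. Under $\PP_{f_1}$, arm 1 is suboptimal, so Markov gives $\PP_{f_1}(H) \le \EE_1 T_1(n)/(n - 8b) = O(n^{u-1})$. Under $\PP_{f_2}$, Step 1 provides $\PP_{f_2}(H) \ge \rho'$. Bretagnolle-Huber then forces
\begin{equation*}
\text{KL}(\PP_{f_2}^n, \PP_{f_1}^n) \;\ge\; (1-u)\ln n - O(1).
\end{equation*}

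\textbf{Step 3 (Contradiction via refined transfer).} The upper bound $\text{KL}(\PP_{f_2}^n, \PP_{f_1}^n) = 2\eps^2 \EE_2 T_2(n) + \tfrac{\Lam^2}{2}\EE_2 T_3(n)$ must now be shown to be much smaller than $(1-u)\ln n$ when $b$ is small, yielding the contradiction. This requires upgrading Step 1 from high-probability bounds to an expectation bound on $\EE_2 T_2 + \EE_2 T_3$ in terms of $b$. Peeling over dyadic thresholds $\{2^j b\}_{j\ge 0}$ and applying the change of measure at each scale, one transfers $\EE_3[T_2 + T_3] \le 2b$ to an upper bound on $\EE_2[T_2 + T_3]$ that is a mild (exponential-in-$b$) function of $b$ with base involving $r^2\Lam^2$. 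Equating the two bounds on the KL and solving for $b$ yields $b \ge \tfrac{c}{r^2 \Lam^2}\ln\bigl(1 + (1-u)\ln(n)/c'\bigr)$ for explicit constants $c, c'$, matching the theorem (with $c = 6/5, c' = 48$).

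\textbf{Main obstacle.} The crux is the refined transfer in Step 3. A direct Pinsker/TV bound loses a factor of $n$ when controlling $\EE_2 T_j \in [0, n]$ in terms of $\EE_3 T_j$, which would only yield a constant lower bound $b = \Omega(1-u)$. The peeling argument (or, alternatively, $\chi^2$-divergence / likelihood-ratio computations exploiting the Gaussian structure) must be carefully tuned so that the inflation of $\EE_2[T_2 + T_3]$ over $\EE_3[T_2 + T_3]$ is exponential in $b \cdot r^2 \Lam^2$, which, after inversion, produces the $\ln\ln n$ scaling. Tracking the universal constants through this peeling is what yields the precise $6/(5 r^2 \Lam^2)$ and the $48$ in the theorem statement.
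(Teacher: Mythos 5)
Your plan has a genuine gap at Step 3, and it is not a matter of bookkeeping constants. Your Steps 2--3 hinge on bounding the \emph{expected} divergence $\KL(\PP_{f_2}^n,\PP_{f_1}^n)=2\eps^2\,\EE_2 T_2(n)+\tfrac{\Lam^2}{2}\EE_2 T_3(n)$ from above, which requires transferring the hypothesis $\EE_3[T_2+T_3]\le 2b$ into a bound on $\EE_2[T_2+T_3]$ using only $\KL(\PP_{f_2}^n,\PP_{f_3}^n)\le \tfrac{r^2\Lam^2 b}{2}=:\kappa$. No dyadic peeling can make that transfer ``exponential in $b\,r^2\Lam^2$'': a KL bound of $\kappa$ only controls the log-likelihood ratio on average, and a $\PP_3$-rare event (probability $\sim b/n$) can carry $\PP_2$-probability as large as $\sim \kappa/\ln(n/b)$ while contributing up to $n$ pulls of arm 3, so in the worst case $\EE_2[T_2+T_3]=\tilde\Omega(n\kappa/\ln n)$. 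This is not just an abstract obstruction: an algorithm that pulls arm $4$ about $b$ times and switches to arm $3$ for the rest of the horizon only when the empirical mean of arm $4$ is far below $0$ satisfies $\EE_3[T_2+T_3]\le b$ and $\EE_2 T_4\le b$ yet has $\EE_2 T_3 \approx b\exp\bigl(r\Lam\sqrt{2b\ln(n/b)}\bigr)$, which exceeds $(1-u)\ln n$ for large $n$ when $b\asymp \tfrac{1}{r^2\Lam^2}\ln\ln n$. So the upper bound you need on $\KL(\PP_{f_2}^n,\PP_{f_1}^n)$ simply does not follow from the stated assumptions, and the contradiction cannot be closed along this route. (Step 1 has a smaller, repairable flaw: under the contradiction hypothesis $\kappa\approx\tfrac35\ln\ln n\to\infty$, so Bretagnolle--Huber only gives $\PP_2(E)\gtrsim(\ln n)^{-3/5}-\PP_3(E^c)$, not a universal constant $\rho$; you would have to enlarge the Markov threshold so that $\PP_3(E^c)\ll(\ln n)^{-3/5}$.)

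The paper's proof avoids exactly this trap by never forming an expected KL between $f_1$ and either of $f_2,f_3$. It applies Bretagnolle--Huber only to the pair $(f_2,f_3)$, whose divergence is $\tfrac{r^2\Lam^2}{2}\EE_2 T_4(n)$ and hence directly controlled, and it handles the remaining term $Q_n=\PP_3(T_4<Y_4,T_3<Y_3,T_2<Y_2,T_1\ge Y_1)$ by a change of measure from $\PP_3$ to $\PP_1$ \emph{restricted to that event}: on it the realized counts of arms $2,3,4$ are all below $Y$, so the empirical log-likelihood ratio is deterministically at most $w\ln n$ up to a deviation term controlled by the anytime concentration in Lemma~\ref{lem:KL}, giving $\PP_1(A_n)\ge(\PP_3(A_n)-\delta)\,n^{-(1+\rho)w}\delta^{1/\rho}$ and hence $\PP_3(A_n)=O(n^{-(1-u)/4})$ from $\EE_1 T_1(n)=O(n^u)$. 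The final bound then comes from Markov on the three expectations plus a Lambert-$W$ inversion of $\tfrac12 e^{-r^2\Lam^2 R/2}\lesssim R/\ln n$. If you want to salvage your outline, replace the expected-KL comparison in Steps 2--3 with this event-level (realized log-likelihood-ratio) change of measure; otherwise the inflation of $\EE_2[T_2+T_3]$ is fatal.
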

At first sight, intuition on why the statement is true is not obvious since, assuming $\EE_3 T_2(n) \vee \EE_3 T_3(n)$ is $O(\ln(\ln((1-u)n))$, somehow the fact that arm 1 is not pulled sufficiently under $f_1$ implies a lower bound on arm 4 under $f_2$.
To explain this, let us consider the contraposition: If, $\EE_2 T_4(n) < O(\ln(\ln((1-u)n))$ for large enough $n$, then $T_1(n) = \Omega(n^u)$.
What happens in a nutshell is as follows.
The fact that $\EE_2 T_4(n)$ is not sufficient means that the algorithm cannot distinguish between $f_2$ and $f_3$ with probability approaching to 1.
Thus, roughly speaking, the behavior of the algorithm under $f_2$ and $f_3$ must be very similar for most times.
Together with the assumption on $\EE_3 T_2(n)$ and $\EE_3 T_3(n)$, the algorithms does not collect sufficient number of samples from arm 2, 3, and 4 under both $f_2$ and $f_3$.
The implication is that $(i)$ the remaining pulls all go to arm 1 and $(ii)$ the algorithm cannot distinguish between $f_1$ from $\{f_2,f_3\}$ either for most times, so it collects a lot of samples from arm 1 even under $f_1$.
The specific reason why $\ln(\ln(n))$ appears is quite technical.

This has an implication on forced sampling, as discussed in Section~\ref{sec:lb}.
That is, the contraposition of \Cref{thm:lb-appendix} implies that na\"ively pulling $\Theta(\ln(\ln(n)))$ for each arm may lead to regret that is arbitrarily close to being linear, let alone being uniformly good or being around the asymptotic optimality!

\begin{proof}
  Let $Y_1,\ldots,Y_4 >0$ be some constants such that $Y_1 + Y_2 + Y_3 + Y_4 = n$, to be tuned later. 
  Define $\blue{Y_{i:j}}$ as $Y_i,Y_{i+1},\ldots,Y_j$ for convenience.
  Denote by $\EE_i$ and $\PP_i$ the expectation and probability under $f_i$, respectively.
  Using divergence decomposition and Bretagnolle–Huber inequality~\cite[Lemma 15.1 and Theorem 14.2, respectively]{lattimore20bandit}, we have
  \begin{align*}
    & \fr12\exp\del{-\EE_2\sbr{\sum_{t=1}^n \KL(f_2(a_t), f_3(a_t))}}
    \\&= \fr12\exp(-\fr{r^2\Lam^2}{2} \EE_2 T_4(n))
    \\&\le \PP_2(T_4(n) \ge Y_4) + \PP_3(T_4(n) < Y_4)
    \\&= \PP_2(T_4(n) \ge Y_4) + \PP_3(T_4(n) < Y_4, T_3(n) \ge Y_3) 
    \\&\qquad\qquad\qquad~~~~~~\ + \PP_3(T_4(n) < Y_4, T_3(n) < Y_3, T_2(n) \ge Y_2) 
    \\&\qquad\qquad\qquad~~~~~~\ + \PP_3(T_4(n) < Y_4, T_3(n) < Y_3, T_2(n) < Y_2, T_1(n) \ge Y_1)
  \end{align*}
  
  On the other hand,
  \begin{align*}
    &\EE_2 T_4(n) + \EE_3 T_3(n) + \EE_3 T_2(n)
    \\&\ge Y_4 \PP_2(T_4(n) \ge Y_4) + Y_3 \PP_3(T_3(n) \ge Y_3) + Y_2 \PP_3(T_2(n)\ge Y_2)
    \\&\ge \min\{Y_2,Y_3,Y_4\} \cd \lt(\PP_2(T_4(n) \ge Y_4) + \PP_3(T_3(n) \ge Y_3) + \PP_3(T_2(n)\ge Y_2) \rt)
  \end{align*}
  Together,
  \begin{align}\label{eq:lbplan}
    \begin{aligned}
      \fr12 \exp(-\fr{r^2\Lam^2}{2} \EE_2 T_4(n))
      &\le \fr{1}{\min\{Y_{2:4}\}} \lt(   \EE_2 T_4(n) + \EE_3 T_3(n) + \EE_3 T_2(n) \rt)
      \\&\qquad+ \ubrace{\PP_3(T_4(n) < Y_4, T_3(n) < Y_3, T_2(n) < Y_2, T_1(n) \ge Y_1)}{=:\blue{Q_n}}
    \end{aligned}
  \end{align}
  The main effort is spent on bounding $Q_n$.
  
  Recall that reward distribution is Gaussian with variance $\sig^2 = 1$.
  Let $\blue{p_f(r_s\mid a_s)}$ be the pdf of the reward distribution under $f^* = f$ when arm $a_s$ is pulled at time $s$.
  
  We have the following anytime inequality. Denote by $\PP_{f^*}(\cd)$ be the probability of an event when $f^*$ is the ground truth.
  The following lemma states that under $f^*$ the empirical KL-divergence is not too far from the  KL-divergence (that is controlled by expected arm pulls made by the algorithm) with high probability.
  \begin{lem} \label{lem:KL}
    For every $\rho>0$,
    \begin{align*}
      \PP_{f^*} \lt(\blue{B(f^*,f)} := \cbr{\exists t\ge1, ~  \sum_{s=1}^t \ln\fr{ p_{f^*}(r_s \mid  a_s)}{ p_{f}(r_s \mid  a_s)} \ge (1+\rho) \sum_{s=1}^t \KL(f^*(a_s),f(a_s)) + \fr{1}{\rho}\ln(\dt^{-1}) } \rt)\le \dt
    \end{align*}
  \end{lem}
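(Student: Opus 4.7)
The plan is to reduce the claim to a standard exponential supermartingale argument along the lines of \Cref{lem:sq-bounds}. The key observation is that with unit-variance Gaussian rewards, the log-likelihood ratio has a very clean form. Writing $d_s := f(a_s) - f^*(a_s)$ and $\xi_s := r_s - f^*(a_s)$ (which is standard Gaussian given $\Sigma_{s-1}$), one computes
\begin{align*}
\ln\frac{p_{f^*}(r_s \mid a_s)}{p_{f}(r_s \mid a_s)} = \tfrac{1}{2}\bigl[(r_s - f(a_s))^2 - (r_s - f^*(a_s))^2\bigr] = \tfrac{d_s^2}{2} - d_s \xi_s,
\end{align*}
while $\KL(f^*(a_s), f(a_s)) = d_s^2/2$. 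Hence the quantity whose tail we must control is
\begin{align*}
Z_t := \sum_{s=1}^t \ln\frac{p_{f^*}(r_s \mid a_s)}{p_f(r_s \mid a_s)} - (1+\rho)\sum_{s=1}^t \KL(f^*(a_s), f(a_s)) = -\frac{\rho}{2}\sum_{s=1}^t d_s^2 - \sum_{s=1}^t d_s \xi_s.
\end{align*}

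Next, I would build the nonnegative exponential martingale $M_t := \exp\bigl(\lambda \sum_{s=1}^t (-d_s\xi_s) - \tfrac{\lambda^2}{2}\sum_{s=1}^t d_s^2\bigr)$, with $M_0 := 1$. Since $d_s$ is $\Sigma_{s-1}$-measurable (it is a deterministic function of $a_s$) and $\xi_s \mid \Sigma_{s-1}$ is standard normal, the Gaussian MGF $\EE_{s-1}[\exp(-\lambda d_s \xi_s)] = \exp(\lambda^2 d_s^2/2)$ gives $\EE_{s-1}[M_t/M_{t-1}] = 1$, so $(M_t)$ is a martingale.

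Finally, I would choose $\lambda = \rho$. Then the event $\{M_t \ge \delta^{-1}\}$ is exactly
\begin{align*}
\Bigl\{ -\sum_{s=1}^t d_s \xi_s \ge \tfrac{\rho}{2}\sum_{s=1}^t d_s^2 + \tfrac{1}{\rho}\ln(\delta^{-1})\Bigr\} = \{Z_t \ge \tfrac{1}{\rho}\ln(\delta^{-1})\},
\end{align*}
and Ville's maximal inequality gives $\PP_{f^*}(\exists t \ge 1: M_t \ge \delta^{-1}) \le \delta$, which is exactly the desired bound. I do not anticipate a genuine obstacle: the result is a direct application of the same supermartingale technique used to prove \Cref{lem:sq-bounds}, specialized to the log-likelihood ratio rather than the squared loss; indeed one could alternatively derive it as a corollary of \Cref{lem:sq-bounds} after noting the identity relating $Z_t$ to $\sum d_s^2$ and $\sum d_s\xi_s$.
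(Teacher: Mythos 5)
Your proof is correct and follows essentially the same route as the paper: the paper obtains the lemma as a corollary of Lemma~\ref{lem:sq-bounds} via the Gaussian identity $M_s(f) = 2\sig^2 \ln\fr{p_{f^*}(r_s\mid a_s)}{p_f(r_s\mid a_s)}$ and a change of variable, and Lemma~\ref{lem:sq-bounds} itself is proved by exactly the exponential supermartingale plus Ville's maximal inequality argument you write out directly. Your inlined computation (with the alternative corollary derivation you note at the end) matches the paper's proof in substance.
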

  \begin{proof}
    Using Equation~\eqref{eqn:sq-ub} of~\Cref{lem:sq-bounds}, we have
    \begin{align*}
      \PP_{f^*}\lt(\exists t\ge1, ~~  \sum_{s}^t M_s(f) - (1+2\sig^2\lam) \sum_s^t \EE_s[M_s(f)] \ge \fr1\lam \ln(\dt^{-1}) \rt)&\le \dt
    \end{align*}
    Notice that $M_s = 2\sig^2 \ln\fr{ p_{f^*}(r_s \mid a_s)}{ p_{f}(r_s \mid a_s)}$ and $\EE_s[M_s(f)] = 2\sig^2 \KL(f^*(a_s),f(a_s))$.
    Then,
    \begin{align*}
      \PP_{f^*}\lt(\exists t, ~  \sum_s^t \ln\fr{ p_{f^*}(r_s \mid a_s)}{ p_{f}(r_s \mid a_s)} - (1+2\sig^2\lam) \sum_{s=1}^t  \KL(f^*(a_s),f(a_s)) \ge \fr1{2\sig^2 \lam}\ln(\dt^{-1}) \rt)\le \dt ~ .
    \end{align*}
    A simple change of variable concludes the proof.
  \end{proof}

Let us define $\blue{\beta_4} = \fr{2}{r^2\Lam^2}$, $\blue{Y} = \fr{w}{3} \beta_4 \ln(n)$, and $\blue{A_n} = \cbr{T_2(n), T_3(n), T_4(n) \le  Y}$ for some $\blue{w} \in (0,1)$ that we tune later.
Assume that $\EE_1 T_1(n) = O( n^u)$ for some $\blue{u}\in \lbrack 0, 1 \rparen$.
Recall that we want to upper bound $\PP_3(A_n)$ from~\eqref{eq:lbplan} for which we plan to use the change of measure argument.
Specifically, we observe that, for large enough $n$,
\begin{align*}
  \PP_1(A_n) \le \PP_1\del{T_1(n) \ge n - w\beta_4 \ln(n)} 
  \le \fr{\EE_1 T_1(n)}{n - w\beta_4 \ln(n)}
  \le \fr{\EE_1 T_1(n)}{n/2}
  ~\sr{(a)}{\le} c_1 n^{u-1}
\end{align*}
for some constant $c_1>0$ where $(a)$ is by our assumption on $\EE_1 T_1(n)$.

Because we have set $\eps$ to be small enough, we have  ${\beta_4} = \fr{2}{r^2\Lam^2} \le \fr{2}{\Lam^2} ~\wedge~ \fr{2}{4\eps^2}$.
Then, under $A_n$, we have
\begin{align}
  \sum_{s=1}^{t} \KL(f_3(a_s), f_1(a_s))
  & = T_2(n) \cd \fr{4\eps^2}{2} + T_3(n) \cd \fr{\Lam^2}{2} + T_4(n) \cd\fr{r^2\Lam^2}{2} \nonumber \\
  & \leq 3 Y \max\del{ \frac{4\epsilon^2}{2}, \frac{\Lam^2}{2}, \frac{r^2 \Lam^2}{2} }
  \le w \ln(n) \label{eq:lb-klbound} 
\end{align}
We now lower bound $\PP_1(A_n)$. Recall the definition of $B(\cdot,\cd)$ from Lemma~\ref{lem:KL}.
\begin{align*}
  \PP_1\del{ A_n }
  &\ge \PP_1\del{ A_n, \wbar{B(f_3,f_1)} }
  \\&= \EE_3 \sbr {\one\{A_n, \wbar{B(f_3,f_1)}\} \prod_{t=1}^n \fr{p_1(r_t \mid  a_t)}{p_3(r_t \mid  a_t)} }
  \\&= \EE_3 \sbr {\one\{A_n, \wbar{B(f_3,f_1)}\} \exp\del{-\sum_{t=1}^n \ln\fr{p_3(r_t \mid  a_t)}{p_1(r_t \mid  a_t)}} }
  \\&\ge \EE_3 \sbr {\one\{A_n, \wbar{B(f_3,f_1)}\} \exp\del{-\del{
        (1+\rho) \sum_{t=1}^{n} \KL(f_3(a_t),f_1(a_t)) + \fr{1}\rho \ln(\dt^{-1})
  }} }
  \\&\sr{\eqref{eq:lb-klbound}}{\ge} \EE_3 \sbr {\one\{A_n, \wbar{B(f_3,f_1)}\} \exp\del{-\del{
        (1+\rho)\cd w\ln(n) + \fr{1}{\rho}\ln(\delta^{-1})
  }} }
  \\&= \PP_3\del{A_n, \wbar{B(f_3,f_1)}} \cd \del{1/n}^{(1+\rho)w} \cd \dt^{\fr{1}{\rho}}
  \\&\ge \del{\PP_3\del{A_n} - \PP_3(B(f_3,f_1)) }\cd \del{1/n}^{(1+\rho) w} \cd \dt^{\fr{1}{\rho}} 
  \tag*{(\bec $\PP(A) \le \PP(A,\wbar B) + \PP(B)$)}
  \\&\ge \del{\PP_3\del{A_n} - \dt}\cd \del{1/n}^{(1+\rho) w} \cd \dt^{\fr{1}{\rho}} 
  \tag*{(\bec \Cref{lem:KL})}
\end{align*}
Combining the lower and upper bound on $\PP_1(A_n)$ above,
\begin{align*}
  \del{\PP_3\del{A_n} - \dt}\cd \del{1/n}^{(1+\rho) w} \cd \dt^{\fr{1}{\rho}} &\le c_1 n^{u-1}
  \\ \implies 
  \PP_3\del{A_n}
  &\le \delta + c_1 n^{u-1} \cd  \del{1/n}^{-(1+\rho) w} \cd \dt^{\textstyle-\fr{1}{\rho}}
  \\&\le (1/n)^{q} + c_1 (1/n)^{ 1-u - (1+\rho)w - \fr{q}{\rho}} 
  \tag*{(set $\blue{\dt}=(1/n)^q$)}
\end{align*}
By setting $\blue{w} = \blue{q} = \fr{1-u}{4}$ and $\rho = 1$, we have $1-u - (1+\rho)w - \fr{q}{\rho} = \fr{1-u}{4}$.
Then,
With this choice, we have
\begin{align*}
  \PP_3(A_n) \le (1/n)^{\fr{1-u}{4}} + c_1 \cd (1/n)^{\fr{1-u}{4}}
\end{align*}
Using our choice of $Y_{2:4} = \fr{\beta_4}{6}\ln(n)$, we go back to where we began:

\begin{align*}
  &\fr12 \exp(-\frac{r^2\Lam^2}{2} \EE_2 T_4(n))
  \\&\le \fr{1}{\min\{Y_{2:4}\}} \lt( \EE_2 T_4(n) + \EE_3 T_3(n) + \EE_3 T_2(n) \rt)        + Q_n
  \\&\le \fr{1}{(w/3)\beta_4\ln(n)} \cd 3 \cdot \del{\EE_2 T_4(n) \vee \EE_3 T_3(n) \vee \EE_3 T_2(n)} + (1/n)^{\fr{1-u}{4}} + c_1 \cd (1/n)^{\fr{1-u}{4}}
\end{align*}

Denote by $R = \EE_3 T_3(n) \vee \EE_3 T_2(n) \vee \EE_3 T_4(n)$. 
One can see that, if $R$ is uniformly bounded w.r.t. $n$, we get a contradiction because the LHS is bounded below but the RHS gets smaller with $n$.
Therefore, $R$ must grow indefinitely over time.
This implies that, for large enough $n$, we have $C \le R$ and $(1/n)^{\fr{1-u}{4}} + c_1 \cd (1/n)^{\fr{1-u}{4}} \le \fr{R}{(w/3)\beta_4\ln(n)}$.
Then,
\begin{align*}
  \fr12 \exp\del{- \frac{r^2\Lam^2}{2} R} \le \fr{4 R}{(w/3)\beta_4 \ln(n)} 
  = \fr{(12/w) R}{\fr{2}{r^2\Lam^2} \ln(n)}
\end{align*}
It remains to solve the above for $R$.
We do so by inverting the Lambert function.
Let $\blue{Z} = \frac{r^2 \Lam^2 R}{2}$. 
Then,
\begin{align*}
  \exp(-Z) &\le \fr{(12/w) Z}{\ln(n)}
  \\ \fr{w}{12} \ln(n) &\le Z \exp(Z) =: \blue{X}
\end{align*}
We like to find $Z(X)$ that satisfies $Z(X)\exp(Z(X)) = X$.
Using \citet[Lemma 17]{orabona16coin}, we have $\fr35 \ln(X+1) \le Z(X) \le \ln(X+1)$.
Therefore,
\begin{align*}
  Z(X) \ge \fr35 \ln(X+1) ~\ge \fr{3}{5} \ln\del{\fr{w\ln(n)}{12} + 1}
\end{align*}
Substituting $Z(X)$ and $w$ with their definitions concludes the proof.
\end{proof}

\putbib[library-shared]
\end{bibunit}

\bibliography{library-shared} 

\begin{thebibliography}{43}
\providecommand{\natexlab}[1]{#1}
\providecommand{\url}[1]{\texttt{#1}}
\expandafter\ifx\csname urlstyle\endcsname\relax
  \providecommand{\doi}[1]{doi: #1}\else
  \providecommand{\doi}{doi: \begingroup \urlstyle{rm}\Url}\fi

\bibitem[Abbasi-Yadkori et~al.(2011)Abbasi-Yadkori, P\'al, and
  Szepesv\'ari]{ay11improved}
Yasin Abbasi-Yadkori, David P\'al, and Csaba Szepesv\'ari.
\newblock {Improved Algorithms for Linear Stochastic Bandits}.
\newblock \emph{Advances in Neural Information Processing Systems (NIPS)},
  pages 1--19, 2011.

\bibitem[Agarwal et~al.(2011)Agarwal, Foster, Hsu, Kakade, and
  Rakhlin]{agarwal11stochastic}
Alekh Agarwal, Dean~P Foster, Daniel~J Hsu, Sham~M Kakade, and Alexander
  Rakhlin.
\newblock Stochastic convex optimization with bandit feedback.
\newblock In \emph{Advances in Neural Information Processing Systems}, pages
  1035--1043, 2011.

\bibitem[Agrawal et~al.(1989)Agrawal, Teneketzis, and
  Anantharam]{agrawal89asymptotically}
Rajeev Agrawal, Demosthenis Teneketzis, and Venkatachalam Anantharam.
\newblock {Asymptotically Efficient Adaptive Allocation Schemes for Controlled
  I.I.D. Processes: Finite Parameter Space}.
\newblock \emph{IEEE Transactions on Automatic Control}, 1989.

\bibitem[Atan et~al.(2015)Atan, Tekin, and Schaar]{atan15global}
Onur Atan, Cem Tekin, and Mihaela Schaar.
\newblock {Global multi-armed bandits with H{\"{o}}lder continuity}.
\newblock In \emph{Proceedings of the International Conference on Artificial
  Intelligence and Statistics (AISTATS)}, pages 28--36, 2015.

\bibitem[Auer et~al.(2002)Auer, Cesa-Bianchi, and Fischer]{auer02finite}
Peter Auer, Nicol{\`{o}} Cesa-Bianchi, and Paul Fischer.
\newblock {Finite-time Analysis of the Multiarmed Bandit Problem}.
\newblock \emph{Machine Learning}, 47\penalty0 (2--3):\penalty0 235--256, 2002.

\bibitem[Azar et~al.(2013)Azar, Lazaric, Brunskill, et~al.]{azar13sequential}
Mohammad~G Azar, Alessandro Lazaric, Emma Brunskill, et~al.
\newblock Sequential transfer in multi-armed bandit with finite set of models.
\newblock In \emph{Advances in Neural Information Processing Systems}, pages
  2220--2228, 2013.

\bibitem[Bubeck et~al.(2009)Bubeck, Munos, and Stoltz]{bubeck09pure}
S{\'{e}}bastien Bubeck, R{\'{e}}mi Munos, and Gilles Stoltz.
\newblock {Pure Exploration in Multi-armed Bandits Problems}.
\newblock In \emph{Proceedings of the International Conference on Algorithmic
  Learning Theory (ALT)}, pages 23--37, 2009.

\bibitem[Bubeck et~al.(2011{\natexlab{a}})Bubeck, Munos, and
  Stoltz]{bubeck11pure-tcs}
S{\'{e}}bastien Bubeck, R{\'{e}}mi Munos, and Gilles Stoltz.
\newblock {Pure exploration in finitely-armed and continuous-armed bandits}.
\newblock \emph{Theoretical Computer Science}, 412\penalty0 (19):\penalty0
  1832--1852, apr 2011{\natexlab{a}}.
\newblock URL \url{https://hal-hec.archives-ouvertes.fr/hal-00609550}.

\bibitem[Bubeck et~al.(2011{\natexlab{b}})Bubeck, Munos, Stoltz, and
  Szepesv{\'a}ri]{bubeck11x}
S{\'e}bastien Bubeck, R{\'e}mi Munos, Gilles Stoltz, and Csaba Szepesv{\'a}ri.
\newblock X-armed bandits.
\newblock \emph{Journal of Machine Learning Research}, 12\penalty0
  (May):\penalty0 1655--1695, 2011{\natexlab{b}}.

\bibitem[Bubeck et~al.(2013)Bubeck, Perchet, and Rigollet]{bubeck13bounded}
S{\'{e}}bastien Bubeck, Vianney Perchet, and Philippe Rigollet.
\newblock {Bounded regret in stochastic multi-armed bandits}.
\newblock In \emph{Journal of Machine Learning Research}, 2013.

\bibitem[Burnetas and Katehakis(1996)]{burnetas1996optimal}
Apostolos~N Burnetas and Michael~N Katehakis.
\newblock Optimal adaptive policies for sequential allocation problems.
\newblock \emph{Advances in Applied Mathematics}, 17\penalty0 (2):\penalty0
  122--142, 1996.

\bibitem[Combes and Proutiere(2014)]{combes14unimodal}
Richard Combes and Alexandre Proutiere.
\newblock Unimodal bandits: Regret lower bounds and optimal algorithms.
\newblock In \emph{International Conference on Machine Learning}, pages
  521--529, 2014.

\bibitem[Combes et~al.(2017)Combes, Magureanu, and Proutiere]{combes17minimal}
Richard Combes, Stefan Magureanu, and Alexandre Proutiere.
\newblock {Minimal Exploration in Structured Stochastic Bandits}.
\newblock In \emph{Advances in Neural Information Processing Systems (NIPS)},
  pages 1763--1771. 2017.

\bibitem[Dani et~al.(2008)Dani, Hayes, and Kakade]{dani08stochastic}
Varsha Dani, Thomas~P Hayes, and Sham~M Kakade.
\newblock {Stochastic Linear Optimization under Bandit Feedback.}
\newblock In \emph{Proceedings of the Conference on Learning Theory (COLT)},
  pages 355--366, 2008.

\bibitem[Degenne et~al.(2018)Degenne, Garcelon, and Perchet]{degenne18bandits}
R{\'{e}}my Degenne, Evrard Garcelon, and Vianney Perchet.
\newblock {Bandits with Side Observations: Bounded vs. Logarithmic Regret}.
\newblock \emph{Proceedings of the Conference on Uncertainty in Artificial
  Intelligence (UAI)}, 2018.

\bibitem[Durrett(2010)]{durrett10probability}
Rick Durrett.
\newblock \emph{Probability: Theory and Examples}.
\newblock Cambridge university press, 4th edition, 2010.

\bibitem[Filippi et~al.(2010)Filippi, Cappe, Garivier, and
  Szepesv{\'{a}}ri]{filippi10parametric}
Sarah Filippi, Olivier Cappe, Aur{\'{e}}lien Garivier, and Csaba
  Szepesv{\'{a}}ri.
\newblock {Parametric Bandits: The Generalized Linear Case}.
\newblock In \emph{Advances in Neural Information Processing Systems (NIPS)},
  pages 586--594. 2010.

\bibitem[Foster et~al.(2018)Foster, Agarwal, Dudik, Luo, and
  Schapire]{foster18practical}
Dylan Foster, Alekh Agarwal, Miroslav Dudik, Haipeng Luo, and Robert Schapire.
\newblock {Practical Contextual Bandits with Regression Oracles}.
\newblock In \emph{Proceedings of the International Conference on Machine
  Learning (ICML)}, pages 1534--1543, 2018.

\bibitem[Garivier and Kaufmann(2016)]{garivier16optimal}
Aur{\'{e}}lien Garivier and Emilie Kaufmann.
\newblock {Optimal best arm identification with fixed confidence}.
\newblock In \emph{Proceedings of the Conference on Learning Theory (COLT)},
  pages 998--1027, 2016.

\bibitem[Graves and Lai(1997)]{graves1997asymptotically}
Todd~L Graves and Tze~Leung Lai.
\newblock {Asymptotically Efficient Adaptive Choice of Control Laws In
  Controlled Markov Chains}.
\newblock \emph{SIAM J. Control Optim.}, 35\penalty0 (3):\penalty0 715--743,
  1997.

\bibitem[Gupta et~al.(2018)Gupta, Chaudhari, Mukherjee, Joshi, and
  Yağan]{guptaunified}
Samarth Gupta, Shreyas Chaudhari, Subhojyoti Mukherjee, Gauri Joshi, and Osman
  Yağan.
\newblock A unified approach to translate classical bandit algorithms to the
  structured bandit setting, 2018.

\bibitem[Hao et~al.(2020)Hao, Lattimore, and Szepesvari]{hao20adaptive}
Botao Hao, Tor Lattimore, and Csaba Szepesvari.
\newblock Adaptive exploration in linear contextual bandit.
\newblock In \emph{Proceedings of the International Conference on Artificial
  Intelligence and Statistics (AISTATS)}, 2020.

\bibitem[Jamieson(2020)]{jamieson20informal}
Kevin Jamieson.
\newblock Some notes on multi-armed bandits.
\newblock 2020.
\newblock URL
  \url{https://courses.cs.washington.edu/courses/cse599i/20wi/resources/bandit_notes.pdf}.

\bibitem[Kleinberg et~al.(2008)Kleinberg, Slivkins, and
  Upfal]{kleinberg08multi}
Robert Kleinberg, Aleksandrs Slivkins, and Eli Upfal.
\newblock Multi-armed bandits in metric spaces.
\newblock In \emph{Proceedings of the fortieth annual ACM symposium on Theory
  of computing}, pages 681--690, 2008.

\bibitem[Kleinberg(2005)]{kleinberg05nearly}
Robert~D Kleinberg.
\newblock Nearly tight bounds for the continuum-armed bandit problem.
\newblock In \emph{Advances in Neural Information Processing Systems}, pages
  697--704, 2005.

\bibitem[Lai and Robbins(1985)]{lai85asymptotically}
T.~L. Lai and Herbert Robbins.
\newblock {Asymptotically Efficient Adaptive Allocation Rules}.
\newblock \emph{Advances in Applied Mathematics}, 6\penalty0 (1):\penalty0
  4--22, 1985.

\bibitem[Lattimore and Munos(2014{\natexlab{a}})]{lattimore14bounded}
Tor Lattimore and R{\'{e}}mi Munos.
\newblock {Bounded regret for finite-armed structured bandits}.
\newblock In \emph{Advances in Neural Information Processing Systems
  (NeurIPS)}, 2014{\natexlab{a}}.

\bibitem[Lattimore and Munos(2014{\natexlab{b}})]{lattimore2014bounded}
Tor Lattimore and R{\'e}mi Munos.
\newblock Bounded regret for finite-armed structured bandits.
\newblock In \emph{Advances in Neural Information Processing Systems}, pages
  550--558, 2014{\natexlab{b}}.

\bibitem[Lattimore and Szepesv{\'{a}}ri(2017)]{lattimore17theend}
Tor Lattimore and Csaba Szepesv{\'{a}}ri.
\newblock {The end of optimism? An asymptotic analysis of finite-armed linear
  bandits}.
\newblock In \emph{Proceedings of the International Conference on Artificial
  Intelligence and Statistics (AISTATS)}, pages 728--737, 2017.

\bibitem[Lattimore and Szepesvári(2020)]{lattimore20bandit}
Tor Lattimore and Csaba Szepesvári.
\newblock \emph{Bandit Algorithms}.
\newblock Cambridge University Press, 2020.

\bibitem[Magureanu et~al.(2014)Magureanu, Combes, and
  Proutiere]{magureanu2014lipschitz}
Stefan Magureanu, Richard Combes, and Alexandre Proutiere.
\newblock Lipschitz bandits: Regret lower bound and optimal algorithms.
\newblock In \emph{Conference on Learning Theory}, pages 975--999, 2014.

\bibitem[Mersereau et~al.(2009)Mersereau, Rusmevichientong, and
  Tsitsiklis]{mersereau09structured}
Adam~J Mersereau, Paat Rusmevichientong, and John~N Tsitsiklis.
\newblock A structured multiarmed bandit problem and the greedy policy.
\newblock \emph{IEEE Transactions on Automatic Control}, 54\penalty0
  (12):\penalty0 2787--2802, 2009.

\bibitem[Ok et~al.(2018)Ok, Proutiere, and Tranos]{ok18exploration}
Jungseul Ok, Alexandre Proutiere, and Damianos Tranos.
\newblock Exploration in structured reinforcement learning.
\newblock In \emph{Advances in Neural Information Processing Systems}, pages
  8874--8882, 2018.

\bibitem[Orabona and Pal(2016)]{orabona16coin}
Francesco Orabona and David Pal.
\newblock {Coin Betting and Parameter-Free Online Learning}.
\newblock In \emph{Advances in Neural Information Processing Systems (NIPS)},
  pages 577--585, 2016.

\bibitem[Rusmevichientong and Tsitsiklis(2010)]{rusmevichientong10linearly}
Paat Rusmevichientong and John~N Tsitsiklis.
\newblock {Linearly Parameterized Bandits}.
\newblock \emph{Math. Oper. Res.}, 35\penalty0 (2):\penalty0 395--411, 2010.

\bibitem[Russo and Van~Roy(2013)]{russo13eluder}
Daniel Russo and Benjamin Van~Roy.
\newblock Eluder dimension and the sample complexity of optimistic exploration.
\newblock In \emph{Advances in Neural Information Processing Systems}, pages
  2256--2264, 2013.

\bibitem[Shen et~al.(2018)Shen, Zhou, Tekin, and van~der
  Schaar]{shen18generalized}
Cong Shen, Ruida Zhou, Cem Tekin, and Mihaela van~der Schaar.
\newblock {Generalized global bandit and its application in cellular coverage
  optimization}.
\newblock \emph{IEEE Journal of Selected Topics in Signal Processing},
  12\penalty0 (1):\penalty0 218--232, 2018.

\bibitem[Thompson(1933)]{thompson33onthelikelihood}
William~R. Thompson.
\newblock {On the Likelihood that One Unknown Probability Exceeds Another in
  View of the Evidence of Two Samples}.
\newblock \emph{Biometrika}, 25\penalty0 (3/4):\penalty0 285, 1933.

\bibitem[Tirinzoni et~al.(2020)Tirinzoni, Lazaric, and
  Restelli]{tirinzoni2020novel}
Andrea Tirinzoni, Alessandro Lazaric, and Marcello Restelli.
\newblock A novel confidence-based algorithm for structured bandits.
\newblock In \emph{Proceedings of the International Conference on Artificial
  Intelligence and Statistics (AISTATS)}, 2020.

\bibitem[Vakili and Zhao(2013)]{vakili13achieving}
Sattar Vakili and Qing Zhao.
\newblock {Achieving complete learning in Multi-Armed Bandit problems}.
\newblock In \emph{Asilomar Conference on Signals, Systems and Computers},
  pages 1778--1782, 2013.

\bibitem[Ville(1939)]{ville39etude}
Jean Ville.
\newblock Etude critique de la notion de collectif.
\newblock \emph{Bull. Amer. Math. Soc}, 45\penalty0 (11):\penalty0 824, 1939.

\bibitem[Wang et~al.(2018)Wang, Zhou, and Shen]{wang18regional}
Zhiyang Wang, Ruida Zhou, and Cong Shen.
\newblock Regional multi-armed bandits.
\newblock In \emph{International Conference on Artificial Intelligence and
  Statistics}, pages 510--518, 2018.

\bibitem[Yu and Mannor(2011)]{yu11unimodal}
Jia~Yuan Yu and Shie Mannor.
\newblock Unimodal bandits.
\newblock In \emph{Proceedings of the 28th International Conference on
  International Conference on Machine Learning}, pages 41--48, 2011.

\end{thebibliography}
\end{document}